\newtheorem{theorem}{Theorem}
\def\1{\bm{1}}
\def\vtheta{{\bm{\theta}}}
\def\vc{{\bm{c}}}
\def\vz{{\bm{z}}}
\DeclareMathAlphabet{\mathsfit}{\encodingdefault}{\sfdefault}{m}{sl}
\SetMathAlphabet{\mathsfit}{bold}{\encodingdefault}{\sfdefault}{bx}{n}
\newtcolorbox{mblock}[1]
{
  colframe     = gray,
  coltitle     = black,
  colbacktitle = lightgray!50!white,
  colback      = white,
  title        = #1,
  fonttitle    = \bfseries,
  arc          = 0mm,
  left         = 2pt,
  right        = 2pt,
}
\newtcolorbox{rblock}[1]
{
  colframe     = green,
  coltitle     = gray,
  colbacktitle = lightgray!50!white,
  colback      = white,
  title        = \ifthenelse{\isempty{#1}}
                  {Remark}
                  {Remark: #1},
  fonttitle    = \bfseries,
  arc          = 0mm,
  left         = 2pt,
  right        = 2pt,
}
\title{Phased Consistency Models}
\author{
  Fu-Yun Wang\textsuperscript{1}\hspace{-3mm} \quad Zhaoyang Huang\textsuperscript{2}\hspace{-3mm} \quad Alexander William Bergman\textsuperscript{3,6}\hspace{-2mm}\quad Dazhong Shen\textsuperscript{4}\hspace{-3mm}\\[1ex]
  \textbf{Peng Gao\textsuperscript{4}\hspace{-2mm} \quad Michael Lingelbach\textsuperscript{3,6}\hspace{-2mm} \quad Keqiang Sun\textsuperscript{1}\hspace{-2mm} \quad Weikang Bian\textsuperscript{1}\hspace{-2mm}}\\[1ex]
  \textbf{Guanglu Song\textsuperscript{5}\hspace{-2mm} \quad Yu Liu\textsuperscript{4}\hspace{-2mm} \quad  Xiaogang Wang\textsuperscript{1}\hspace{-2mm} \quad Hongsheng Li\textsuperscript{1,4,7}}\\[2ex]
  \textsuperscript{1}MMLab, CUHK\quad \textsuperscript{2}Avolution AI\quad
  \textsuperscript{3}Hedra \\[1ex] \textsuperscript{4}Shanghai AI Lab\quad \textsuperscript{5}Sensetime Research\quad \textsuperscript{6}Stanford University\quad
  \textsuperscript{7}CPII under InnoHK\\[2ex]
  \texttt{\{fywang@link, xgwang@ee, hsli@ee\}.cuhk.edu.hk}
}
\begin{document}

\maketitle

\begin{abstract}
Consistency Models~(CMs) have made significant progress in accelerating the generation of diffusion models. However, their application to high-resolution, text-conditioned image generation in the latent space remains unsatisfactory. In this paper, we identify three key flaws in the current design of Latent Consistency Models~(LCMs). We investigate the reasons behind these limitations and propose \textbf{P}hased \textbf{C}onsistency \textbf{M}odels~(PCMs), which generalize the design space and address the identified limitations. Our evaluations demonstrate that PCMs outperform LCMs across 1--16 step generation settings. While PCMs are specifically designed for multi-step refinement, they achieve comparable 1-step generation results to previously state-of-the-art specifically designed 1-step methods. Furthermore, we show the methodology of PCMs is versatile and applicable to video generation, enabling us to train the state-of-the-art few-step text-to-video generator. Our code is available at \url{https://github.com/G-U-N/Phased-Consistency-Model}. 
\end{abstract}

\begin{center}
    \centering
    \makebox[\textwidth]{\includegraphics[width=0.98\textwidth]{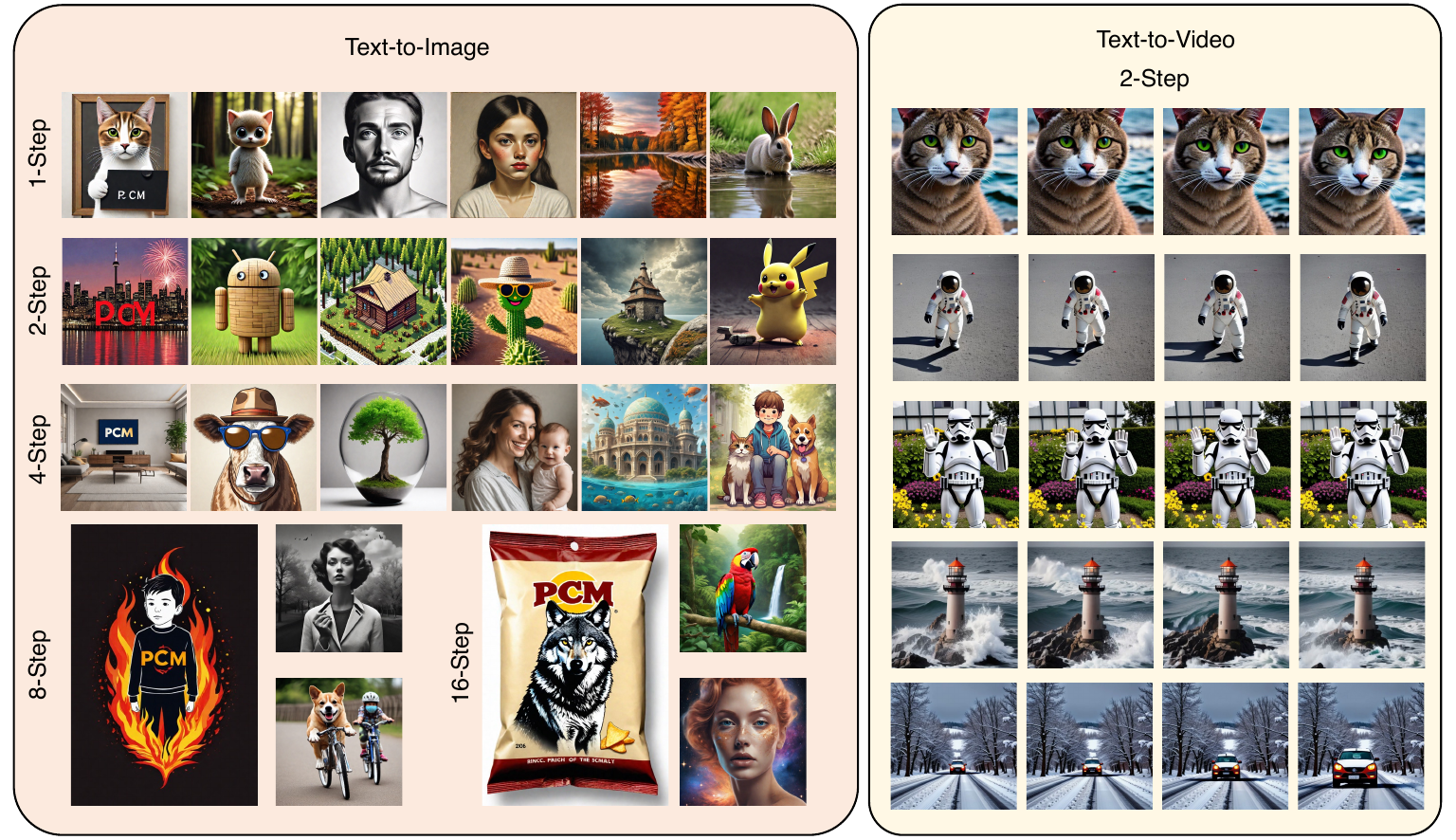}}
    \vspace{-0.5cm}
    \captionof{figure}{
    \textit{\textbf{PCMs}: Towards stable and fast image and video generation.} 
    }
    \label{fig:teaser}
\end{center}
\section{Introduction}
\begin{figure}[t]
    \centering
      \makebox[\textwidth]{\includegraphics[ width=0.98\textwidth]{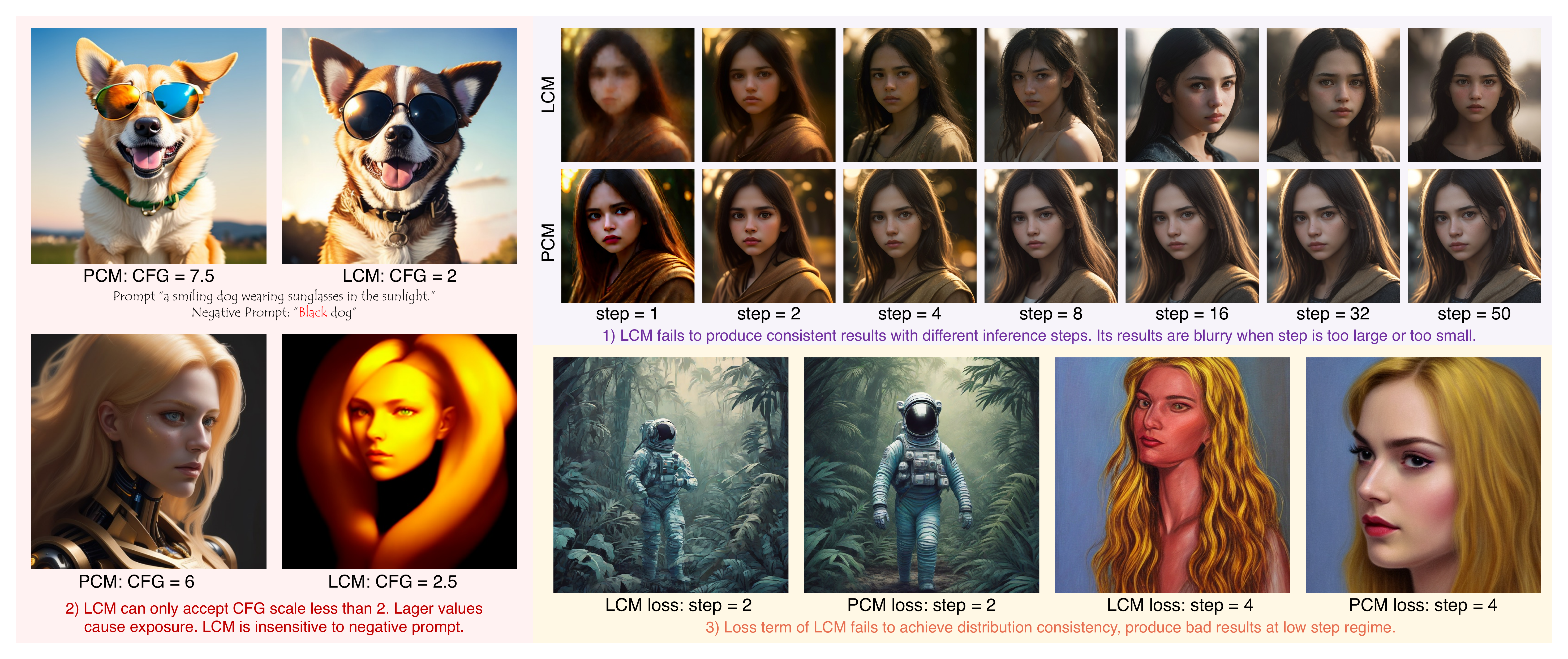}}
    \captionof{figure}{
    \textbf{Summative motivation.} We observe and summarize three crucial limitations for (latent) consistency models, and generalize the design space, well tackling all these limitations.}
    \label{fig:flaw}
    \vspace{-4mm}
\end{figure}
Diffusion models~\cite{edm,ddpm,sde,stabletarget} have emerged as the dominant methodology for image synthesis~\cite{sd,sdxl,diffusionbeatgan} and video synthesis~\cite{vdm,makeavideo,shi2024motion,wang2024animatelcm}. These models have shown the ability to generate high-quality and diverse samples conditioned on varying signals. At their core, diffusion models rely on an iterative evaluation to generate new samples. This iterative evaluation trajectory models the probability flow ODE~(PF-ODE)~\cite{cm,sde} that transforms an initial normal distribution to a target real data distribution. However, the iterative nature of diffusion models makes the generation of new samples time-intensive and resource-consuming. 
To address this challenge, consistency models~\cite{cm,geng2024consistency,wang2024animatelcm,ict} have emerged to reduce the number of iterative steps required to generate samples. These models work by training a model that enforces the self-consistency~\cite{cm} property: \textit{any point along the same PF-ODE trajectory shall be mapped to the same solution point.} These models have been extended to high-resolution text-to-image synthesis with latent consistency models (LCMs)~\cite{lcm}. Despite the improvements in efficiency and the ability to generate samples in a few steps, the sample quality of such models is still limited.

We show that the current design of LCMs is flawed, causing inevitable drawbacks in controllability, consistency, and efficiency during image sampling. Fig.~\ref{fig:flaw} illustrates our observations of LCMs. The limitations are listed as follows:
\begin{enumerate}[label=(\arabic*), nosep, leftmargin=*]
    \item \textbf{Consistency.} Due to the specific consistency property, CMs can only use the purely stochastic multi-step sampling algorithm, which assumes that the accumulated noise variable in each generative step is independent and causes varying degrees of stochasticity for different inference-step settings.
    As a result, we can find inconsistency among the samples generated with the same seeds in different inference steps. 
    \item \textbf{Controllability.} Even though diffusion models can adopt classifier-free guidance~(CFG)~\cite{cfg} in a wide range of values~(i.e. 2--15), equipped with weights of LCMs, they can only accept values of CFG within range of 1--2. Larger values of CFG would cause the exposure problem. This brings difficulty for the hyper-parameter selection. Additionally, we find that LCMs are insensitive to the negative prompt. As shown in the figure, LCMs still generate black dogs even when the negative prompt is set to ``black dog". Both phenomenon reduce the controllability on generation.
    We show the reason behind this is the CFG-augmented ODE solver adopted in the consistency distillation stage. 
    \item \textbf{Efficiency.} LCMs tend  to generate much inferior samples at the few-step settings, especially in less than 4 inference steps, which limits the sampling efficiency.  
    We argue that the reason lies in the traditional L2 loss or the Huber loss used in the LCMs procedure, which is insufficient for the fine-grained supervision in few-step settings. 
\end{enumerate}

To this end, we propose Phased Consistency Models~(PCMs), which can tackle the discussed limitations of LCMs and are easy to train. Specifically, instead of mapping all points along the ODE trajectory to the same solution, PCMs phase the ODE trajectory into several sub-trajectories and only enforce the self-consistency property on each sub-trajectory. Therefore, PCMs can sample samples along the solution points of different sub-trajectories in a deterministic manner without error accumulation.  As an example shown in Fig.~\ref{fig:core}, we train PCMs with two sub-trajectories, with three edge points including $\mathbf x_T$, $\mathbf x_0$, and $\mathbf x_{\lfloor T/2\rfloor}$ selected. Thereby, we can achieve 2-step deterministic sampling~(i.e., $\mathbf x_{T} \rightarrow \mathbf x_{\lfloor T /2 \rfloor} \rightarrow \mathbf x_0$ ) for generation. 
Moreover, the phased nature of PCMs leads to an additional advantage. For distillation, we can choose to use a normal ODE solver without the CFG alternatively in the consistency distillation stage, which is not viable in LCMs. As a result, PCMs can optionally use larger values of CFG for inference and be more responsive to the negative prompt.
Additionally, to improve the sample quality for few-step settings, we  propose an adversarial loss in the latent space for more fine-grained supervision.

To conclude,  we dive into the design of (latent) consistency models, analyze the reasons for their unsatisfactory generation characteristics, and propose effective strategies for tackling these limitations.  We validate the effectiveness of  PCMs on widely recognized image generation benchmarks with stable diffusion v1-5~(0.9 B)~\cite{sd} and stable diffusion XL~(3B)~\cite{sdxl} and video generation benchmarks following AnimateLCM~\cite{wang2024animatelcm}. Vast experimental results show the effectiveness of PCMs.

\section{Preliminaries}
\begin{figure}[t]
    \centering
    \includegraphics[width=0.93\textwidth]{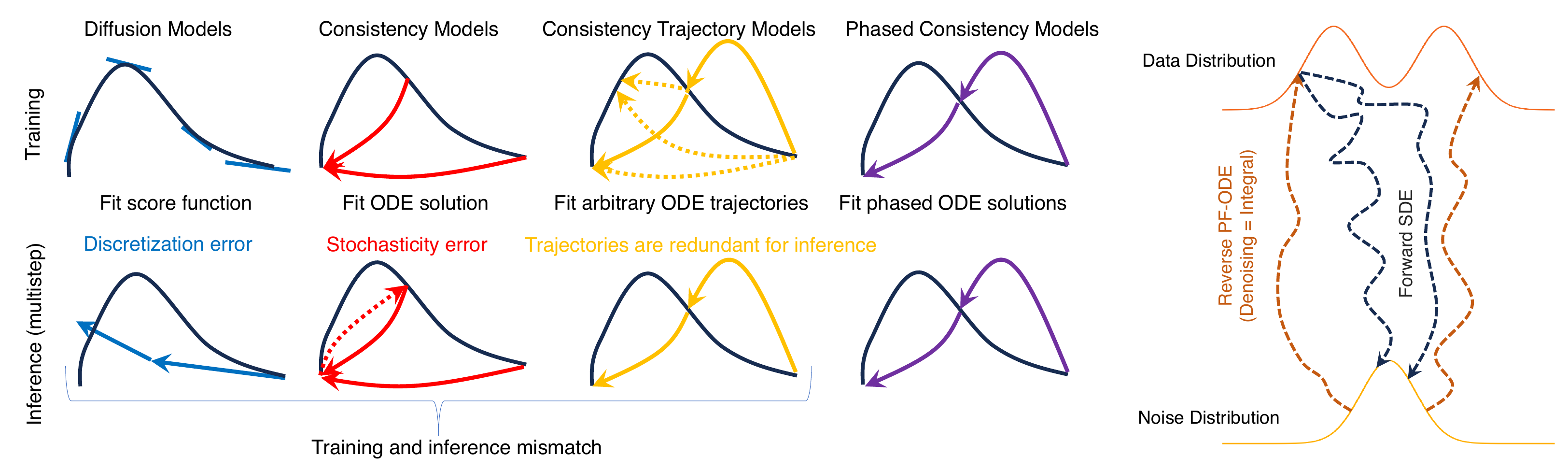}
    \caption{(Left) Illustrative comparison of diffusion models \cite{ddpm}, consistency models \cite{cm}, consistency trajectory models \cite{ctm}, and our phased consistency model. (Right) Simplified visualization of the forward SDE and reverse-time PF-ODE trajectories.}
    \label{fig:core}
    \vspace{-4mm}
\end{figure}
\noindent \textbf{Diffusion models} define a forward conditional probability path, with a general representation of $\alpha_t \mathbf x_0 + \sigma_t \boldsymbol \epsilon$ for intermediate distribution $\mathbb P_{t}(\mathbf x | \mathbf x_0)$ conditioned on $\mathbf x_0\sim \mathbb P_{0}$, which is equivalent to the stochastic differential equation $\mathrm d\mathbf x_{t} = f_{t} \mathbf x_{t} \mathrm d t + g_{t} \mathrm d \boldsymbol w_{t}$ with $\boldsymbol w_{t}$ denoting the standard Winer process, $f_{t} = \frac{\mathrm d \log \alpha_{t}}{\mathrm d _{t}}$ and $g_{t}^2 = \frac{\mathrm d \sigma_t^2}{\mathrm d t} - 2 \frac{\mathrm d \log \alpha_t}{\mathrm d t} \sigma_{t}^2$.  There exists a deterministic flow for reversing the  transition, which is represented by the probability flow ODE~(PF-ODE) $\displaystyle \smash{\mathrm d \mathbf x_t = \left[f_{t}\mathbf x_t - g^2_{t}/2\nabla_{\mathbf x_t}\log \mathbb P_{t} (\mathbf x_t)\right] \mathrm dt}$.
In standard diffusion training, a neural network is typically learned to estimate the score of marginal distribution $\mathbb P_{t} (\mathbf x_{t})$, which is equivalent to $\boldsymbol s_{\boldsymbol \phi} (\mathbf x, t) \approx \nabla_{\mathbf x} \log \mathbb P_{t}(\mathbf x) = \mathbb E_{\mathbf x_0 \sim \mathbb P(\mathbf x_0 | \mathbf x)}\left[ \nabla_{\mathbf x} \log \mathbb P_{t}(\mathbf x | \mathbf x_0)\right]$. 
Substitue the $\nabla_{\mathbf x}\log \mathbb P_{t}(x)$ with $\boldsymbol s_{\boldsymbol \phi}(x, t)$, and we get the empirical PF-ODE. Famous solvers including DDIM~\cite{ddim}, DPM-solver~\cite{dpmsolver}, Euler and Heun~\cite{edm} can be generally perceived as the approximation of the PF-ODE with specific orders and forms of diffusion for sampling in finite discrete inference steps. However, when the number of discrete steps is too small, they face inevitable discretization errors.

\noindent \textbf{Consistency models}~\cite{cm}, instead of estimating the score of marginal distributions, learn to directly predict the solution point of ODE trajectory by enforcing the self-consistency property: all points at the same ODE trajectory map to the same solution point. To be specific, given a ODE trajectory $\{\mathbf x_{t}\}_{t\in [\boldsymbol \epsilon, T]}$,  the consistency models $\boldsymbol f_{\boldsymbol \theta}(\cdot , t)$ learns to achieve $\boldsymbol f_{\boldsymbol \theta}(\mathbf x_{t}, t) = \mathbf x_{\boldsymbol \epsilon}$ by enforcing $\boldsymbol f_{\boldsymbol \theta} (\mathbf x_{t},t ) = \boldsymbol f_{\boldsymbol \theta}(\mathbf x_{t'}, t') $ for all $t, t' \in [\epsilon, T]$. A boundary condition $\boldsymbol f_{\boldsymbol\theta}(\mathbf x_{\epsilon}, \epsilon) = \mathbf x_{\epsilon}$ is set to guarantee the successful convergence of consistency training. 
During the sampling procedure, we first sample from the initial distribution $\mathbf x_T\sim \mathcal{N}(\boldsymbol 0, \mathbf I)$. Then, the final sample can be generated/refined by  alternating denoising and noise injection steps, i.e,
\begin{equation}
    \begin{split}
        \hat{\mathbf x}_{\epsilon }^{\tau_k} =  \boldsymbol {f}_{\theta}(\hat{\tau_k}, \tau_k),~~
        \hat{\mathbf x}_{\tau_{k-1}} = \hat{\mathbf x}_{\epsilon }^{\tau_k} + \boldsymbol \eta, ~~\boldsymbol\eta\sim \mathcal{N}(\boldsymbol 0, \mathbf I)
    \end{split}
\end{equation}
where $\{\boldsymbol \tau_k\}_{k=1}^K$ are selected time points in the  ODE trajectory and $\tau_K=T$. Actually, because only one solution point is used to conduct the consistency model, it is inevitable to introduce stochastic error, i.e., $\boldsymbol \eta$, for multiple sampling procedures.

Recently, Consistency trajectory models~(CTMs)~\cite{ctm} propose a more flexible framework. specifically, CTMs $\boldsymbol f_{\boldsymbol \theta} (\cdot, t, s)$ learns to achieve $\boldsymbol f_{\boldsymbol \theta}(\mathbf x_{t}, t, s) = \boldsymbol f(x_{t}, t, s; \boldsymbol \phi)$ by enforcing $\boldsymbol f_{\boldsymbol \theta} (\mathbf x_{t}, t, s) = \boldsymbol f_{\boldsymbol \theta}(\mathbf x_{t'}, t', s)$ with $s \le \min (t, t')$ and $t,t' \in [\epsilon , T]$. 
However, the learning objectives of CTMs are redundant, including many trajectories that will never be applied for inference. 
More specifically, if we split the continuous time trajectory into $N$ discrete points, diffusion models learn $\mathcal O(N)$ objectives~(i.e., each point learns to move to its adjacent point), consistency models learn $\mathcal O(N)$ objectives~(i.e., each point learns to move to solution point), and consistency trajectory models learn $\mathcal O (N^2)$ objectives~(i.e., each point learns to move to all the other points in the trajectory). Hence, except for the current timestep embedding, CTMs should additionally learn a target timestep embedding, which is not comprised of the design space of  diffusion models.

Different from the above approaches, PCMs can be optimized efficiently and support deterministic sampling without additional stochastic error. Overall, Fig.~\ref{fig:core} illustrates the difference in training and inference processes among diffusion models, consistency models, consistency trajectory models, and phased consistency models.

\section{\label{sec:method}Method}

In this section, we introduce the technical details of PCMs, which overcome the limitations of LCMs in terms of consistency, controllability, and efficiency. \textbf{Consistency}: Specifically, we first introduce the main framework of PCMs, consisting of definition, parameterization, the distillation objective, and the sampling procedure in Sec.~\ref{sec:framework}. In particular, by enforcing the self-consistency property in multiple ODE sub-trajectories respectively, PCMs can support deterministic sampling to preserve image consistency with varying inference steps. 
\textbf{Controllability}: Secondly, in Sec.~\ref{sec:guided_distill}, to improve the controllability of text guidance, we revisit the potential drawback of the guided distillation adopted in LCMs, and propose to optionally remove the CFG for consistency distillation.
\textbf{Efficiency}: Thirdly, in Sec.~\ref{sec:adv_consistency_loss}, to further improve inference efficiency, we introduce an adversarial consistency loss to enforce the modeling of data distribution, which facilitates 1-step generation.

    \subsection{\label{sec:framework}Main Framework}
\begin{figure}[t]
    \vspace{-4mm}
    \centering
\includegraphics[width=0.96\textwidth]{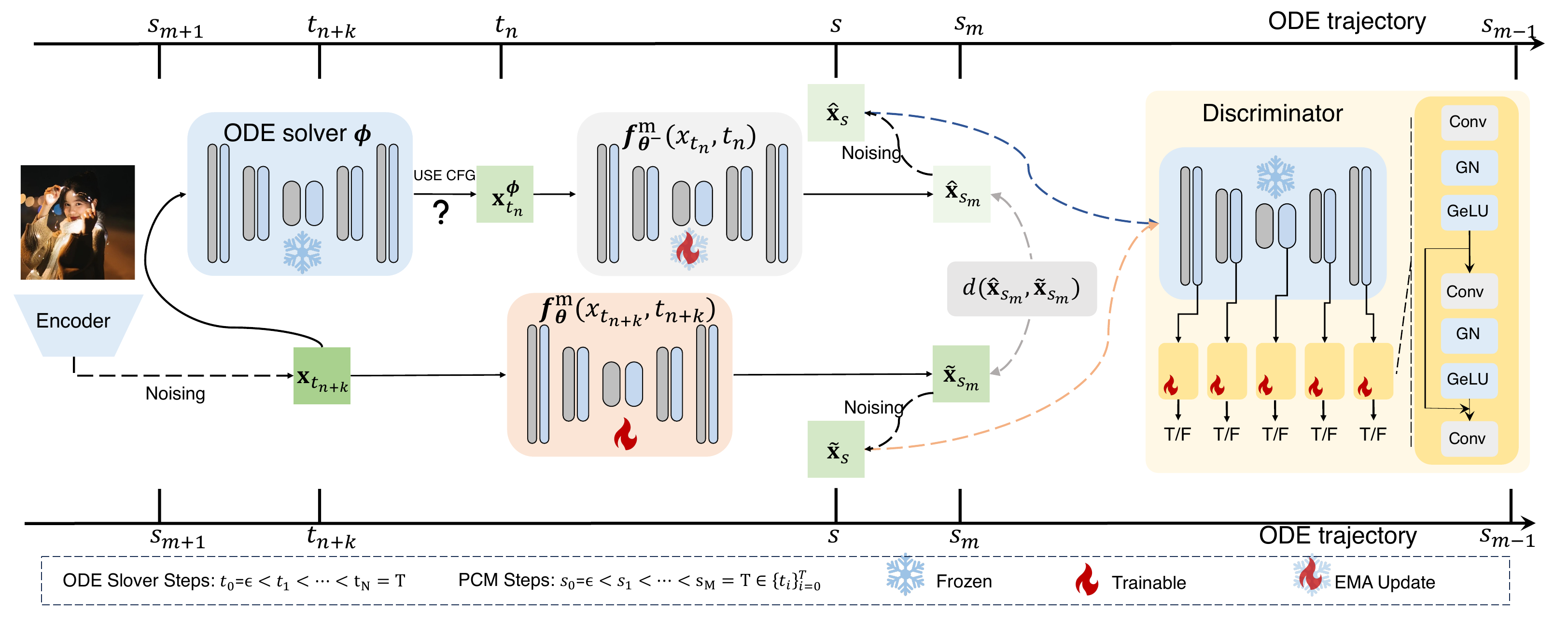}
    \vspace{-2mm}
     \caption{\textbf{Training paradigm of PCMs}. `?' means optional usage.}
    \label{fig:main}
    \vspace{-6mm}
\end{figure}
\noindent \textbf{Definition.} For a solution trajectory of a diffusion model $\{\mathbf x_{t}\}_{t \in [\epsilon, T]}$ following the PF-ODE, we split the trajectory into multiple sub-trajectories with hyper-defined edge timesteps ${s_0, s_1, \dots, s_{M}}$, where $s_0 = \epsilon$ and $s_{M} = T$. The $M$ sub-trajectories can be represented as $\{\mathbf x_{t}\}_{t\in [s_{m},s_{m+1}]}$  with $m=0,1,\dots, M-1$. We treat each sub-trajectory as an independent CM and define the consistency function as $\boldsymbol f^{m}: (\mathbf x_{t}, t) \rightarrow \mathbf x_{s_m}, \quad t \in [s_{m},s_{m+1}]$. We learn $\boldsymbol f^m_{\theta}$ to estimate $\boldsymbol f$ by enforcing the self-consistency property on each sub-trajectory that its outputs are consistent for arbitrary pairs on the same sub-trajectory. Namely, $\boldsymbol f^m(\mathbf x_{t}, t) = \boldsymbol f^m(\mathbf x_{t'}, t')$ for all $t, t' \in [s_m, s_{m+1}]$. Note that the consistency function is only defined on the sub-trajectory. However, for sampling, it is necessary to define a transition from timestep $T$~(i.e., $s_M$) to $\epsilon$~(i.e., $s_0$). Thereby, we defined $\boldsymbol f^{m, m'} = \boldsymbol f^{m'}\left(\cdots \boldsymbol f^{m-2}\left(\boldsymbol f^{m-1}\left(\boldsymbol f^{m}(\mathbf x_{t}, t\right), s_m), s_{m-1}\right) \cdots , s_{m'}\right)$ that transforms any point $\mathbf x_{t}$ on $m$-th sub-trajectory to the solution point of  $m'$-th  trajectory. 

\noindent \textbf{Parameterization.} Following the definition, the corresponding consistency function  of each sub-trajectory should satisfy boundary condition $\boldsymbol f^m(\mathbf x_{s_{m}}, s_m) = \mathbf x_{s_m}$, which is crucial for guaranteeing the successful training of consistency models. To satisfy the boundary condition, we typically need to explicitly parameterize $\boldsymbol f_{\boldsymbol \theta}^{m} (\mathbf x, t)$ as 
$
\boldsymbol f_{\boldsymbol \theta} ^ {m} (\mathbf x_t, t) =  c^m_{\text{skip}}(t)\mathbf x_r +  c^m_{\text{out}}(t) F_{\boldsymbol \theta}(\mathbf x_t, t, s_m) 
$,
 where $c^m_{\text{skip}} (t)$ gradually increases to $1$ and $c^m_\text{out}(t)$ gradually decays to 0 from timestep $s_{m+1}$ to $s_m$. 

 Another important thing is how to parameterize $F_{\boldsymbol \theta} (\mathbf x_t, t, s)$, basically it should be able to indicate the target prediction at timestep $s$ given the input $\mathbf x$ at timestep $t$.  Since we build upon the epsilon prediction models, we hope to maintain the epsilon-prediction learning target.  

For the above-discussed PF-ODE, there exists an exact solution~\cite{dpmsolver} from timestep $t$ to $s$
\begin{equation}\label{eq:solver}
\mathbf x_{s} =  \frac{\alpha_s}{\alpha_t}\mathbf x_{t} + \alpha_s \int_{\lambda_t}^{\lambda_s} e^{-\lambda} {\sigma_{t_\lambda(\lambda)}}\nabla\log \mathbb P_{t_\lambda (\lambda)} (\mathbf x_{t_\lambda(\lambda)}) \mathrm d \lambda
\end{equation}
where $\lambda_{t} = \ln \frac{\alpha_t}{\sigma_t} $ and $t_{\lambda}$ is a inverse function with $\lambda_t$.  The equation shows that the solution of the ODE from  $t$ to $s$ is the scaling of $\mathbf x_t$  and the weighted sum of scores.  Given a epsilon prediction diffusion network $\boldsymbol \epsilon_{\boldsymbol{\phi}}({\mathbf x, t})$, we can estimate the solution as 
$
\mathbf x_{s} =  \frac{\alpha_s}{\alpha_t}\mathbf x_{t} - \alpha_s \int_{\lambda_t}^{\lambda_s} e^{-\lambda} \boldsymbol \epsilon_{\boldsymbol \phi} (\mathbf x_{t_{\lambda}(\lambda)}, t_{\lambda}(\lambda)) \mathrm d \lambda .
$
However, note that the solution requires knowing the epsilon prediction at each timestep between $t$ and $s$, but consistency modes need to predict the solution with only $\mathbf x_{t}$ available with single network evaluation. Thereby, we parameterize the $F_{\boldsymbol \theta} (\mathbf x, t, s)$ as following,
\begin{equation}\label{eq:parameterization}
\mathbf x_s = \frac{\alpha_s}{\alpha_t} \mathbf x_{t} - \alpha_s \hat{\boldsymbol \epsilon}_{\theta}({\mathbf x_{t}, t}) \int_{\lambda_{t}}^{\lambda_s} e^{-\lambda} \mathrm d \lambda\, .
\end{equation}
One can show that the parameterization has the same format with DDIM $\mathbf x_{s} = \alpha_s (\frac{\mathbf x_{t} - \sigma_t \boldsymbol \epsilon_{\boldsymbol \theta}(\mathbf x_{t}, t)}{\alpha_{t}}) + \sigma_{s}\boldsymbol \epsilon_{\boldsymbol \theta}(\mathbf x_{t}, t)$~(see Theorem~\ref{th:ddim}). But here we clarify that the parameterization has an intrinsic difference from DDIM. DDIM is the first-order approximation of solution ODE, which works because we assume the linearity of the score in small intervals. This causes the DDIM to degrade dramatically in few-step settings since the linearity is no longer satisfied. Instead, our parameterization is not approximation but exact solution learning. The learning target of $\hat{\boldsymbol \epsilon}_{\boldsymbol\theta} (\mathbf x, t)$ is no more the scaled score $-\sigma_{t}\nabla_{\mathbf x}\log \mathbb P_{t}(\mathbf x)$ (which epsilon-prediction diffusion models learn to estimate) but $\frac{\int_{\lambda_t}^{\lambda_s}e^{-\lambda} \boldsymbol \epsilon_{\boldsymbol \phi}(\mathbf x_{t_{\lambda}(\lambda)}, t_{\lambda}(\lambda))\mathrm d \lambda}{\int_{\lambda_t}^{\lambda_s} e^{-\lambda} \mathrm d\lambda}$ . Actually, we can define the parameterization in other formats, but we find this format is simple and has a small gap between the original diffusion models. The parameterization of $F_{\theta}$ also allows for a better property that we can drop the introduced $c^m_{\text{skip}}$ and $c^m_{\text{out}}$ in consistency models to ease the complexity of the framework. Note that following Eq.~\ref{eq:parameterization},  we can get $F_{\boldsymbol \theta}(\mathbf x_{s_m},s_m, s_m) = \frac{\alpha_{s_m}}{\alpha_{s_m}} \mathbf x_{s_m} - 0 = \mathbf x_{s_m} $. Therefore, the boundary condition is already satisfied. Hence, we can simply define $\boldsymbol f^m_{\boldsymbol \theta} (\mathbf x, t) = F_{\boldsymbol \theta}(\mathbf x, t, s_m)$. This parameterization also aligns with several previous diffusion distillation techniques~\cite{progressivedistillation,berthelot2023tract,tcd} utilizing DDIM format, building a deep connection with previous distillation methods. The difference is that we provide a more fundamental explanation of the meaning and  learning objective of parameterizations.

\noindent \textbf{Phased  consistency distillation objective.}
Denote the pre-trained diffusion models as $\boldsymbol s_{\boldsymbol \phi} (\mathbf x, t)= - \frac{\boldsymbol \epsilon_{\boldsymbol \phi}(\mathbf x, t)}{\sigma_{t}}$, which induces an empirical PF-ODE. We firstly discretize the whole trajectory into $N$ sub-intervals with  $N + 1$ discrete timesteps from $[\epsilon, T]$, which we denote as $t_0 = \epsilon < t_1 < t_2 < \cdots < t_{N} = T $.  Typically $N$ should be sufficiently large to make sure the ODE solver approximates the ODE trajectory correctly. Then we sample $M+1$  timesteps as edge timesteps $s_0 = t_0 < s_1  < s_2 < \cdots < s_{M} = t_{N} \in \{t_i\}_{i=0}^N$ to split the ODE trajectory into $M$ sub-trajectories. Each sub-trajectory $[s_i, s_{i+1}]$ consists of  the set of sub-intervals  $\{[t_{j}, t_{j+1}]\}_{t_j \ge s_i, t_{j+1}\le s_{i+1}}$.

Here we define $\Phi(\mathbf x_{t_{n+k}}, t_{n+k}, t_{n};\boldsymbol \phi)$ as the $k$-step ODE solver that approximate ${\mathbf x}^{\boldsymbol \phi}_{t_{n}}$  from ${\mathbf x}_{t_{n+k}}$ on the same sub-trajectory following Equation~\ref{eq:solver}, namely,
\begin{equation}
\hat{\mathbf x}^{\boldsymbol \phi}_{t_{n}} = \Phi (\mathbf x_{t_{n+k}}, t_{n+k}, t_{n};\boldsymbol \phi).
\end{equation}
 Following CMs~\cite{cm}, we set $k=1$ to minimize the ODE solver cost. The training loss is defined as 
\begin{equation}
\mathcal L_{\text{PCM}} (\boldsymbol \theta, \boldsymbol \theta^-; \boldsymbol \phi)= \mathbb E_{\mathbb P(m), \mathbb P (n|m),\mathbb P({\mathbf x_{t_{n+1}}|n,m})}\left[\lambda(t_{n}) d\left(\boldsymbol f^m_{\boldsymbol \theta}(\mathbf x_{t_{n+1}}, t_{n+1} ), \boldsymbol f^{m}_{\boldsymbol \theta^-} (\hat{\mathbf x}^{\boldsymbol \phi}_{t_{n}}, t_{n})\right)\right]
\end{equation}
where $\mathbb P(m) := \mathrm{uniform(}{\{0,1,\dots, M-1}\})$, $\mathbb P(n|m): = \text{uniform}(\{n+1| t_{n+1}\le s_{m+1}, t_{n}\ge s_{m}\})$, $\mathbb P(\mathbf x_{t_{n+1}}| n, m) = \mathbb P_{t_{n+1}}(\mathbf x)$, and $\boldsymbol \theta^-=\mu \boldsymbol \theta^{-} + (1-\mu)\boldsymbol\theta$. 

We show that when $\mathcal L_{\text{PCM}}(\boldsymbol \theta,\boldsymbol \theta^-;\boldsymbol \phi)=0$ and local errors of ODE solvers uniformly bounded by $\mathcal O((\Delta t)^{p+1})$, the solution estimation error within the arbitrary sub-trajectory  $\boldsymbol f_{\boldsymbol \theta}^{m}(\mathbf x_{t_{n}}, t_{n})$ is bounded by $\mathcal O((\Delta t)^p)$ in Theorem~\ref{th:pcm_single}. Additionally, the solution estimation error across any sets of sub-trajectories (i.e., the error between $\boldsymbol f_{\boldsymbol \theta} ^{m, m'}(\mathbf x_{t_{n}}, t_{n})$ and $\boldsymbol f^{m,m'}(\mathbf x_{t_{n+1},t_{n+1}}; \boldsymbol \phi)$) is also bounded by $\mathcal O((\Delta t)^p)$ in Theorem~\ref{th:pcm_multiple}. 

\noindent \textbf{Sampling.}
For a given initial sample at timestep $t$ which belongs to the sub-trajectory $[s_m, s_{m+1}]$, we can support deterministic sampling following the definition of $\boldsymbol f^{m, 0}$. Previous work~\cite{ctm,xu2023restart} reveals that introducing a certain degree of stochasticity might lead to better generation quality. We show that our sampling method can also introduce randomness through a simple modification, which we discuss at Sec.~\ref{sec:randomness}

\subsection{\label{sec:guided_distill}Guided Distillation}

\begin{figure}[t]
    \centering
    \vspace{-4mm}
    \makebox[\textwidth]{\includegraphics[width=0.92\textwidth]{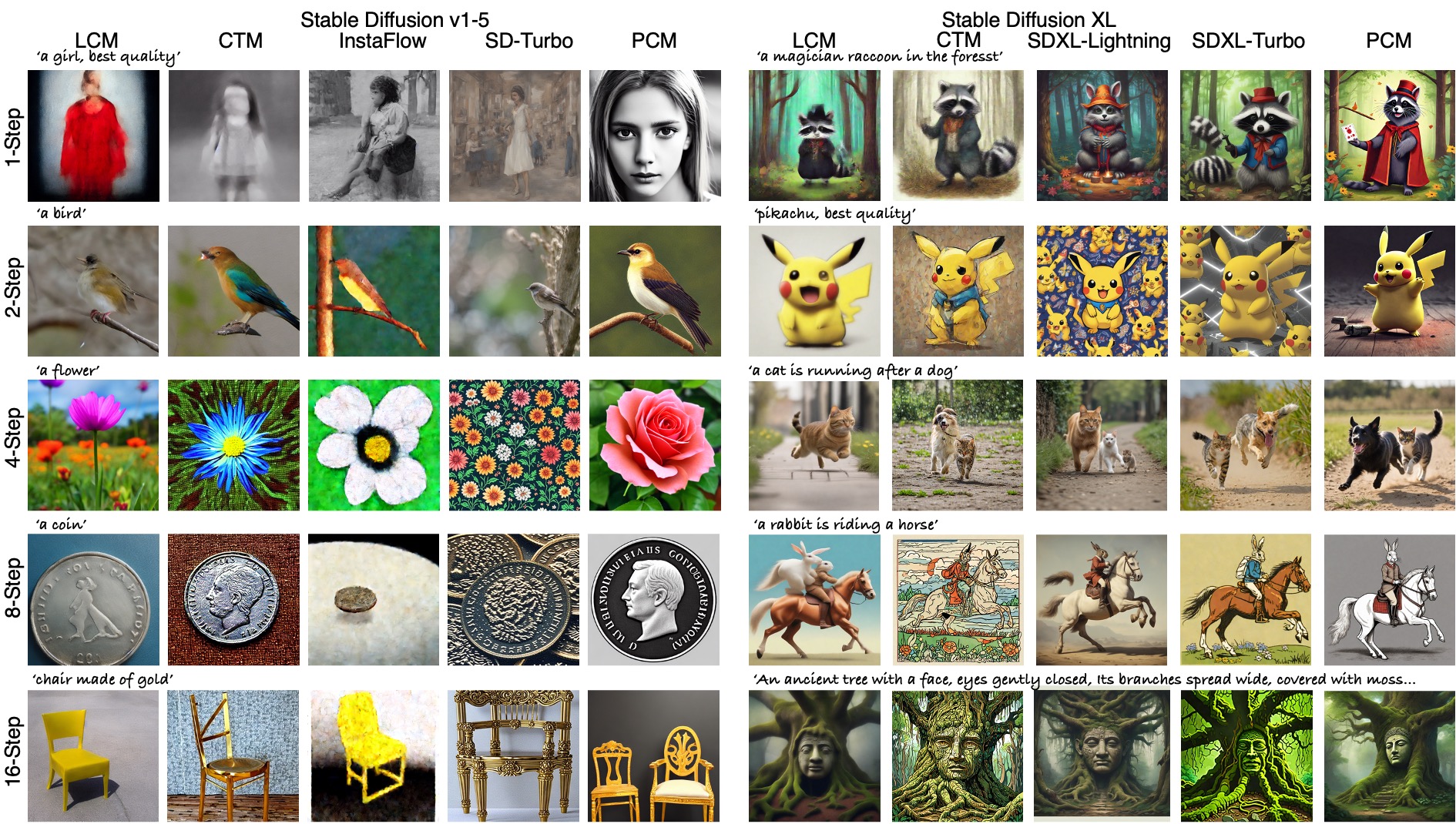}}
    \vspace{-4mm}
    \caption{\textbf{Qualitative Comparison.} Our method achieves top-tier performance.}
    \label{fig:qualitative}
    \vspace{-4mm}
\end{figure} 
For convenience, we take the epsilon-prediction format with text conditions $\boldsymbol c$ for the following discussion. The consistency model is denoted as $\boldsymbol \epsilon_{\boldsymbol \theta}(\mathbf x_t, t, \boldsymbol c)$, and the diffusion model is denoted as $\boldsymbol \epsilon_{\boldsymbol \phi}(\mathbf x_t, t, \boldsymbol c)$. 
A commonly applied strategy for text-conditioned diffusion models is classifier-free guidance~(CFG)~\cite{cfg,shen2024rethinking,ahn2024self}.  
At training, $\boldsymbol c$ is randomly substituted with null text embedding $\varnothing$. At each inference step, the model computes $\boldsymbol \epsilon_{\boldsymbol \phi}(\mathbf x_t, t, \boldsymbol c)$ and $\boldsymbol \epsilon_{\boldsymbol \phi}(\mathbf x_t, t, \boldsymbol c_\text{neg})$ simultaneously, and the actual prediction is the linear combination of them. 
Namely,
\begin{equation}
\boldsymbol \epsilon_{\phi} (\mathbf x_t, t, \boldsymbol c, \boldsymbol c_\text{neg} ; w) = \boldsymbol \epsilon_{\boldsymbol \phi}  (\mathbf x_t, t, \boldsymbol c_\text{neg}) + w (\boldsymbol \epsilon_{\boldsymbol \phi}(\mathbf x_t, t, \boldsymbol c) - \boldsymbol \epsilon_{\boldsymbol \phi}(\mathbf x_t, t, \boldsymbol c_\text{neg})),
\end{equation}
where $w$ controlling the strength and $\boldsymbol c_\text{neg}$ can be set to $\varnothing$ or text embeddings of unwanted characteristics.  A noticeable phenomenon is that diffusion models with this strategy can not generate content with good quality without using CFG. That is, the empirical ODE trajectory induced with pure $\boldsymbol \epsilon_{\boldsymbol \phi}(\mathbf x_t, t, \boldsymbol c)$ deviates away from the ODE trajectory to real data distribution. Thereby, it is necessary to apply CFG-augmented prediction $\boldsymbol \epsilon_{\boldsymbol \phi}(\mathbf x_t, t, \boldsymbol c, \boldsymbol c_\text{neg}; w)$ for ODE solvers. Recall that we have shown that the consistency learning target of the consistency model $\boldsymbol \epsilon_{\boldsymbol \theta}(\mathbf x_t,t, \boldsymbol c)$ is the weighted sum of epsilon prediction on the trajectory. Thereby, we have 
$
\boldsymbol \epsilon_{\boldsymbol \theta}(\mathbf x_t, t, \boldsymbol c) \propto \boldsymbol \epsilon_{\phi} (\mathbf x_{t'}, t', \boldsymbol c, \varnothing ; w),
$
 for all $t' \le t$.  On this basis, if we additionally apply the CFG for the consistency models, we can prove that
\begin{align}
\boldsymbol \epsilon_{\boldsymbol \theta}(\mathbf x_t, t, \boldsymbol c, \boldsymbol c_\text{neg}; w') \propto  ww' (\boldsymbol \epsilon_{\boldsymbol \phi}(\mathbf x_{t'}, t', \boldsymbol c) - \boldsymbol{\epsilon}_{\boldsymbol \phi}^{\text{merge}})) + \boldsymbol \epsilon_{\boldsymbol \phi}(\mathbf x_{t'}, t', \boldsymbol c_\text{neg}) \, ,
\end{align}
where $\boldsymbol{\epsilon}_{\boldsymbol \phi}^{\text{merge}} = (1- \alpha) \boldsymbol \epsilon_{\boldsymbol \phi}(\mathbf x_{t'}, t', \boldsymbol c_\text{neg})  + \alpha \boldsymbol\epsilon_{\boldsymbol \phi}(\mathbf x_{t'}, t', \varnothing)$ and $\alpha = \frac{(w-1)}{w w'}$~(See Theorem~\ref{th:cfg}). This equation indicates that applying CFG $w'$ to the consistency models trained with CFG-augmented ODE solver confined with $w$, is equivalent to scaling the prediction of original diffusion models by $w'w$, which explains the the exposure problem.  We can also observe that the epsilon prediction with negative prompts is \textit{diluted} by the prediction with null text embedding, which reveals that the impact of negative prompts is reduced. 

We ask the question: \textit{Is it possible to conduct consistency distillation with diffusion models trained for CFG usage without applying CFG-augmented ODE solver?}  Our finding is that it is not applicable for original CMs but works well for PCMs especially when the number of sub-trajectory $M$ is large. We empirically find that $M=4$ is sufficient for successful training. As we discussed, the text-conditioned diffusion models trained for CFG usage fail at achieving good generation quality when removing CFG for inference.  
That is, target data distribution induced by PF-ODE with $\boldsymbol \epsilon_{\boldsymbol \phi}(\mathbf x_t, t, \boldsymbol c)$ has a large distribution distance to real data distribution. Therefore, fitting the spoiled ODE trajectory is only to make the generation quality bad. In contrast, when phasing the whole ODE trajectory into several sub-trajectories, the negative influence is greatly alleviated. On one hand, the starting point $\mathbf x_{s_{m+1}}$ of sub-trajectories is replaced by adding noise to the real data. On the other hand, the distribution distance between the distribution of solution points $\mathbf x_{s_{m}}$ of sub-trajectories and real data distribution $\mathbb P_{s_m}^{\text{data}}$ at the same timestep is much smaller proportional to the noise level introduced.  To put it straightforwardly, even though the distribution gap between the real data and the samples generated from $\boldsymbol \epsilon_{\boldsymbol \phi}(\mathbf x_t, t, \boldsymbol c)$ is large, adding noise to them reduce the gap. Thereby, we can optionally train a consistency model whether supporting larger values of CFG or not. 

\subsection{\label{sec:adv_consistency_loss}Adversarial Consistency Loss}

We introduce an adversarial loss to enforce the distribution consistency, which greatly improves the generation quality in few-step settings. For convenience, we introduce an additional symbol $\mathcal T_{t\to s}$ which represents a flow from $\mathbb P_t$ to $\mathbb P_s$. Let $\mathcal T^{\boldsymbol \phi}_{t\to s}$, $\mathcal T_{t\to s}^{\boldsymbol \theta}$ be the transition mapping following the ODE trajectory of pre-trained diffusion and our consistency models.  Additionally, let $\mathcal T^{-}_{s\to t}$ be the distribution transition following the forward process SDE (adding noise). The loss function is defined as the following
\begin{equation}
\mathcal L^{adv}_{\text{PCM}} (\boldsymbol \theta, \boldsymbol \theta^-; \boldsymbol \phi, m) =  D\left(\mathcal T^{-}_{s_m\to s}\mathcal T^{\boldsymbol \theta}_{t_{n+k}\to s_{m}}\#\mathbb P_{t_{n+k}}\middle\|\mathcal T^{-}_{s_m\to s} \mathcal T^{\boldsymbol \theta^-}_{t_{n}\to s_{m}} \mathcal T^{\boldsymbol \phi}_{t_{n+1}\to t_{n}}\# \mathbb P_{t_{n+1}} \right) \, ,
\end{equation} %
 where $\#$ is the pushforward operator, and $D$ is the distribution distance metric. To penalize the distribution distance, we apply the GAN-style training paradigm. 
 To be specific, as shown in Fig.~\ref{fig:main}, for the sampled $\mathbf x_{t_{n+k}}$ and the $\mathbf x_{t_{n}}^{\boldsymbol \phi}$ solved through the pre-trained diffusion model $\boldsymbol \phi$, we first compute their predicted solution point $\tilde{\mathbf x}_{s_m} = \boldsymbol f_{\boldsymbol \theta}^m(\mathbf x_{t_{n+k}}, t_{n+k})$ and $\hat{\mathbf x}_{s_m} = \boldsymbol f_{\boldsymbol{\theta^{-}}}(\mathbf x_{t_n}^{\boldsymbol \phi}, t_{n})$. Then we randomly add noise to $\tilde{\mathbf x}_{s_m}$ and $\hat{\mathbf x}_{s_m}$ to obtain $\tilde{\mathbf x}_{s}$ and $\hat{\mathbf x}_{s}$ with randomly sampled $s \in [s_m, s_{m+1}]$. We optimize the adversarial loss between $\tilde{\mathbf x}_{s}$ and $\hat{\mathbf x}_{s}$. Specifically, the $\mathcal L_{\text{PCM}}^{adv}$ can be re-written as 
 \begin{equation}
     \mathcal L_{\text{PCM}}^{adv}(\boldsymbol \theta, \boldsymbol \theta^{-}; \boldsymbol \phi, m) = \text{ReLU}(1 +  
\mathcal D(\tilde{\mathbf x}_{s}, s, \boldsymbol c)) + \text{ReLU}(1-\mathcal D(\hat{\mathbf x}_{s}, s, \boldsymbol c)),
 \end{equation}
where $\text{ReLU}(x) = x$ if $x>0$ else $\text{ReLU}(x) = 0$, $\mathcal D$ is the discriminator, $\boldsymbol c$ is the image conditions~(e.g., text prompts), and the loss is updated in a min-max manner~\cite{gan}.  Therefore the eventual optimization objective is $\mathcal L_{\text{PCM}} + \lambda \mathcal L_{\text{PCM}}^{adv}$ with $\lambda$ as a hyper-parameter controlling the trade-off of distribution consistency and instance consistency.  We adopt $\lambda = 0.1$ for all training settings.  However, we hope to clarify that the adversarial loss has an intrinsic difference from the GAN. The GAN training aims to align the training data distribution and the generation distribution of the model. That is, $D(\mathcal T^{\boldsymbol \theta}_{t_{n+k}\to s_m}\# \mathbb P_{t_{n+k}}\| \mathbb P_{s_{m}} )$  In Theorem~\ref{th:distribution}, we show that consistency property is enforced, our introduced adversarial loss will also coverage to zero. Yet in Theorem~\ref{th:distribution-2}, we show that combining standard GAN with consistency distillation is a flawed design when considering the pre-trained data distribution and distillation data distribution mismatch. Its loss will be non-zero when the self-consistency property is achieved, thus corrupting the consistency distillation learning.  Our experimental results also verify our statement~(Fig.~\ref{fig:effectiveness_adv}).

\begin{table}[t]
\vspace{-4mm}
\centering
\caption{\footnotesize Comparison of FID-SD and FID-CLIP with Stable Diffusion v1-5 based methods under different steps.}
\label{table:sd_multistep}
\resizebox{\textwidth}{!}{
\begin{tabular}{c|ccccc|ccccc|ccccc}
\toprule
\multirow{3}{*}{METHODS} & \multicolumn{10}{c|}{FID-SD} & \multicolumn{5}{c}{FID-CLIP} \\ \cmidrule(r){2-16}
                         & \multicolumn{5}{c|}{COCO-30K} & \multicolumn{5}{c|}{CC12M-30K} & \multicolumn{5}{c}{COCO-30K} \\ \cmidrule(r){2-16}
                         & 1 & 2 & 4 & 8 & 16 & 1 & 2 & 4 & 8 & 16 & 1 & 2 & 4 & 8 & 16 \\ \midrule
InstaFlow~\cite{instaflow} & 11.51 & 69.79 & 102.15 & 122.20 & 139.29 & 11.90 & 62.48 & 88.64 & 105.34 & 113.56 & \textbf{9.56} & 29.38 & 38.24 & 43.60 & 47.32 \\ 
SD-Turbo~\cite{sdturbo} & 10.62 & 12.22 & 16.66 & 24.30 & 30.32 & 10.35 & 12.03 & 15.15 & 19.91 & 23.34 & 12.08 & 15.07 & 15.12 & 14.90 & 15.09 \\ 
LCM~\cite{lcm} & 53.43 & 11.03 & \underline{6.66} & 6.62 & 7.56 & 42.67 & 10.51 & 6.40 & 5.99 & 9.02 & 21.04 & 10.18 & 10.64 & 12.15 & 13.85 \\ 
CTM~\cite{ctm} & 63.55 & 9.93 & 9.30 & 15.62 & 21.75 & 28.47 & 8.98 & 8.22 & 13.27 & 17.43 & 30.39 & 10.32 & 10.31 & 11.27 & 12.75 \\ \midrule
Ours & \textbf{8.27} & \textbf{9.79} & \textbf{5.81} & \textbf{5.00} & \textbf{4.70} & \textbf{7.91} & \textbf{8.93} & \textbf{4.93} & \textbf{3.88} & \textbf{3.85} & \underline{11.66} & \underline{9.17} & \underline{9.07} & \underline{9.49} & \underline{10.13} \\ 
Ours* & - & - & 7.46 & \underline{6.49} & \underline{5.78} & - & - & \underline{4.99} & \underline{5.01} & \underline{5.13} & - & - & \textbf{8.85} & \textbf{8.33} & \textbf{8.09} \\ 
\bottomrule
\end{tabular}
}
\end{table}
\section{Experiments}

\begin{table}[!t]
\centering
\vspace{-4mm}
\caption{One-step generation comparison on Stable Diffusion v1-5.}
\label{table:sd_comparison}
\resizebox{0.98\textwidth}{!}{
\begin{tabular}{c|cccc|cccc|c}
\toprule
\multirow{2}{*}{\parbox[c]{2cm}{\centering METHODS}} & \multicolumn{4}{c|}{COCO-30K} & \multicolumn{4}{c|}{CC12M-30K} & \multirow{2}{*}{\parbox[c]{2cm}{\centering CONSISTENCY~$\uparrow$}}\\ \cmidrule(r){2-9}
                         & FID~$\downarrow$  & FID-CLIP~$\downarrow$ & FID-SD~$\downarrow$ & CLIP SCORE~$\uparrow$ & FID~$\downarrow$  & FID-CLIP~$\downarrow$ & FID-SD~$\downarrow$ & CLIP SCORE~$\uparrow$ \\ \midrule
InstaFlow~\cite{instaflow}                & \textbf{13.59} & \textbf{9.56}     & 11.51  & \underline{29.37}      & \underline{15.06} & \underline{6.16}     & \underline{11.90}  & 24.52    & 0.61  \\ 
SD-Turbo~\cite{sdturbo}                 & \underline{16.56} & 12.08    & \underline{10.62}  & \textbf{31.21}      & 17.17 & 6.18    & 12.48  & \underline{26.30}    & 0.71  \\ 
CTM~\cite{ctm}                      & 67.55 & 30.39    & 63.55  & 23.98      & 56.39 & 28.47    & 56.34  & 18.81   & 0.65    \\ 
LCM~\cite{lcm}                      & 53.81 & 21.04    & 53.43  & 25.23      & 44.35 & 18.58    & 42.67  & 20.38    & 0.62  \\ 
TCD~\cite{tcd} & 71.69 & 31.69 & 68.04 & 23.60   & 57.97& 30.03& 57.21 &  18.57   & -         \\ \midrule 
Ours                     & 17.91 & \underline{11.66}    & \textbf{8.27}   & 29.26      & \textbf{14.79} & \textbf{5.38}    & \textbf{7.91}   & \textbf{26.33}    & \textbf{0.81}  \\ 
\bottomrule
\end{tabular}}
\end{table}

\subsection{Experimental Setup}

 \noindent \textbf{Dataset.} \textbf{Training dataset:} For image generation, we train all models on the CC3M~\cite{cc12m} dataset. For video generation, we train the model on WebVid-2M~\cite{webvid}. \textbf{Evaluation dataset:} For image generation, we evaluate the performance on the COCO-2014~\cite{coco} following the 30K split of karpathy. We also evaluate the performance on the CC12M with our randomly chosen 30K split. For video generation, we evaluate with the captions of UCF-101~\cite{ucf101}.

\noindent \textbf{Backbones.} We verify the text-to-image generation based on Stable Diffusion v1-5~\cite{sd} and Stable Diffusion XL~\cite{sdxl}. We verify the text-to-video generation following the design of AnimateLCM~\cite{wang2024animatelcm} with decoupled consistency distillation.

\noindent \textbf{Evaluation metrics.} \noindent \textbf{Image:} We report the FID~\cite{fid} and CLIP score~\cite{clip} of the generated images  and the validation 30K-sample splits. Following ~\cite{sd3,fid-clip}, we also compute the FID with CLIP features~(FID-CLIP). Note that, all baselines and our method focus on distilling the knowledge from the pre-trained diffusion models for acceleration. Therefore, we also compute the FID of all baselines and the generated images of original pre-trained diffusion models including Stable Diffusion v1-5 and Stable Diffusion XL~(FID-SD). \noindent \textbf{Video:} For video generation, we evaluate the performance from three perspectives: the CLIP Score to measure the text-video alignment, the CLIP Consistency to measure the inter-frame consistency of the generated videos, the Flow Magnitude to measure the motion magnitude of the generated videos with Raft-Large~\cite{teed2020raft}. 

\subsection{Comparison}

\noindent \textbf{Comparison methods.} We compare PCM with Stable Diffusion v1-5 to methods including Stable Diffusion v1-5~\cite{sd}, InstaFlow~\cite{instaflow}, LCM~\cite{lcm}, CTM~\cite{ctm}, TCD~\cite{tcd} and SD-Turbo~\cite{sdturbo}. We compare PCM with Stable Diffusion XL to methods including Stable Diffusion XL~\cite{sdxl}, CTM~\cite{ctm}, SDXL-Lightning~\cite{sdxl-lightning}, SDXL-Turbo~\cite{sdturbo}, and LCM~\cite{lcm}. We apply the `Ours' and `Ours*' to denote our methods trained with CFG-augmented ODE solver or not. We only report the performance of `Ours*' with more than 4 steps which aligns with our claim that it is only possible when phasing the ODE trajectory into multiple sub-trajectories.   For video generation, we compare with DDIM~\cite{ddim}, DPM~\cite{dpmsolver}, and AnimateLCM~\cite{wang2024animatelcm}.

\noindent \textbf{Qualitative comparison.} We evaluate our model and comparison methods with a diverse set of prompts in different inference steps. The results are listed in Fig.~\ref{fig:qualitative}. Our method shows clearly the top performance in both image visual quality and text-image alignment across 1--16 steps.

\noindent \textbf{Quantitative comparison.} \textbf{One-step generation:} We show the one-step generation results comparison of methods based on Stable Diffusion v1-5 and Stable Diffusion XL in Table~\ref{table:sd_comparison} and Table~\ref{table:sdxl_comparison}, respectively. Notably, PCM consistently surpasses the consistency model-based methods including LCM and CTM by a large margin. Additionally, it achieves comparable or even superior to the state-of-the-art GAN-based~(SD-Turbo, SDXL-Turbo, SDXL-Lightning) or Rectified-Flow-based~(InstaFlow) one-step generation methods. Note that InstaFlow applies the LIPIPS~\cite{lipips} loss for training and SDXL-Turbo can only generate $512\times 512$ resolution images, therefore it is easy for them to obtain higher scores. \textbf{Multi-step generation:} We report the FID changes of different methods on COCO-30K and CC12M-30K in Table~\ref{table:sd_multistep} and Table~\ref{table:sdxl_multistep}. `Ours' and `Ours*' achieve the best or second-best performance in most cases. It is notably the gap of performance between our methods and other baselines becoming large as the timestep increases, which indicates the phased nature of our methods supports more powerful multi-step sampling ability. \textbf{Video generation:} We show the quantitative comparison of video generation in Table~\ref{tab:video_quant}, our model achieves consistent superior performance. The 1-step and 2-step generation results of DDIM and DPM are very noisy, therefore it is meaningless to evaluate their Flow Magnitude and CLIP Consistency.

\begin{table}[t]
\centering
\vspace{-4mm}
\begin{minipage}{0.4\textwidth}
\centering
\caption{ Comparison of FID-SD on CC12M-30K with Stable Diffusion XL.}
\label{table:sdxl_multistep}
\resizebox{\textwidth}{!}{
\begin{tabular}{c|ccccc}
\toprule
METHODS & 1-Step & 2-Step & 4-Step & 8-Step & 16-Step \\ \midrule
SDXL-Lightning~\cite{sdxl-lightning} & \underline{8.69} & 7.29 & \textbf{5.26} & 5.83 & 6.10 \\ 
SDXL-Turbo~($512\times 512$)~\cite{sdturbo} & \textbf{6.64} & \textbf{6.53} & 7.39 & 13.88 & 26.55 \\ 
SDXL-LCM~\cite{lcm} & 57.70 & 19.64 & 11.22 & 12.89 & 23.33 \\ 
SDXL-CTM~\cite{ctm} & 72.45 & 24.06 & 20.67 & 39.89 & 39.18 \\ \midrule
Ours & 8.76 & \underline{7.02} & 6.59 & \textbf{5.19} & \textbf{4.92} \\ 
Ours* & - & - & \underline{6.26} & \underline{5.27} & \underline{5.09} \\ 
\bottomrule
\end{tabular}
}
\end{minipage}
\hfill
\begin{minipage}{0.58\textwidth}
\centering
\caption{Quantitative comparison for video generation.}\label{tab:video_quant}
\resizebox{\textwidth}{!}{
\begin{tabular}{c|ccc|ccc|ccc}
\toprule
 Methods & \multicolumn{3}{c|}{CLIP Score $\uparrow$} & \multicolumn{3}{c|}{Flow Magnitude $\uparrow$} & \multicolumn{3}{c}{CLIP Consistency $\uparrow$} \\ \cmidrule(r){2-10}
 & 1-Step & 2-Step & 4-Step & 1-Step & 2-Step & 4-Step & 1-Step & 2-Step & 4-Step \\ \midrule
DDIM~\cite{ddim} & 4.44 & 7.09 & 23.05 & - & - & 1.47 & - & - & 0.877 \\ 
DPM~\cite{dpmsolver} & 11.21 & 17.93 & 28.57 & - & - & 1.95 & - & - & 0.947 \\ 
AnimateLCM~\cite{wang2024animatelcm} & 25.41 & 29.39 & 30.62 & 1.10 & 1.81 & 2.40 & \textbf{0.967} & 0.957 & 0.965 \\ \midrule
Ours & \textbf{29.88} &  \textbf{30.22} & \textbf{30.72} & \textbf{4.56} & \textbf{4.38} & \textbf{4.69} & \underline{0.956} & \textbf{0.962} & \textbf{0.968} \\ 
\bottomrule
\end{tabular}
}
\end{minipage}
\vspace{-2mm}
\end{table}

\begin{table}[!t]
\centering
\vspace{-2mm}
\caption{One-step and two-step generation comparison on Stable Diffusion XL.}
\label{table:sdxl_comparison}
\resizebox{0.98\textwidth}{!}{
\begin{tabular}{c|cccc|cccc|c}
\toprule
\multirow{2}{*}{\parbox[c]{2cm}{\centering METHODS}} & \multicolumn{4}{c|}{COCO-30K~(one-step)} & \multicolumn{4}{c|}{CC12M-30K~(two-step)} & \multirow{2}{*}{\parbox[c]{2cm}{\centering CONSISTENCY~$\uparrow$}} \\ \cmidrule(r){2-9}
                         & FID~$\downarrow$  & FID-CLIP~$\downarrow$ & FID-SD~$\downarrow$ & CLIP SCORE~$\uparrow$ & FID~$\downarrow$  & FID-CLIP~$\downarrow$ & FID-SD~$\downarrow$ & CLIP SCORE~$\uparrow$ \\ \midrule
SDXL-Turbo~($512\times 512$)~\cite{sdturbo}               & \underline{19.84} & \underline{13.56} & 9.40 & \textbf{32.31} & \textbf{15.36} & \textbf{5.26} & \textbf{6.53} & \textbf{27.91} & 0.74\\ 
SDXL-Lightning~\cite{sdxl-lightning}           & \textbf{19.73} & 13.33 & \textbf{9.11} & 30.81 & 17.99 & 7.39 & 7.29 & 26.31 & 0.76\\ 
SDXL-LCM~\cite{lcm}                      & 74.65 & 31.63 & 74.46 & 27.29 & 25.88 & 10.36 & 19.64 & 25.84 & 0.66\\ 
SDXL-CTM~\cite{ctm}                      & 82.14 & 37.43 & 88.20 & 26.48 & 32.05 & 12.50 & 24.06 & 24.79 &0.66 \\ \midrule
Ours                & 21.23 & 13.66 & \underline{9.32} & \underline{31.55} & \underline{17.87} & \underline{5.67} & \underline{7.02} & \underline{27.10} & \textbf{0.83}\\ 
\bottomrule
\end{tabular}}
\end{table}

\noindent \textbf{Human evaluation metrics.} To more comprehensively reflect the performance of phased consistency models, we conduct a thorough evaluation using human aesthetic preference metrics, encompassing 1--16 steps. This assessment employs well-regarded metrics, including HPSv2~(HPS)~\cite{wu2023human}, PickScore~(PICKSCORE)~\cite{kirstain2023pick}, and Laion Aesthetic Score~(AES)~\cite{schuhmann2022laion}, to benchmark our method against all comparative baselines. As shown in Table~\ref{tab:aes-15} and Table~\ref{tab:aes-xl}, across all evaluated settings, our method consistently achieves either superior or comparable results, with a marked performance advantage over the consistency model baseline LCM, demonstrating its robustness and appeal across diverse human-centric evaluation criteria. We conduct a human preference ablation study on the proposed adversarial consistency loss, with the results presented in Table~\ref{tab:aes-ablation}. The inclusion of adversarial consistency loss consistently enhances human evaluation metrics across different inference steps.

\begin{table}[h!]
\vspace{-4mm}
\begin{minipage}{0.45\textwidth}
    \centering
    \caption{Aesthetic evaluation on SD v1-5.}\label{tab:aes-15}
    \resizebox{\textwidth}{!}{
    \begin{tabular}{ccccc}
        \toprule
        Steps & Methods & HPS & AES & PICKSCORE \\
        \midrule 
        \multirow{5}{*}{1} 
        & InstaFlow & 0.267 & 5.010 & 0.207 \\
        & SD-Turbo & 0.276 (1) & 5.445 (1) & 0.223 (1) \\
        & CTM & 0.240 & 5.155 & 0.195 \\
        & LCM & 0.251 & 5.178 & 0.201 \\
        & Ours & 0.276 (1) & 5.389 (2) & 0.213 (2) \\
        \midrule
        \multirow{5}{*}{2} 
        & InstaFlow & 0.249 & 5.050 & 0.196 \\
        & SD-Turbo & 0.278 (1) & 5.570 (1) & 0.226 (1) \\
        & CTM & 0.267 & 5.117 & 0.208 \\
        & LCM & 0.266 & 5.135 & 0.210 \\
        & Ours & 0.275 (2) & 5.370 (2) & 0.217 (2) \\
        \midrule
        \multirow{5}{*}{4} 
        & InstaFlow & 0.243 & 4.765 & 0.192 \\
        & SD-Turbo & 0.278 (2) & 5.537 (1) & 0.224 (1) \\
        & CTM & 0.274 & 5.189 & 0.213 \\
        & LCM & 0.273 & 5.264 & 0.215 \\
        & Ours & 0.279 (1) & 5.412 (2) & 0.217 (2) \\
        \midrule
        \multirow{5}{*}{8} 
        & InstaFlow & 0.267 & 4.548 & 0.189 \\
        & SD-Turbo & 0.276 (2) & 5.390 (2) & 0.221 (1) \\
        & CTM & 0.271 & 5.026 & 0.210 \\
        & LCM & 0.274 & 5.366 & 0.216 \\
        & Ours & 0.278 (1) & 5.398 (1) & 0.218 (2) \\
        \midrule
        \multirow{5}{*}{16} 
        & InstaFlow & 0.237 & 4.437 & 0.187 \\
        & SD-Turbo & 0.277 (1) & 5.275 & 0.219 (1) \\
        & CTM & 0.270 & 4.870 & 0.209 \\
        & LCM & 0.274 & 5.352 (2) & 0.216 \\
        & Ours & 0.277 (1) & 5.442 (1) & 0.217 (2) \\
        \bottomrule
    \end{tabular}}
\end{minipage}%
\begin{minipage}{0.48\textwidth}
    \centering
    \caption{Aesthetic evaluation on SDXL.}\label{tab:aes-xl}
    \resizebox{\textwidth}{!}{ %
    \begin{tabular}{ccccc}
        \toprule
        Steps & Methods & HPS & AES & PICKSCORE \\
        \midrule
        \multirow{5}{*}{1} 
        & SDXL-Lightning & 0.278 & 5.65 (1) & 0.223 \\
        & SDXL-Turbo & 0.279 (1) & 5.40 & 0.228 (1) \\
        & SDXL-CTM & 0.239 & 4.86 & 0.201 \\
        & SDXL-LCM & 0.205 & 5.04 & 0.206 \\
        & Ours & 0.280 (1) & 5.62 (2) & 0.225 (2) \\
        \midrule
        \multirow{5}{*}{2} 
        & SDXL-Lightning & 0.280 & 5.72 (1) & 0.227 (1) \\
        & SDXL-Turbo & 0.281 (2) & 5.46 & 0.226 (2) \\
        & SDXL-CTM & 0.267 & 5.58 & 0.216 \\
        & SDXL-LCM & 0.265 & 5.40 & 0.217 \\
        & Ours & 0.282 (1) & 5.688 (2) & 0.225 \\
        \midrule
        \multirow{5}{*}{4} 
        & SDXL-Lightning & 0.281 & 5.76 (2) & 0.228 (1) \\
        & SDXL-Turbo & 0.284 (1) & 5.49 & 0.224 \\
        & SDXL-CTM & 0.278 & 5.84 (1) & 0.221 \\
        & SDXL-LCM & 0.274 & 5.48 & 0.223 \\
        & Ours & 0.284 (1) & 5.645 & 0.228 (2) \\
        \midrule
        \multirow{5}{*}{8} 
        & SDXL-Lightning & 0.282 & 5.75 (2) & 0.229 (1) \\
        & SDXL-Turbo & 0.283 (2) & 5.59 & 0.225 \\
        & SDXL-CTM & 0.276 & 5.88 (1) & 0.218 \\
        & SDXL-LCM & 0.277 & 5.57 & 0.223 \\
        & Ours & 0.285 (1) & 5.676 & 0.229 (2) \\
        \midrule
        \multirow{5}{*}{16} 
        & SDXL-Lightning & 0.280 (2) & 5.72 (2) & 0.225 (2) \\
        & SDXL-Turbo & 0.277 & 5.56 & 0.219 \\
        & SDXL-CTM & 0.274 & 5.85 (1) & 0.215 \\
        & SDXL-LCM & 0.276 & 5.64 & 0.221 \\
        & Ours & 0.284 (1) & 5.646 & 0.228 (1) \\
        \bottomrule
    \end{tabular}}
\end{minipage}
\end{table}

\begin{table}[h!]
\centering
\vspace{-4mm}
\caption{Aesthetic ablation study on the adversarial consistency loss.}~\label{tab:aes-ablation}
\resizebox{0.95\textwidth}{!}{ 
\begin{tabular}{cccccccccccccccc}
\toprule
\multirow{1}{*}{Methods} & \multicolumn{3}{c}{Step 1} & \multicolumn{3}{c}{Step 2} & \multicolumn{3}{c}{Step 4} & \multicolumn{3}{c}{Step 8} & \multicolumn{3}{c}{Step 16} \\
\cmidrule(lr){2-4} \cmidrule(lr){5-7} \cmidrule(lr){8-10} \cmidrule(lr){11-13} \cmidrule(lr){14-16} 
& HPS & AES & PICKSCORE & HPS & AES & PICKSCORE & HPS & AES & PICKSCORE & HPS & AES & PICKSCORE & HPS & AES & PICKSCORE \\
\midrule
PCM w/ adv & 0.280 & 5.620 & 0.225 & 0.282 & 5.688 & 0.225 & 0.284 & 5.645 & 0.228 & 0.285 & 5.676 & 0.229 & 0.284 & 5.646 & 0.228 \\
PCM w/o adv & 0.251 & 4.994 & 0.206 & 0.275 & 5.502 & 0.220 & 0.281 & 5.576 & 0.225 & 0.283 & 5.637 & 0.227 & 0.283 & 5.620 & 0.227 \\
\bottomrule
\end{tabular}}
\end{table}

\subsection{Ablation Study}

\noindent \textbf{Sensitivity to negative prompt.} To show the comparison of sensitivity to negative prompt between our model tuned without CFG-augmented ODE solver and LCM. We provide an example of a prompt and negative prompt to GPT-4o and ask it to generate 100 pairs of prompts and their corresponding negative prompts. For each prompt, we generate 10 images. We first generate images without using the negative prompt to show the positive prompt alignment comparison.  Then we generate images with positive and negative prompts. We compute the CLIP score of generated images and the prompts and negative prompts. Fig.~\ref{fig:sensitiveness_neg} shows that we not only achieve better prompt alignment but are much more sensitive to negative prompts.

\noindent \textbf{Consistent generation ability.} Consistent generation ability under different inference steps is valuable in practice for multistep refinement. We compute the average CLIP similarity between the 1-step generation and the 16-step generation for each method. As shown in the rightmost column of Table~\ref{table:sd_comparison} and Table~\ref{table:sdxl_comparison}, our method achieves significantly better consistent generation ability.

\noindent \textbf{Adversarial consistency design and its effectiveness.} We show the ablation study on the adversarial consistency loss design and its effectiveness. \textbf{From the architecture level of discriminator}, we compare the latent discriminator shared from the teacher diffusion model and the pixel discriminator from pre-trained DINO~\cite{dino}. Note that DINO is trained with 224 resolutions, therefore we should resize the generation results and feed them into DINO. We find this could make the generation results fail at details as shown in the left of Fig.~\ref{fig:adv}. \textbf{From the adversarial loss}, we compare our adversarial loss to the normal GAN loss. We find normal GAN loss causes the training to be unstable and corrupts the generation results, which aligns with our previous analysis. \textbf{For its effectiveness}, we compare the FID-CLIP and FID scores with the adversarial loss or without the adversarial loss under different inference steps. Fig.~\ref{fig:effectiveness_adv} shows that it greatly improves the FID scores in the low-step regime and gradually coverage to similar performance of our model without using the adversarial loss as the step increases.

\begin{figure}[!t]
    \centering
    \begin{minipage}[t]{0.31\textwidth}
        \centering
        \includegraphics[width=\textwidth]{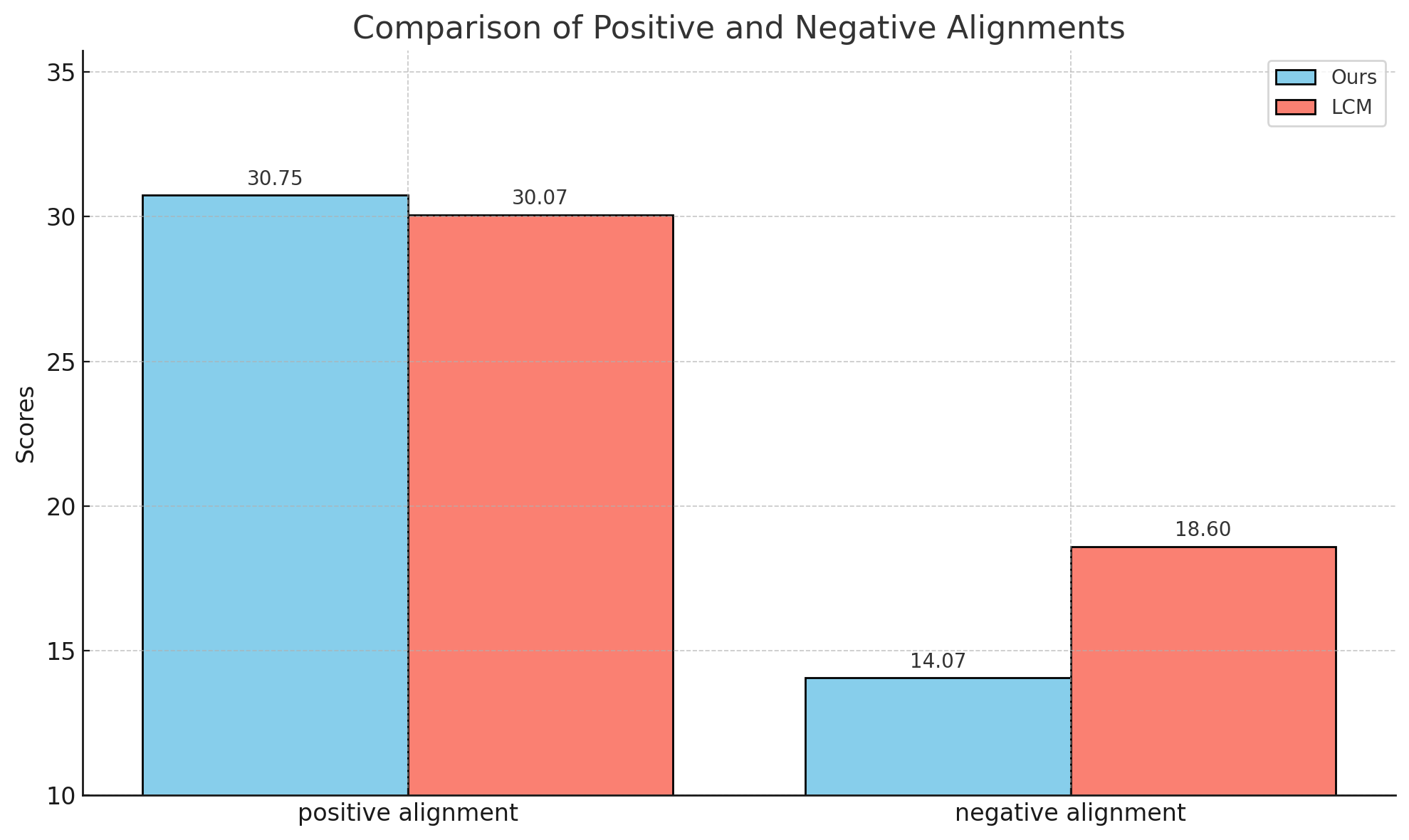}
        \vspace{-2mm}
        \caption{Sensitivity to negative prompt.}
        \label{fig:sensitiveness_neg}
    \end{minipage}%
    \hfill
    \begin{minipage}[t]{0.31\textwidth}
        \centering
        \includegraphics[width=\textwidth]{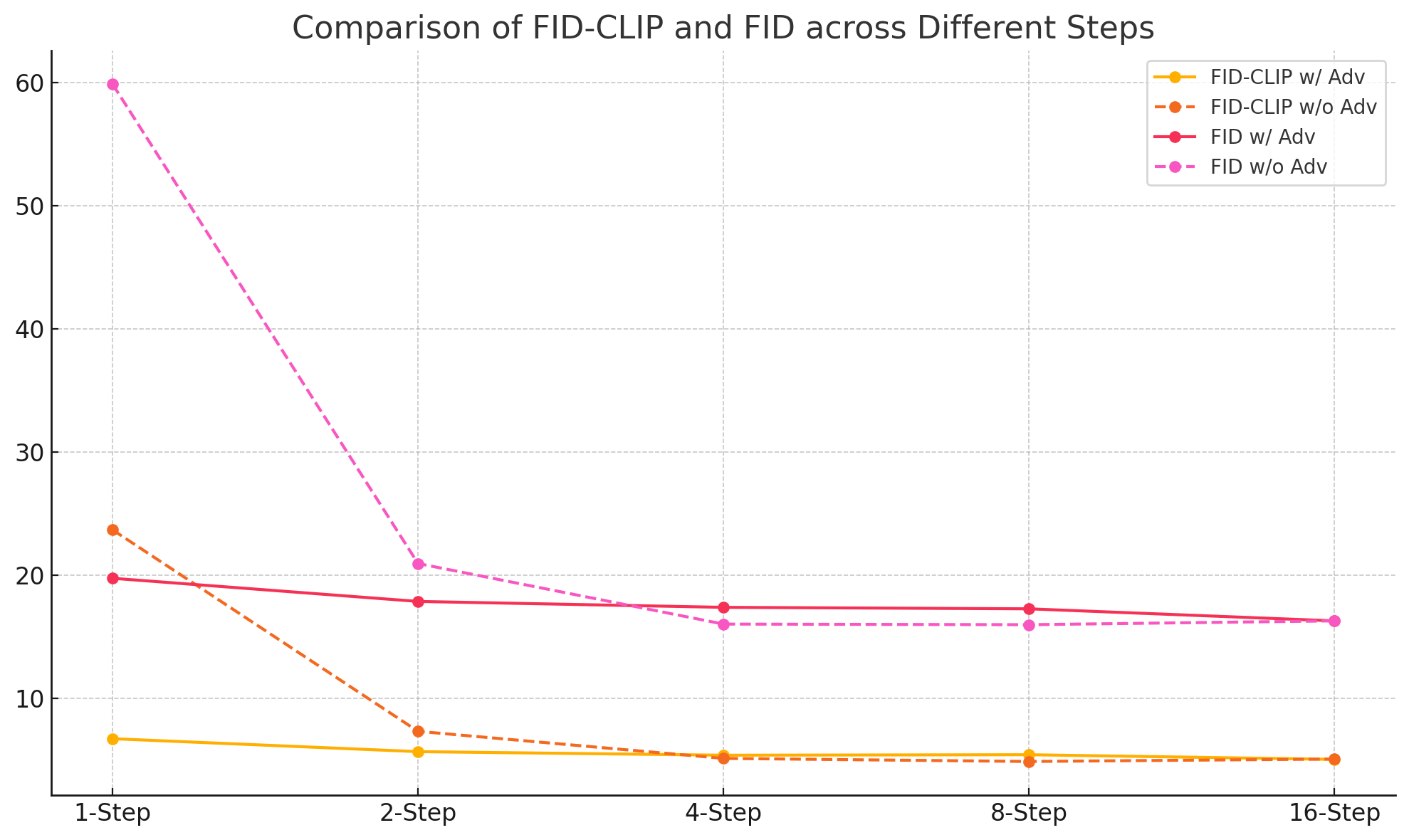}
        \vspace{-2mm}
        \caption{Effectiveness of adversarial consistency loss.}
        \label{fig:effectiveness_adv}
    \end{minipage}
    \hfill
    \begin{minipage}[t]{0.31\textwidth}
        \centering
    \includegraphics[width=\textwidth]{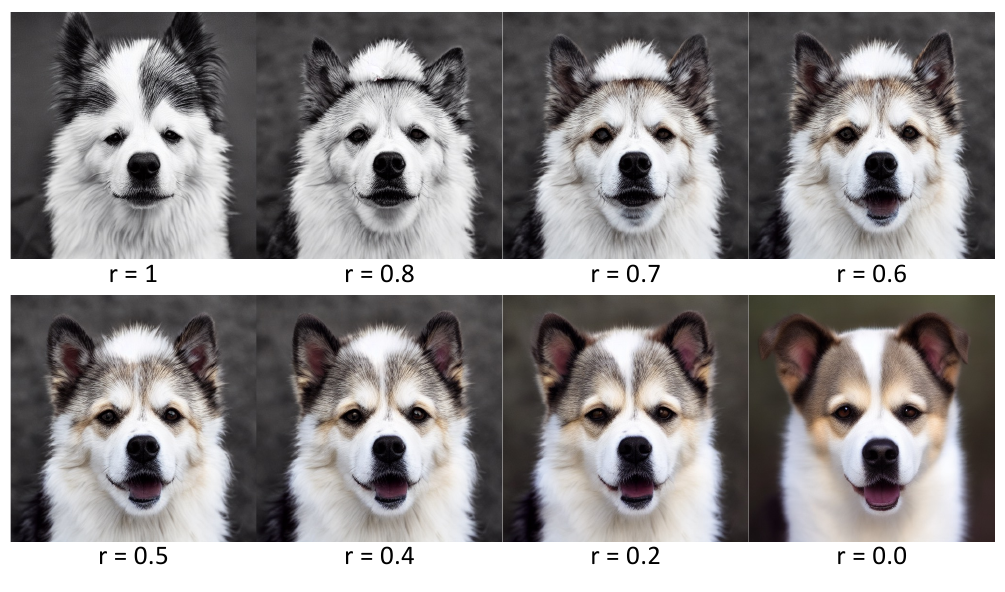}
        \vspace{-2mm}
        \caption{Randomness for sampling.}
\label{fig:effectiveness_random}
    \end{minipage}
    \vspace{-2mm}
\end{figure}

\begin{figure}[!t]
    \centering
\includegraphics[width=0.96\textwidth]{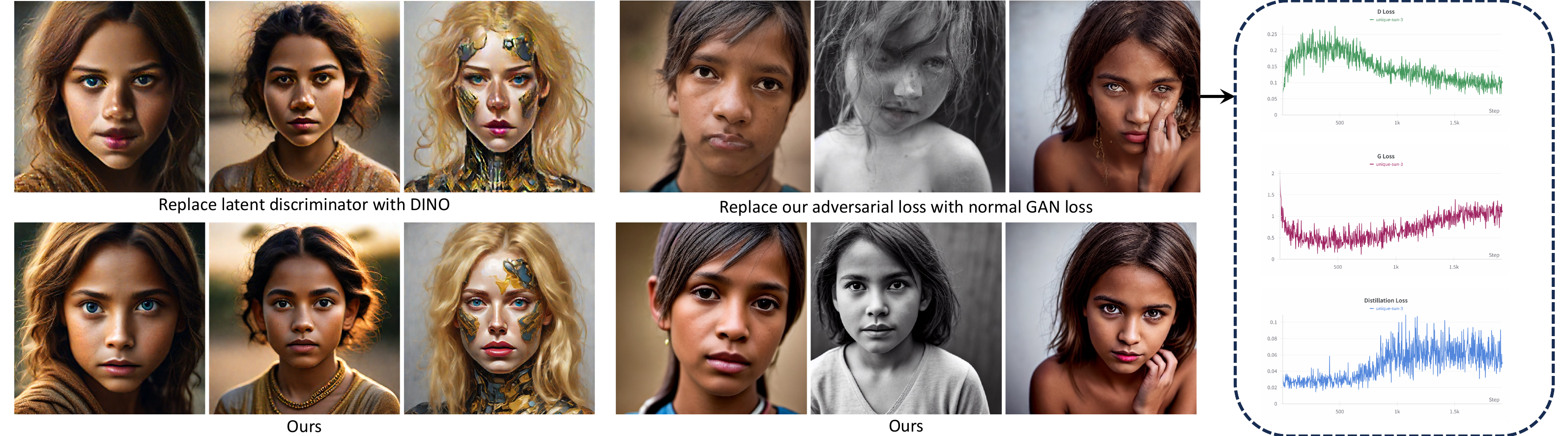}
\vspace{-2mm}
    \caption{\textbf{Ablation study on the adversarial consistency design.} (Left)~Replacing latent discriminator with DINO causes detail loss. (Right)~Replacing our adversarial loss with normal GAN loss causes training conflicted objectives and instability. }
    \label{fig:adv}
    \vspace{-6mm}
\end{figure}

\noindent \textbf{Randomness for sampling.} Fig.~\ref{fig:effectiveness_random} illustrates the influence of the randomness introduced in sampling as Eq.~\ref{eq:inference}. The figure shows that introducing a certain of randomness in sampling may help to alleviate unrealistic objects or shapes. 
 
\section{\label{sec:limit}Limitations and Conclusions}
\vspace{-2mm}
Despite being able to generate high-quality images and videos in a few steps, we find that when the number of steps is very low, especially with only one step, the generation quality is unstable. The model may produce structural errors or blurry images. Fortunately, we discover that this phenomenon can be mitigated through multi-step refinement. In conclusion, in this paper, we observe the defects in latent consistency models. We summarize these defects on three levels, analyze their causes, and generalize the design framework to address these defects.

\section{Acknowledgements}
This project is funded in part by National Key R\&D Program of China Project 2022ZD0161100, by the Centre for Perceptual and Interactive Intelligence (CPII) Ltd under the Innovation and Technology Commission (ITC)’s InnoHK, by General Research Fund of Hong Kong RGC Project 14204021. Hongsheng Li is a PI of CPII under the InnoHK.

\bibliographystyle{plain} 

\bibliography{main}

\appendix
\clearpage

\setcounter{page}{1}
\section*{\Huge Appendix}

\startcontents[appendices]
\printcontents[appendices]{l}{1}{\setcounter{tocdepth}{2}}

\renewcommand{\thesection}{\Roman{section}}

\section{Related Works}

\subsection{Diffusion Models}

Diffusion models~\cite{ddpm,sde,edm,zhao2024epsilon} have gradually become the dominant foundation models in image synthesis. Many works have greatly explored the nature of diffusion models~\cite{flowmathcing,riemannianflowmathcing,sde,vdm} and generalize/improve the design space of diffusion models~\cite{ddim,edm,vdm}. Some works explore the model architecture for diffusion models~\cite{diffusionbeatgan,dit,gao2024lumina,zhuo2024lumina}. Some works scale up the diffusion models for text-conditioned synthesis or real-world applications~\cite{sd,sdxl,shi2024motion,wang2025your,jiang2024comat,ma2024exploringrolelargelanguage,xue2024raphael,Hu_2024_CVPR,wang2023gen}. Some works explore the sampling acceleration methods, including scheduler-level~\cite{edm,dpmsolver,ddim} or training-level~\cite{guideddistillation,cm}. The formal ones are basically to explore better approximation of the PF-ODE~\cite{dpmsolver,ddim}.  The latter are mainly distillation methods~\cite{guideddistillation,progressivedistillation,liu2022flow,wang2024rectified,yin2024one,geng2024one} or initializing diffusion weights for GAN training~\cite{sdturbo,sdxl-lightning,novack2024presto,kang2024distilling}.

\subsection{Consistency Models}
The consistency model is a new family of generative models~\cite{geng2024consistency,cm,ict,wang2024stable,lee2024truncated,lu2024simplifying} supporting fast and high quality generation. It can be trained either by distillation or direct training without teacher models. Improved techniques even allow consistency training excelling the performance of diffusion training~\cite{ict}.  Consistency trajectory model~\cite{ctm} proposes to learn the trajectory consistency, providing a more flexible framework. Some works combine~\cite{act} consistency training with GAN~\cite{act} for better training efficiency. Some works adopt continuous consistency models and achieve excelling performance~\cite{geng2024consistency,wang2024stable,lu2024simplifying}.  Some works apply the idea of consistency model to language model~\cite{cllm} and policy learning~\cite{cmpolicy,lu2024manicm}.  Some works extends the application scope of consistency models for text-conditioned image generation~\cite{lcm, ren2024hyper,tcd} and text-conditioned video generation~\cite{wang2024animatelcm,mao2024osv,zhai2024motion,wang2024target}.

We notice that a recent work multistep consistency models~\cite{multistepcm}  also proposes to splitting the ODE trajectory into multi-parts for consistency learning, and here we hope to credit their work for the valuable exploration. However, our work is principally different from multistep consistency models. Since they do not open-source their code and weights, we clarify the difference between our work and multistep consistency models based on the details of their technical report. Firstly, they didn't provide explicit model definitions and boundary conditions nor theoretically show the error bound to prove the soundness of their work.  Yet in our work, we believe we have given an explicit definition of important components and theoretically shown the soundness of important techniques applied. For example, they claimed using DDIM for training and inference, while in our work, we have shown that although we can also optionally parameterize as the DDIM format, there's an intrinsic difference between that parameterization and DDIM (i.e., the difference between exact solution learning and first-order approximation). The aDDIM and invDDIM as highlighted in their pseudo code have no relation to PCM. Secondly, they are mainly for the unconditional or class-conditional generation, yet we aim for text-conditional generation in large models and dive into the influence of classifier-free guidance in consistency distillation. Besides, we  introduce an adversarial loss that aligns well with consistency learning and  improves the generation results in few-step settings. We recommend readers to read their work for better comparison.

\section{\label{sec:proofs}Proofs}

\subsection{Phased Consistency Distillation}
The following is an extension of the original proof of consistency distillation for phased consistency distillation.
\begin{theorem}\label{th:pcm_single}
    For arbitrary sub-trajectory $\left[s_m, s_{m+1}\right]$.   let $\Delta t_{m} := \max_{t_{n}, t_{n+1} \in [s_m, s_{m+1}]}\{|t_{n+1}-t_{n}|\}$, and $\boldsymbol f^m(\cdot,\cdot ;\boldsymbol \phi)$ be the target phased consistency function induced by the pre-trained diffusion model (empirical PF-ODE). Assume the $\boldsymbol f^{m}_{\boldsymbol \theta}$ is $L$-Lipschitz and the ODE solver has local error uniformly bounded by $\mathcal O((t_{n+1}-t_{n})^{p+1})$ with $p\ge 1$. Then, if $\mathcal L_\text{PCM}(\boldsymbol \theta, \boldsymbol \theta; \boldsymbol \phi) = 0$, we have 
    \begin{align*}
        \sup_{t_{n}, t_{n+1} \in [s_m, s_{m+1}], \mathbf x} {\| \boldsymbol f^m_{\boldsymbol \theta} (\mathbf x, t_{n}) - \boldsymbol f^m(\mathbf x, t_{n}) \|} = \mathcal O((\Delta t_{m})^p).
    \end{align*}
\end{theorem}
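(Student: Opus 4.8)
The plan is to adapt the induction argument behind the original consistency distillation bound~\cite{cm} to the sub-trajectory setting, anchoring the induction at the boundary timestep $s_m$ and propagating outward toward $s_{m+1}$. First I would fix the sub-trajectory $[s_m, s_{m+1}]$ and write its discrete timesteps as $t_{n_0} = s_m < t_{n_0+1} < \dots < t_{n_0+K_m} = s_{m+1}$, and fix an arbitrary empirical PF-ODE trajectory $\{\mathbf x_t\}$ restricted to this interval, with solution point $\mathbf x_{s_m}$. Define the pointwise error $\mathbf e_n := \boldsymbol f^m_{\boldsymbol\theta}(\mathbf x_{t_n}, t_n) - \boldsymbol f^m(\mathbf x_{t_n}, t_n)$. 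Since the target consistency function $\boldsymbol f^m(\cdot,\cdot;\boldsymbol\phi)$ is exact along the empirical PF-ODE, $\boldsymbol f^m(\mathbf x_{t_n}, t_n) = \mathbf x_{s_m}$ for every $t_n$ in the sub-trajectory, so it is enough to bound $\|\boldsymbol f^m_{\boldsymbol\theta}(\mathbf x_{t_n}, t_n) - \mathbf x_{s_m}\|$ uniformly.

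For the base case, the parameterization of Eq.~\ref{eq:parameterization} gives $F_{\boldsymbol\theta}(\mathbf x_{s_m}, s_m, s_m) = \mathbf x_{s_m}$, i.e. $\boldsymbol f^m_{\boldsymbol\theta}(\mathbf x_{s_m}, s_m) = \mathbf x_{s_m}$, hence $\mathbf e_{n_0} = \boldsymbol{0}$. For the inductive step, I would use that $\mathcal L_{\text{PCM}}(\boldsymbol\theta,\boldsymbol\theta;\boldsymbol\phi) = 0$ together with $\lambda(\cdot) > 0$ and $d$ being a genuine metric to deduce that the integrand vanishes on the support, so $\boldsymbol f^m_{\boldsymbol\theta}(\mathbf x_{t_{n+1}}, t_{n+1}) = \boldsymbol f^m_{\boldsymbol\theta}(\hat{\mathbf x}^{\boldsymbol\phi}_{t_n}, t_n)$ with $\hat{\mathbf x}^{\boldsymbol\phi}_{t_n} = \Phi(\mathbf x_{t_{n+1}}, t_{n+1}, t_n; \boldsymbol\phi)$. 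Writing $\mathbf e_{n+1} = [\,\boldsymbol f^m_{\boldsymbol\theta}(\hat{\mathbf x}^{\boldsymbol\phi}_{t_n}, t_n) - \boldsymbol f^m_{\boldsymbol\theta}(\mathbf x_{t_n}, t_n)\,] + [\,\boldsymbol f^m_{\boldsymbol\theta}(\mathbf x_{t_n}, t_n) - \mathbf x_{s_m}\,]$, I bound the first bracket by $L\,\|\hat{\mathbf x}^{\boldsymbol\phi}_{t_n} - \mathbf x_{t_n}\| = L\cdot \mathcal O((t_{n+1} - t_n)^{p+1})$ using $L$-Lipschitzness of $\boldsymbol f^m_{\boldsymbol\theta}$ and the local-error hypothesis on the solver, and identify the second bracket as $\mathbf e_n$. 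This yields the recursion $\|\mathbf e_{n+1}\| \le \|\mathbf e_n\| + L\cdot \mathcal O((t_{n+1} - t_n)^{p+1})$.

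Unrolling from $\mathbf e_{n_0} = \boldsymbol{0}$ gives $\|\mathbf e_n\| \le L\sum_j \mathcal O((t_{j+1} - t_j)^{p+1})$, the sum running over the steps between $s_m$ and $t_n$. The key algebraic estimate is $\sum_j (t_{j+1} - t_j)^{p+1} \le (\Delta t_m)^p \sum_j (t_{j+1} - t_j) = (\Delta t_m)^p (s_{m+1} - s_m) \le T(\Delta t_m)^p$, so $\|\mathbf e_n\| = \mathcal O((\Delta t_m)^p)$ with a constant independent of $n$ and of the chosen trajectory. Taking the supremum over $t_n \in [s_m, s_{m+1}]$ and over $\mathbf x$ (equivalently, over all trajectories passing through the sub-trajectory) gives the stated bound.

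I expect the obstacles to be bookkeeping rather than conceptual. The three points to watch are: (i) keeping the induction anchored at $s_m$ and invoking the sub-trajectory boundary condition, not the global one at $\epsilon$ — this is where the phased structure differs from~\cite{cm}; (ii) the summation trick that converts per-step $\mathcal O((\cdot)^{p+1})$ local errors into a single $\mathcal O((\Delta t_m)^p)$ bound, which crucially uses that the sub-trajectory has fixed length $s_{m+1} - s_m \le T$ regardless of how fine the discretization is; and (iii) a mild technical caveat, already implicit in~\cite{cm}, that $\mathcal L_{\text{PCM}} = 0$ only pins down $\boldsymbol f^m_{\boldsymbol\theta}$ on the support of the intermediate distributions and on the solver iterates $\hat{\mathbf x}^{\boldsymbol\phi}_{t_n}$ near it, with the $L$-Lipschitz assumption being exactly what licenses evaluating $\boldsymbol f^m_{\boldsymbol\theta}$ at those iterates. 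I would record (iii) as a standing assumption rather than belabor it.
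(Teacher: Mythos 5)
Your proposal is correct and follows essentially the same argument as the paper's proof: the same error recursion $\|\mathbf e_{n+1}\|\le\|\mathbf e_n\|+L\cdot\mathcal O((t_{n+1}-t_n)^{p+1})$ obtained from $\mathcal L_{\text{PCM}}=0$ plus Lipschitzness, anchored at the sub-trajectory boundary condition $\mathbf e_{s_m}=0$, and closed by the summation estimate $\sum_j(t_{j+1}-t_j)^{p+1}\le(\Delta t_m)^p(s_{m+1}-s_m)$. The technical caveats you flag (notably that $\mathcal L_{\text{PCM}}=0$ only constrains $\boldsymbol f^m_{\boldsymbol\theta}$ on the relevant supports) are left implicit in the paper but do not change the argument.
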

\begin{proof}
From the condition $\mathcal L_\text{PCM}(\boldsymbol{\theta}, \boldsymbol{\theta}; \boldsymbol{\phi}) = 0$, 
for any $\mathbf x_{t_{n}}$ and $t_{n}, t_{n+1} \in [s_m, s_{m+1}]$, we have 
\begin{align}
    \boldsymbol f^m_{\boldsymbol \theta} (\mathbf x_{t_{n+1}, t_{n+1}}) \equiv \boldsymbol f^m_{\boldsymbol \theta} (\hat{\boldsymbol x}_{t_{n}}^{\boldsymbol \phi}, t_n) 
\end{align}
Denote $\boldsymbol e^m_{n} := \boldsymbol{f}^m_{\boldsymbol \theta} (\mathbf x_{t_n},t_{n}) - \boldsymbol{f}^m
(\mathbf x_{t_n}, t_{n}; \boldsymbol{\phi})$,
we have
\begin{equation}
    \begin{split}
        \boldsymbol e^m_{n+1} & = \boldsymbol{f}^m_{\boldsymbol \theta} (\mathbf x_{t_{n+1}},t_{n+1}) - \boldsymbol{f}^m
(\mathbf x_{t_{n+1}}, t_{n+1}; \boldsymbol{\phi}) \\
& = \boldsymbol{f}^m_{\boldsymbol{\theta}}(\hat{\mathbf x}_{t_{n}}^{\boldsymbol \phi}, t_{n}) - \boldsymbol f^m(\mathbf x_{t_{n}}, t_{n}; \boldsymbol \phi) \\
& = \boldsymbol{f}^m_{\boldsymbol{\theta}}(\hat{\mathbf x}_{t_{n}}^{\boldsymbol \phi}, t_{n})- \boldsymbol{f}^m_{\boldsymbol{\theta}}(\mathbf x_{t_{n}}, t_{n}) + \boldsymbol{f}^m_{\boldsymbol{\theta}}(\mathbf x_{t_{n}}, t_{n}) - \boldsymbol f^m(\mathbf x_{t_{n}}, t_{n}; \boldsymbol \phi) \\
& = \boldsymbol{f}^m_{\boldsymbol{\theta}}(\hat{\mathbf x}_{t_{n}}^{\boldsymbol \phi}, t_{n})- \boldsymbol{f}^m_{\boldsymbol{\theta}}(\mathbf x_{t_{n}}, t_{n})  +\boldsymbol{e}^m_{n} \, .
    \end{split}
\end{equation}
Considering that $\boldsymbol f^m_{\boldsymbol{\theta}}$ is $L$-Lipschitz, we have
\begin{equation}
\begin{split}
    \| \boldsymbol e^m_{n+1} \|_2 &= \|\boldsymbol{e}^m_{n} + \boldsymbol{f}^m_{\boldsymbol{\theta}}(\hat{\mathbf x}_{t_{n}}^{\boldsymbol \phi}, t_{n})- \boldsymbol{f}^m_{\boldsymbol{\theta}}(\mathbf x_{t_{n}}, t_{n})\|_2   \\
    & \le \|\boldsymbol{e}^m_{n}\|_2 + \| \boldsymbol{f}^m_{\boldsymbol{\theta}}(\hat{\mathbf x}_{t_{n}}^{\boldsymbol \phi}, t_{n})- \boldsymbol{f}^m_{\boldsymbol{\theta}}(\mathbf x_{t_{n}}, t_{n})\|_2  \\
    & \le \|\boldsymbol{e}^m_{n}\|_2 + L\| \mathbf x_{t_{n}} - \mathbf x_{t_{n}}^{\boldsymbol \phi}\|_2 \\ 
    & = \|\boldsymbol{e}^m_{n}\|_2 + L \cdot \mathcal O((t_{n+1}-t_{n})^{p+1}) \\ 
    & \le \|\boldsymbol{e}^m_{n}\|_2 + L (t_{n+1}-t_{n}) \cdot \mathcal O((\Delta t_m)^{p})\, .
\end{split}
\end{equation}
Besides, due to the boundary condition, we have 
\begin{equation}
    \begin{split}
        \boldsymbol e_{s_m}^m & = \boldsymbol f^m_{\boldsymbol{\theta}} (\mathbf x_{s_m}, s_m) - \boldsymbol f^m(\mathbf x_{s_m}, s_m; \boldsymbol \phi) \\
        & = \mathbf x_{s_m} - \mathbf x_{s_m} \\
        & = 0
    \end{split} \, .
\end{equation}
Hence, we have 
\begin{equation}
    \begin{split}
        \| \boldsymbol e^m_{n+1}\|_2 &\le \| \boldsymbol e^m_{s_m}\|_2 + L\cdot \mathcal O((\Delta t)^p)\sum_{t_i, t_{i+1} \in [s_m, s_{m+1}]} t_{i+1} - t_{i} \\
        &= 0 + L \cdot \mathcal O((\Delta t)^p) \cdot (s_{m+1} - s_{m}) \\
        &= \mathcal O ((\Delta t_{m})^p)
    \end{split}
\end{equation}
\end{proof}

\begin{theorem}\label{th:pcm_multiple}
    For arbitrary set of sub-trajectories $\{\left[s_i, s_{i+1}\right]\}_{i=m}^{m'}$.   Let $\Delta t_{m} := \max_{t_{n}, t_{n+1} \in [s_m, s_{m+1}]}\{|t_{n+1}-t_{n}|\}$, $\Delta t_{m,m'} := \max_{i \in [m', m]} \{\Delta t_{i}\}$, and $\boldsymbol f^{m,m
    '}(\cdot,\cdot ;\boldsymbol \phi)$ be the target phased consistency function induced by the pre-trained diffusion model (empirical PF-ODE). Assume the $\boldsymbol f^{m,m'}_{\boldsymbol \theta}$ is $L$-Lipschitz and the ODE solver has local error uniformly bounded by $\mathcal O((t_{n+1}-t_{n})^{p+1})$ with $p\ge 1$. Then, if $\mathcal L_\text{PCM}(\boldsymbol \theta, \boldsymbol \theta; \boldsymbol \phi) = 0$, we have 
    \begin{align*}
        \sup_{t_{n} \in [s_m, s_{m+1}), \mathbf x} {\| \boldsymbol f^{m,m'}_{\boldsymbol \theta} (\mathbf x, t_{n}) - \boldsymbol f^{m,m'}(\mathbf x, t_{n}) \|} = \mathcal O((\Delta t_{m,m'})^p).
    \end{align*}
\end{theorem}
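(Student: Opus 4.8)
The plan is to bootstrap from Theorem~\ref{th:pcm_single} by unrolling the definition of the composed map $\boldsymbol f^{m,m'}$ and tracking how the per-sub-trajectory error propagates through each successive application. Recall that $\boldsymbol f^{m,m'}_{\boldsymbol\theta}(\mathbf x, t_n) = \boldsymbol f^{m'}_{\boldsymbol\theta}(\cdots \boldsymbol f^{m-1}_{\boldsymbol\theta}(\boldsymbol f^m_{\boldsymbol\theta}(\mathbf x, t_n), s_m) \cdots, s_{m'})$, and that the target $\boldsymbol f^{m,m'}(\cdot,\cdot;\boldsymbol\phi)$ is the same composition with each learned map $\boldsymbol f^i_{\boldsymbol\theta}$ replaced by the exact map $\boldsymbol f^i(\cdot,\cdot;\boldsymbol\phi)$, which sends solution points of the PF-ODE on sub-trajectory $i$ to the solution point $\mathbf x_{s_i}$. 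For $i = m, m-1, \dots, m'$, write $\hat{\mathbf x}^{(i)}$ for the partial composition obtained after applying the learned maps down through sub-trajectory $i$, write $\mathbf x^{(i)} = \mathbf x_{s_i}$ for the corresponding exact solution point on the ODE trajectory, and set $E_i := \| \hat{\mathbf x}^{(i)} - \mathbf x^{(i)}\|$. The goal is to bound $E_{m'}$.

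First I would record the base case: by Theorem~\ref{th:pcm_single} applied to the sub-trajectory $[s_m, s_{m+1}]$, $E_m = \| \boldsymbol f^m_{\boldsymbol\theta}(\mathbf x, t_n) - \boldsymbol f^m(\mathbf x, t_n;\boldsymbol\phi)\| = \mathcal O((\Delta t_m)^p) \le \mathcal O((\Delta t_{m,m'})^p)$. For the inductive step, I would pass from sub-trajectory $i$ to $i-1$ by inserting $\boldsymbol f^{i-1}_{\boldsymbol\theta}(\mathbf x^{(i)}, s_i)$ and applying the triangle inequality:
\begin{align*}
E_{i-1} = \big\| \boldsymbol f^{i-1}_{\boldsymbol\theta}(\hat{\mathbf x}^{(i)}, s_i) - \boldsymbol f^{i-1}(\mathbf x^{(i)}, s_i;\boldsymbol\phi) \big\|
&\le \big\| \boldsymbol f^{i-1}_{\boldsymbol\theta}(\hat{\mathbf x}^{(i)}, s_i) - \boldsymbol f^{i-1}_{\boldsymbol\theta}(\mathbf x^{(i)}, s_i) \big\| \\
&\quad + \big\| \boldsymbol f^{i-1}_{\boldsymbol\theta}(\mathbf x^{(i)}, s_i) - \boldsymbol f^{i-1}(\mathbf x^{(i)}, s_i;\boldsymbol\phi) \big\|.
\end{align*}
The first term is at most $L\, E_i$ by the $L$-Lipschitz property of $\boldsymbol f^{i-1}_{\boldsymbol\theta}$ (used in the same sense as in Theorem~\ref{th:pcm_single}); the second is at most $\mathcal O((\Delta t_{i-1})^p) \le \mathcal O((\Delta t_{m,m'})^p)$ by Theorem~\ref{th:pcm_single} applied to $[s_{i-1}, s_i]$ and evaluated at the (discrete, endpoint) timestep $s_i$, noting that $\boldsymbol f^{i-1}(\mathbf x^{(i)}, s_i;\boldsymbol\phi) = \mathbf x^{(i-1)}$ by the exactness of the target map. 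Hence $E_{i-1} \le L\, E_i + C\,(\Delta t_{m,m'})^p$ for an absolute constant $C$.

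Unrolling this recursion from $i = m$ down to $i = m'$ gives $E_{m'} \le \big(\sum_{j=0}^{\,m-m'} L^{j}\big) C\,(\Delta t_{m,m'})^p$. Since the trajectory is split into a fixed number $M$ of sub-trajectories independent of the discretization, $m - m' + 1 \le M$, so the geometric sum is bounded by the constant $\sum_{j=0}^{M-1} L^j$, and we conclude $E_{m'} = \mathcal O((\Delta t_{m,m'})^p)$ uniformly in the input $\mathbf x$ and in $t_n \in [s_m, s_{m+1})$. The main point requiring care is the bookkeeping around the endpoints: one must verify that each intermediate pair fed into $\boldsymbol f^{i-1}_{\boldsymbol\theta}$ is of a form that Theorem~\ref{th:pcm_single} controls — in particular that the second argument $s_i$ is a discretization timestep lying in $[s_{i-1}, s_i]$, so the single-sub-trajectory bound genuinely applies there — and that the Lipschitz constants of the learned maps compound over only the bounded number $M$ of stages, so no hidden dependence on $\Delta t$ enters the multiplicative constant. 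Everything else is a routine telescoping argument mirroring the proof of Theorem~\ref{th:pcm_single}.
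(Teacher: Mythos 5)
Your proof is correct and arrives at the paper's bound by the same underlying mechanism — propagate the per-sub-trajectory error through the composition via the Lipschitz property, and observe that only a bounded number $m-m'+1\le M$ of stages contribute, so the accumulated factors $L^{j}$ stay in the constant. The organization differs enough to be worth noting. You treat Theorem~\ref{th:pcm_single} as a black box and induct over composition stages at the edge timesteps, splitting each step into a Lipschitz-stability term $L\,E_i$ plus a fresh single-sub-trajectory error $\mathcal O((\Delta t_{i-1})^p)$ incurred at the endpoint $s_i$. The paper instead reruns the timestep-level recursion of Theorem~\ref{th:pcm_single} directly for the composed map $\boldsymbol f^{m,m'}_{\boldsymbol\theta}$ — peeling off the outer maps one Lipschitz bound at a time to get the factor $L^{m-m'+1}$ in front of the solver's local error — and then chains sub-trajectories through the boundary identity $\boldsymbol f^{m,m'}_{\boldsymbol\theta}(\mathbf x_{s_m}, s_m) = \boldsymbol f^{m-1,m'}_{\boldsymbol\theta}(\mathbf x_{s_m}, s_m)$, so that the error of $\boldsymbol f^{m,m'}$ at the left edge of one sub-trajectory becomes the error of $\boldsymbol f^{m-1,m'}$ at the right edge of the next. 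Your version is more modular and makes the role of Theorem~\ref{th:pcm_single} explicit; the paper's is self-contained but essentially re-derives that theorem inline with an extra Lipschitz prefactor. The endpoint bookkeeping you flag is genuinely the only delicate point, and it checks out: the recursion in the proof of Theorem~\ref{th:pcm_single} bounds the error at every discretization timestep of the sub-trajectory up to and including the right endpoint $s_{i}$, which is exactly where your inductive step needs it.
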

\begin{proof}
From the condition $\mathcal L_\text{PCM}(\boldsymbol{\theta}, \boldsymbol{\theta}; \boldsymbol{\phi}) = 0$, 
for any $\mathbf x_{t_{n}}$ and $t_{n}, t
_{n+1} \in [s_m, s_{m+1}]$, we have 
\begin{align}
    \boldsymbol f^m_{\boldsymbol \theta} (\mathbf x_{t_{n+1}, t_{n+1}}) \equiv \boldsymbol f^m_{\boldsymbol \theta} (\hat{\boldsymbol x}_{t_{n}}^{\boldsymbol \phi}, t_n) 
\end{align}
Denote $\boldsymbol e^{m,m'}_{n} := \boldsymbol{f}^{m,m'}_{\boldsymbol \theta} (\mathbf x_{t_n},t_{n}) - \boldsymbol{f}^{m,m'}
(\mathbf x_{t_n}, t_{n}; \boldsymbol{\phi})$,
we have
\begin{equation}
    \begin{split}
        \boldsymbol e^{m,m'}_{n+1} & = \boldsymbol{f}^{m,m'}_{\boldsymbol \theta} (\mathbf x_{t_{n+1}},t_{n+1}) - \boldsymbol{f}^{m,m'}
(\mathbf x_{t_{n+1}}, t_{n+1}; \boldsymbol{\phi}) \\
& = \boldsymbol{f}^{m,m'}_{\boldsymbol{\theta}}(\hat{\mathbf x}_{t_{n}}^{\boldsymbol \phi}, t_{n}) - \boldsymbol f^{m,m'}(\mathbf x_{t_{n}}, t_{n}; \boldsymbol \phi) \\
& = \boldsymbol{f}^{m,m'}_{\boldsymbol{\theta}}(\hat{\mathbf x}_{t_{n}}^{\boldsymbol \phi}, t_{n})- \boldsymbol{f}^{m,m'}_{\boldsymbol{\theta}}(\mathbf x_{t_{n}}, t_{n}) + \boldsymbol{f}^{m,m'}_{\boldsymbol{\theta}}(\mathbf x_{t_{n}}, t_{n}) - \boldsymbol f^{m,m'}(\mathbf x_{t_{n}}, t_{n}; \boldsymbol \phi) \\
& = \boldsymbol{f}^{m,m'}_{\boldsymbol{\theta}}(\hat{\mathbf x}_{t_{n}}^{\boldsymbol \phi}, t_{n})- \boldsymbol{f}^{m,m'}_{\boldsymbol{\theta}}(\mathbf x_{t_{n}}, t_{n})  +\boldsymbol{e}^{m,m'}_{n} \, .
    \end{split}
\end{equation}
Considering that $\boldsymbol f^m_{\boldsymbol{\theta}}$ is $L$-Lipschitz, we have
\begin{equation}
\begin{split}
    \| \boldsymbol e^{m,m'}_{n+1} \|_2 &= \|\boldsymbol{e}^{m,m'}_{n} + \boldsymbol{f}^{m,m'}_{\boldsymbol{\theta}}(\hat{\mathbf x}_{t_{n}}^{\boldsymbol \phi}, t_{n})- \boldsymbol{f}^{m,m'}_{\boldsymbol{\theta}}(\mathbf x_{t_{n}}, t_{n})\|_2   \\
    & \le \|\boldsymbol{e}^{m,m'}_{n}\|_2 + \| \boldsymbol{f}^{m,m'}_{\boldsymbol{\theta}}(\hat{\mathbf x}_{t_{n}}^{\boldsymbol \phi}, t_{n})- \boldsymbol{f}^{m,m'}_{\boldsymbol{\theta}}(\mathbf x_{t_{n}}, t_{n})\|_2  \\
    & \le \|\boldsymbol{e}^{m,m'}_{n}\|_2 + L \|  \boldsymbol{f}^{m,m'+1}_{\boldsymbol{\theta}}(\hat{\mathbf x}_{t_{n}}^{\boldsymbol \phi}, t_{n})- \boldsymbol{f}^{m,m'+1}_{\boldsymbol{\theta}}(\mathbf x_{t_{n}}, t_{n})\|_2 \\ 
    & \le \|\boldsymbol{e}^{m,m'}_{n}\|_2 + L^2 \|  \boldsymbol{f}^{m,m'+2}_{\boldsymbol{\theta}}(\hat{\mathbf x}_{t_{n}}^{\boldsymbol \phi}, t_{n})- \boldsymbol{f}^{m,m'
+2}_{\boldsymbol{\theta}}(\mathbf x_{t_{n}}, t_{n})\|_2 \\
    & \le \qquad \vdots \\
    & \le   \|\boldsymbol{e}^{m,m'}_{n}\|_2 + L^{m-m'} \|  \boldsymbol{f}^{m}_{\boldsymbol{\theta}}(\hat{\mathbf x}_{t_{n}}^{\boldsymbol \phi}, t_{n})- \boldsymbol{f}^{m}_{\boldsymbol{\theta}}(\mathbf x_{t_{n}}, t_{n})\|_2   \\
    & \le \|\boldsymbol{e}^{m,m'}_{n}\|_2 + L^{m-m'+1}\| \mathbf x_{t_{n}} - \mathbf x_{t_{n}}^{\boldsymbol \phi}\|_2 \\ 
    & = \|\boldsymbol{e}^{m,m'}_{n}\|_2 + L^{m-m'+1} \cdot \mathcal O((t_{n+1}-t_{n})^{p+1}) \\ 
    & \le \|\boldsymbol{e}^{m,m'}_{n}\|_2 + L^{m-m'+1} (t_{n+1}-t_{n}) \cdot \mathcal O((\Delta t_m)^{p})\, .
\end{split}
\end{equation}
Hence, we have
\begin{equation}
    \begin{split}
        \| \boldsymbol e^{m,m'}_{n+1}\|_2 &\le \| \boldsymbol e_{s_m}^{m,m'}\|_2 + L^{m-m'+1}\cdot \mathcal O((\Delta t)^p)\sum_{t_i, t_{i+1} \in [s_m, s_{m+1}]} t_{i+1} - t_{i}   \\
        &=  \| \boldsymbol e^{m,m'}_{s_{m}}\|_2 + L^{m-m'+1} \cdot \mathcal O((\Delta t)^p) \cdot (s_{m+1} - s_{m}) \\
        &= \| \boldsymbol e^{m,m'}_{s_m}\|_2 + \mathcal O ((\Delta t_{m})^p) \\
        & = \| \boldsymbol e^{m-1,m'}_{s_m}\|_2 + \mathcal O ((\Delta t_{m})^p) \\
    \end{split} \, ,
\end{equation} 
Where the last equation is due to the boundary condition $\boldsymbol f^{m,m'}_{\boldsymbol \theta}(\mathbf x_{s_m}, s_m)= \boldsymbol f_{\boldsymbol \theta}^{m-1, m'}(\boldsymbol f^m_{\boldsymbol \theta}(\mathbf x_{s_m}, s_{m}), s_{m}) = \mathbf f_{\boldsymbol \theta}^{m-1, m'}(\mathbf x_{s_m}, s_{m})$.

Thereby, we have 
\begin{equation}
    \begin{split}
        \| \boldsymbol e^{m,m'}_{n+1}\|_2  & \le \| \boldsymbol e^{m-1,m'}_{s_m}\|_2 + \mathcal O ((\Delta t_{m})^p)  \\
         & \le \| \boldsymbol e_{s_{m-1}}^{m-2,m'} \|_2  + \mathcal O ((\Delta t_{m})^p) + \mathcal O ((\Delta t_{m-1})^{p}) \\ 
         & \le \qquad \vdots \\
         & \le \| e_{s_{m'+1}}^{m'}\| + \sum_{i=m}^{m'+1} \mathcal O ((\Delta t_{i})^p) \\
         & \le \sum_{i=m}^{m'} \mathcal O ((\Delta t_{i})^p)  \\
         & \le (m-m'+1) \mathcal O ((\Delta t_{m,m'})^p) \\ 
         & = \mathcal O ((\Delta t_{m,m'})^p)
    \end{split}
\end{equation}
\end{proof}

\subsection{Parameterization Equivalence} 
We show that the parameterization of Equation~\ref{eq:parameterization} is equal to the DDIM inference format~\cite{dpmsolver,ddim}. 
\begin{theorem}\label{th:ddim}
Define $F_{\boldsymbol \theta}(\mathbf x_t, t, s) = \frac{\alpha_s}{\alpha_t} \mathbf x_{t} - \alpha_s \hat{\epsilon}_{\boldsymbol \theta}(\mathbf x_t, t)\int_{\lambda_t}^{\lambda_s} e^{-\lambda} \mathrm d\lambda$, then the parameterization has the same format of DDIM $\mathbf x_{s} = \alpha_s \left(\frac{\mathbf x_{t} - \sigma_t \hat{\boldsymbol \epsilon}_{\boldsymbol \theta}(\mathbf x_t, t)}{\alpha_{t}}\right)+ \sigma_s \hat{\boldsymbol \epsilon}_{\boldsymbol \theta}(\mathbf x_t, t)$.
\end{theorem}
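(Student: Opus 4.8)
The claim is a direct computation, so the plan is simply to evaluate the integral $\int_{\lambda_t}^{\lambda_s} e^{-\lambda}\,\mathrm d\lambda$ in closed form and then rewrite $F_{\boldsymbol\theta}$ accordingly. First I would apply the fundamental theorem of calculus to get $\int_{\lambda_t}^{\lambda_s} e^{-\lambda}\,\mathrm d\lambda = e^{-\lambda_t} - e^{-\lambda_s}$. Next, using the definition $\lambda_t = \ln(\alpha_t/\sigma_t)$ stated just after Eq.~\ref{eq:solver}, I would substitute $e^{-\lambda_t} = \sigma_t/\alpha_t$ and $e^{-\lambda_s} = \sigma_s/\alpha_s$, so that the integral equals $\sigma_t/\alpha_t - \sigma_s/\alpha_s$.

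Plugging this back into the definition of $F_{\boldsymbol\theta}(\mathbf x_t, t, s)$ gives
\[
F_{\boldsymbol\theta}(\mathbf x_t, t, s) = \frac{\alpha_s}{\alpha_t}\mathbf x_t - \alpha_s\,\hat{\boldsymbol\epsilon}_{\boldsymbol\theta}(\mathbf x_t, t)\left(\frac{\sigma_t}{\alpha_t} - \frac{\sigma_s}{\alpha_s}\right) = \frac{\alpha_s}{\alpha_t}\big(\mathbf x_t - \sigma_t\,\hat{\boldsymbol\epsilon}_{\boldsymbol\theta}(\mathbf x_t, t)\big) + \sigma_s\,\hat{\boldsymbol\epsilon}_{\boldsymbol\theta}(\mathbf x_t, t),
\]
which is exactly the DDIM update $\mathbf x_s = \alpha_s\big(\frac{\mathbf x_t - \sigma_t\hat{\boldsymbol\epsilon}_{\boldsymbol\theta}(\mathbf x_t,t)}{\alpha_t}\big) + \sigma_s\hat{\boldsymbol\epsilon}_{\boldsymbol\theta}(\mathbf x_t,t)$. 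The only bookkeeping step worth stating carefully is the sign in $-e^{-\lambda}\big|_{\lambda_t}^{\lambda_s}$ and the cancellation of $\alpha_s\sigma_s/\alpha_s = \sigma_s$ in the second term.

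There is no genuine obstacle here; the result is an algebraic identity and the ``hard part,'' if any, is purely presentational — making clear that this equivalence is why one may reuse an $\epsilon$-prediction network in the DDIM-style form while reinterpreting its learning target as the exponentially weighted average $\frac{\int_{\lambda_t}^{\lambda_s} e^{-\lambda}\boldsymbol\epsilon_{\boldsymbol\phi}\,\mathrm d\lambda}{\int_{\lambda_t}^{\lambda_s} e^{-\lambda}\,\mathrm d\lambda}$ rather than the instantaneous scaled score, as discussed in the paragraph following Eq.~\ref{eq:parameterization}.
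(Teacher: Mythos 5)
Your proposal is correct and matches the paper's own proof essentially line for line: both evaluate $\int_{\lambda_t}^{\lambda_s} e^{-\lambda}\,\mathrm d\lambda = e^{-\lambda_t}-e^{-\lambda_s}$, substitute $e^{-\lambda_t}=\sigma_t/\alpha_t$ from the definition of the log-SNR, and expand to recover the DDIM form. Nothing is missing.
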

\begin{proof}
\begin{equation}\label{eq:ddim}
    \begin{split}
        F_{\boldsymbol \theta}(\mathbf x_{t}, t, s) & = \frac{\alpha_s}{\alpha_t} \mathbf x_{t} - \alpha_s \hat{\epsilon}_{\boldsymbol \theta}(\mathbf x_t, t)\int_{\lambda_t}^{\lambda_s} e^{-\lambda} \mathrm d\lambda \\
        & = \frac{\alpha_s}{\alpha_t} \mathbf x_{t} -\alpha_{s} \boldsymbol \hat{\epsilon}_{\boldsymbol \theta}(\mathbf x_{t}, t) \left( e^{-\lambda_t} - e^{-\lambda_s} \right) \\
        & = \frac{\alpha_s}{\alpha_t} \mathbf x_{t} -\alpha_{s} \boldsymbol \hat{\epsilon}_{\boldsymbol \theta}(\mathbf x_{t}, t) (\frac{\sigma_t}{\alpha_t} - \frac{\sigma_s}{\alpha_s}) \\ 
        & = \frac{\alpha_s}{\alpha_t} \mathbf x_{t} - \frac{\alpha_s \sigma_t}{\alpha_t} \hat{\boldsymbol{\epsilon}}_{\boldsymbol{\theta}}(\mathbf x_{t}, t) + \sigma_s \hat{\boldsymbol{\epsilon}}_{\boldsymbol{\theta}}(\mathbf x_{t}, t) \\ 
        & = \alpha_s \left(\frac{\mathbf x_{t} - \sigma_t \hat{\boldsymbol \epsilon}_{\boldsymbol \theta}(\mathbf x_t, t)}{\alpha_{t}}\right)+ \sigma_s \hat{\boldsymbol \epsilon}_{\boldsymbol \theta}(\mathbf x_t, t)
    \end{split}
\end{equation}
\end{proof}

\subsection{Guided Distillation}
We show the relationship of epsilon prediction of consistency models trained with guided distillation and diffusion models when considering the classifier-free guidance.
\begin{theorem}\label{th:cfg}
    Assume the consistency model $\boldsymbol \epsilon_{\boldsymbol{\theta}}$ is trained by consistency distillation with the teacher diffusion model $\epsilon_{\boldsymbol{\phi}}$, and the ODE solver is augmented with CFG value $w$ and null text embedding $\varnothing$. Let $\boldsymbol c$ and $\boldsymbol{c}_{\text{neg}}$ be the prompt and negative prompt applied for the inference of consistency model. Then, if the ODE solver is perfect and the $\mathcal L_{\text{PCM}}(\boldsymbol\theta, \boldsymbol{\theta}) = 0$, we have 
    \begin{align*}
        \boldsymbol \epsilon_{\boldsymbol{\theta}}(\mathbf x, t, \boldsymbol{c}, \boldsymbol{c_{\text neg}}; w') & \propto w w' \left[\boldsymbol \epsilon_{\boldsymbol \phi}(\mathbf x, t, \boldsymbol c) - ( (1-\frac{w-1}{ww'})\boldsymbol{\epsilon}_{\boldsymbol{\phi}}(\mathbf x, t, \boldsymbol{c}_{\text neg}) + \frac{w-1}{ww'}\boldsymbol{\epsilon}_{\boldsymbol{\phi}}(\mathbf x, t, \varnothing)) \right]  \\ & + \boldsymbol\epsilon_{\boldsymbol{\phi}} (\mathbf x, t, \boldsymbol{c}_{\text neg})
    \end{align*}
\end{theorem}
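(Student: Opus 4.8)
The plan is to reduce this to elementary algebra, using the characterization of the distilled network already derived in Section~\ref{sec:guided_distill}. Under the hypotheses (perfect ODE solver and $\mathcal L_{\text{PCM}}(\boldsymbol\theta,\boldsymbol\theta)=0$), the parameterization of Equation~\ref{eq:parameterization} forces $\hat{\boldsymbol\epsilon}_{\boldsymbol\theta}(\mathbf x,t,\cdot)$ to agree, up to a positive scalar that is the same for every text conditioning (it is the $e^{-\lambda}$-weight factor, conditioning-free, equal to $1$ in the exact-solver limit), with the $w$-CFG teacher prediction. Concretely I would record
\[
\boldsymbol\epsilon_{\boldsymbol\theta}(\mathbf x,t,\boldsymbol c)\;\propto\;\boldsymbol\epsilon_{\boldsymbol\phi}(\mathbf x,t,\boldsymbol c,\varnothing;w)=(1-w)\,\boldsymbol\epsilon_{\boldsymbol\phi}(\mathbf x,t,\varnothing)+w\,\boldsymbol\epsilon_{\boldsymbol\phi}(\mathbf x,t,\boldsymbol c),
\]
together with the identical relation with $\boldsymbol c_{\text{neg}}$ in place of $\boldsymbol c$, the two sharing the same proportionality constant.

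Next I would unfold the inference-time classifier-free guidance applied to the consistency model, $\boldsymbol\epsilon_{\boldsymbol\theta}(\mathbf x,t,\boldsymbol c,\boldsymbol c_{\text{neg}};w')=\boldsymbol\epsilon_{\boldsymbol\theta}(\mathbf x,t,\boldsymbol c_{\text{neg}})+w'\big(\boldsymbol\epsilon_{\boldsymbol\theta}(\mathbf x,t,\boldsymbol c)-\boldsymbol\epsilon_{\boldsymbol\theta}(\mathbf x,t,\boldsymbol c_{\text{neg}})\big)$, and substitute the two relations above. In the difference the null-text term cancels, so (discarding the common positive constant, which is legitimate since the claim is only up to proportionality) one is left with
\[
(1-w)\,\boldsymbol\epsilon_{\boldsymbol\phi}(\mathbf x,t,\varnothing)+w\,\boldsymbol\epsilon_{\boldsymbol\phi}(\mathbf x,t,\boldsymbol c_{\text{neg}})+ww'\big(\boldsymbol\epsilon_{\boldsymbol\phi}(\mathbf x,t,\boldsymbol c)-\boldsymbol\epsilon_{\boldsymbol\phi}(\mathbf x,t,\boldsymbol c_{\text{neg}})\big).
\]

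Finally I would rearrange this into the stated shape: keep the full coefficient $ww'$ on $\boldsymbol\epsilon_{\boldsymbol\phi}(\mathbf x,t,\boldsymbol c)$, peel off a bare $\boldsymbol\epsilon_{\boldsymbol\phi}(\mathbf x,t,\boldsymbol c_{\text{neg}})$, and write the remaining $\boldsymbol\epsilon_{\boldsymbol\phi}(\mathbf x,t,\boldsymbol c_{\text{neg}})$ and $\boldsymbol\epsilon_{\boldsymbol\phi}(\mathbf x,t,\varnothing)$ terms as $-ww'\big((1-\alpha)\boldsymbol\epsilon_{\boldsymbol\phi}(\mathbf x,t,\boldsymbol c_{\text{neg}})+\alpha\,\boldsymbol\epsilon_{\boldsymbol\phi}(\mathbf x,t,\varnothing)\big)$. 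Matching the $\varnothing$-coefficient forces $ww'\alpha=w-1$, i.e.\ $\alpha=\tfrac{w-1}{ww'}$, and one then verifies that this same $\alpha$ is automatically consistent with the $\boldsymbol c_{\text{neg}}$-coefficient, since $1-ww'(1-\alpha)=w-ww'$. This produces exactly $ww'\big[\boldsymbol\epsilon_{\boldsymbol\phi}(\mathbf x,t,\boldsymbol c)-\big((1-\tfrac{w-1}{ww'})\boldsymbol\epsilon_{\boldsymbol\phi}(\mathbf x,t,\boldsymbol c_{\text{neg}})+\tfrac{w-1}{ww'}\boldsymbol\epsilon_{\boldsymbol\phi}(\mathbf x,t,\varnothing)\big)\big]+\boldsymbol\epsilon_{\boldsymbol\phi}(\mathbf x,t,\boldsymbol c_{\text{neg}})$, as claimed.

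As an analytic matter there is nothing hard here. \textbf{The only step that needs care} is the first one: establishing that the distilled epsilon network is proportional to the teacher's $w$-CFG prediction with a proportionality constant that does \emph{not} depend on the conditioning argument (and that the relation may be used at the evaluation point $(\mathbf x,t)$ itself rather than only at earlier trajectory points), since that is exactly what lets the linear CFG combination be pushed through; everything after that is bookkeeping of scalar coefficients.
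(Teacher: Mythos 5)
Your proposal is correct and follows essentially the same route as the paper's proof: establish that $\hat{\boldsymbol\epsilon}_{\boldsymbol\theta}$ is proportional (with a conditioning-independent constant) to the teacher's $w$-CFG prediction via the exact-solution parameterization of Eq.~\ref{eq:parameterization}, then push the inference-time CFG combination through and match coefficients to identify $\alpha=\tfrac{w-1}{ww'}$. The step you flag as needing care is indeed the one the paper spends most of its proof on (showing $\boldsymbol e^m_{n}=0$ under a perfect solver and reading off the weighted-integral form of $\hat{\boldsymbol\epsilon}_{\boldsymbol\theta}$), and your coefficient bookkeeping matches the paper's final display exactly.
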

\begin{proof}
If the ODE solver is perfect, that means the empirical PF-ODE is exactly the PF-ODE of the training data. Then considering  Theorem~\ref{th:pcm_single} and Theorem~\ref{th:pcm_multiple}, it is apparent that the consistency model will fit the PF-ODE. To show that, considering the case in Theorem~\ref{th:pcm_single}, we have 
\begin{equation}
    \begin{split}
        \boldsymbol e^m_{n+1} & = \boldsymbol{f}^m_{\boldsymbol \theta} (\mathbf x_{t_{n+1}},t_{n+1}) - \boldsymbol{f}^m
(\mathbf x_{t_{n+1}}, t_{n+1}; \boldsymbol{\phi}) \\
& = \boldsymbol{f}^m_{\boldsymbol{\theta}}(\hat{\mathbf x}_{t_{n}}^{\boldsymbol \phi}, t_{n}) - \boldsymbol f^m(\mathbf x_{t_{n}}, t_{n}; \boldsymbol \phi) \\
& = \boldsymbol{f}^m_{\boldsymbol{\theta}}(\hat{\mathbf x}_{t_{n}}^{\boldsymbol \phi}, t_{n})- \boldsymbol{f}^m_{\boldsymbol{\theta}}(\mathbf x_{t_{n}}, t_{n}) + \boldsymbol{f}^m_{\boldsymbol{\theta}}(\mathbf x_{t_{n}}, t_{n}) - \boldsymbol f^m(\mathbf x_{t_{n}}, t_{n}; \boldsymbol \phi) \\
& = \boldsymbol{f}^m_{\boldsymbol{\theta}}(\hat{\mathbf x}_{t_{n}}^{\boldsymbol \phi}, t_{n})- \boldsymbol{f}^m_{\boldsymbol{\theta}}(\mathbf x_{t_{n}}, t_{n})  +\boldsymbol{e}^m_{n} \\
& \overset{(i)}{=} \boldsymbol{f}^m_{\boldsymbol{\theta}}(\mathbf x_{t_{n}}, t_{n})- \boldsymbol{f}^m_{\boldsymbol{\theta}}(\mathbf x_{t_{n}}, t_{n})  +\boldsymbol{e}^m_{n} \\ 
& = \boldsymbol e_{n}^m \\
& = \vdots \\
& = \boldsymbol e_{s_m}^m \\
& = 0 \, ,
    \end{split}
\end{equation}
where $(i)$ is because the ODE solver is perfect. 
Considering our parameterization in Equation~\ref{eq:parameterization}, then it should be equal to the exact solution in Equation~\ref{eq:solver}. That is, 
\begin{equation}
\begin{split}
    \frac{\alpha_s}{\alpha_t} \mathbf x_{t} - \alpha_s \hat{\boldsymbol \epsilon}_{\theta}({\mathbf x_{t}, t}) \int_{\lambda_{t}}^{\lambda_s} e^{-\lambda} \mathrm d \lambda & = \frac{\alpha_s}{\alpha_t}\mathbf x_{t} + \alpha_s \int_{\lambda_t}^{\lambda_s} e^{-\lambda} {\sigma_{t_\lambda(\lambda)}}\nabla\log \mathbb P_{t_\lambda (\lambda)} (\mathbf x_{t_\lambda(\lambda)}) \mathrm d \lambda \\
    \hat{\boldsymbol \epsilon}_{\boldsymbol \theta}({\mathbf x_{t}, t}) \int_{\lambda_{t}}^{\lambda_s} e^{-\lambda} \mathrm d \lambda & = \int_{\lambda_t}^{\lambda_s}e^{-\lambda} \boldsymbol \epsilon_{\boldsymbol \phi}(\mathbf x_{t_{\lambda}(\lambda)}, t_{\lambda}(\lambda))\mathrm d \lambda \\
  \hat{\boldsymbol \epsilon}_{\boldsymbol \theta}({\mathbf x_{t}, t})   & = \frac{\int_{\lambda_t}^{\lambda_s}e^{-\lambda} \boldsymbol \epsilon_{\boldsymbol \phi}(\mathbf x_{t_{\lambda}(\lambda)}, t_{\lambda}(\lambda))\mathrm d \lambda}{\int_{\lambda_t}^{\lambda_s} e^{-\lambda} \mathrm d\lambda} \, .
\end{split}
\end{equation}
Then we have the epsilon prediction is weighted integral of diffusion-based epsilon prediction on the trajectory. Therefore, it is apparent that, for any $t'\le t$ and $t'$ on the same sub-trajectory with $t$, we have 
\begin{equation}
    \hat{\boldsymbol{\epsilon}}_{\boldsymbol \theta}(\mathbf x_{t}, t) \propto \epsilon_{\boldsymbol{\phi}}(\mathbf x_{t'}, t')
\end{equation}
When considering the text-conditioned generation and the ODE solver being augmented with the CFG value $w$, we have 
\begin{equation}
\hat{\boldsymbol \epsilon}_{\boldsymbol{\theta}} (\mathbf x_{t}, t, \boldsymbol c) \propto \boldsymbol \epsilon_{\boldsymbol{\phi}}(\mathbf x_{t'}, t', \varnothing) + w (\boldsymbol \epsilon_{\boldsymbol{\phi}}(\mathbf x_{t'}, t', \boldsymbol c)-\boldsymbol \epsilon_{\boldsymbol{\phi}}(\mathbf x_{t'}, t', \varnothing)) \, .
\end{equation}
If we additionally apply the classifier-free guidance to the consistency models with negative prompt embedding $\boldsymbol c_{\text neg}$ and CFG value $w'$, we have 
\begin{equation}
\begin{split}
       &\hat{\boldsymbol \epsilon}_{\boldsymbol{\theta}} (\mathbf x_{t}, t, \boldsymbol c, \boldsymbol c_{\text neg}; w') \\& =\hat{\boldsymbol \epsilon}_{\boldsymbol{\theta}}(\mathbf x_{t}, t, \boldsymbol{c}_{\text neg}) + w' (\hat{\boldsymbol \epsilon}_{\boldsymbol{\theta}}(\mathbf x_{t}, t, \boldsymbol c)-\hat{\boldsymbol \epsilon}_{\boldsymbol{\theta}}(\mathbf x_{t}, t, \boldsymbol{c}_{\text {neg}}))  \\
       & \propto \boldsymbol \epsilon_{\boldsymbol{\phi}}(\mathbf x_{t'}, t', \varnothing) + w (\boldsymbol \epsilon_{\boldsymbol{\phi}}(\mathbf x_{t'}, t', \boldsymbol c)-\boldsymbol \epsilon_{\boldsymbol{\phi}}(\mathbf x_{t'}, t', \varnothing)) \\ & + w'\{[\boldsymbol \epsilon_{\boldsymbol{\phi}}(\mathbf x_{t'}, t', \varnothing) +   w (\boldsymbol \epsilon_{\boldsymbol{\phi}}(\mathbf x_{t'}, t', \boldsymbol c)-\boldsymbol \epsilon_{\boldsymbol{\phi}}(\mathbf x_{t'}, t', \varnothing))]\\ & - [\boldsymbol \epsilon_{\boldsymbol{\phi}}(\mathbf x_{t'}, t', \varnothing) + w (\boldsymbol \epsilon_{\boldsymbol{\phi}}(\mathbf x_{t'}, t', \boldsymbol c)-\boldsymbol \epsilon_{\boldsymbol{\phi}}(\mathbf x_{t'}, t', \varnothing))] \} \\
       & = \boldsymbol \epsilon_{\boldsymbol{\phi}}(\mathbf x_{t'}, t', \varnothing) + w (\boldsymbol \epsilon_{\boldsymbol{\phi}}(\mathbf x_{t'}, t', \boldsymbol c_{\text neg}) - \boldsymbol \epsilon_{\boldsymbol{\phi}}(\mathbf x_{t'}, t', \varnothing)) + ww'(\boldsymbol \epsilon_{\boldsymbol{\phi}}(\mathbf x_{t'}, t', \boldsymbol c) - \boldsymbol \epsilon_{\boldsymbol{\phi}}(\mathbf x_{t'}, t', \boldsymbol{c}_{\text {neg}})) \\ 
       & = ww' (\boldsymbol \epsilon_{\boldsymbol \phi}(\mathbf x_{t'}, t', \boldsymbol c) - ((1- \alpha) \boldsymbol \epsilon_{\boldsymbol \phi}(\mathbf x_{t'}, t', \boldsymbol c_\text{neg})  + \alpha \boldsymbol\epsilon_{\boldsymbol \phi}(\mathbf x_{t'}, t', \varnothing))) + \boldsymbol \epsilon_{\boldsymbol \phi}(\mathbf x_{t'}, t', \boldsymbol c_\text{neg})
\end{split}
\end{equation}
\end{proof}

\subsection{Distribution Consistency Convergence}
We firstly show that when $\mathcal L_{\text {PCM}} = 0$ is achieved, our distribution consistency loss will also converge to zero. Then, we additionally show that, when considering the pre-train data distribution and distillation data distribution mismatch, combining GAN is a flawed design. The loss is still non-zero even when the self-consistency is achieved, thus corrupting the training.

\begin{theorem}\label{th:distribution}
Denote the data distribution applied for consistency distillation phased is $\mathbb P_0$. And considering that the forward conditional probability path is defined by $\alpha_{t} \mathbf x_0 + \sigma_{t}\boldsymbol{\epsilon}$, we further define the intermediate distribution $\mathbb P_{t} (\mathbf x) = (\mathbb P_{0}(\frac{\mathbf x}{\alpha_{t}})\cdot \frac{1}{\alpha_t} )\ast \mathcal N(0, \sigma_{t})$. Similarly, we denote the data distribution applied for pretraining the diffusion model is $\mathbb P^{\text{pretrain}}_0(\mathbf x)$ and the intermediate 
distribution following forward process are $\mathbb P^{\text{pretrain}}_{t}(\mathbf x) =(\mathbb P^{\text{pretrain}}_{t} (\mathbf x) = (\mathbb P^{\text{pretrain}}_{0}(\frac{\mathbf x}{\alpha_{t}})\cdot \frac{1}{\alpha_t} )\ast \mathcal N(0, \sigma_{t}))$. This is reasonable since current large diffusion models are typically trained with much more resources on much larger datasets compared to those of consistency distillation. And, we denote the flow 
$\mathcal T^{\boldsymbol \phi}_{t\to s}$, $\mathcal T^{\boldsymbol \theta}_{t\to s}$, and $\mathcal T^{\phi'}_{t\to s}$ correspond to our consistency model, pre-trained diffusion model, and the PF-ODE of the data distribution used for consistency distillation, respectively.  Additionally, let $\mathcal T^{-}_{s\to t}$ be the distribution transition following the forward process SDE (adding noise). Then, if the $\mathcal L_\text{PCM}=0$, for arbitrary sub-trajectory $[s_m, s_{m+1}]$, we have, 
\begin{align*}
\mathcal L_{\text{PCM}}^{adv}(\boldsymbol \theta, \boldsymbol \theta; \boldsymbol\phi, m) = D\left(\mathcal T^{-}_{s_m\to s}\mathcal T^{\boldsymbol \theta}_{t_{n+1}\to s_{m}}\#\mathbb P_{t_{n+1}}\middle\|\mathcal T^{-}_{s_m\to s} \mathcal T^{\boldsymbol \theta}_{t_{n}\to s_{m}} \mathcal T^{\boldsymbol \phi}_{t_{n+1}\to t_{n}}\# \mathbb P_{t_{n+1}} \right) = 0.
\end{align*}
\end{theorem}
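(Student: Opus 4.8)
The plan is to convert the pointwise self-consistency forced by $\mathcal{L}_{\text{PCM}}=0$ into an equality of pushforward measures, and then observe that the outer noise-injection kernel is applied identically on both sides of $D$, so its two arguments coincide and $D$ vanishes.

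First I would record, exactly as in the opening step of the proof of Theorem~\ref{th:pcm_single}, that since the discrepancy $d(\cdot,\cdot)$ is nonnegative and $\mathbb{P}(\mathbf x_{t_{n+1}}\mid n,m)=\mathbb{P}_{t_{n+1}}$, the vanishing of $\mathcal{L}_{\text{PCM}}(\boldsymbol\theta,\boldsymbol\theta;\boldsymbol\phi)$ forces, for every adjacent pair $t_n,t_{n+1}$ inside $[s_m,s_{m+1}]$ and for $\mathbb{P}_{t_{n+1}}$-almost every $\mathbf x_{t_{n+1}}$,
\[
\boldsymbol f^m_{\boldsymbol\theta}(\mathbf x_{t_{n+1}},t_{n+1}) \;=\; \boldsymbol f^m_{\boldsymbol\theta}\bigl(\Phi(\mathbf x_{t_{n+1}},t_{n+1},t_n;\boldsymbol\phi),\,t_n\bigr).
\]
Now I translate this into the flow notation of the statement. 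By the definition of the phased consistency function, $\boldsymbol f^m_{\boldsymbol\theta}(\cdot,t)$ is exactly the transport $\mathcal{T}^{\boldsymbol\theta}_{t\to s_m}$ to the $m$-th solution point, and the one-step solver $\Phi(\cdot,t_{n+1},t_n;\boldsymbol\phi)$ is the one-step transport $\mathcal{T}^{\boldsymbol\phi}_{t_{n+1}\to t_n}$ along the pre-trained PF-ODE. Hence the displayed identity reads
\[
\mathcal{T}^{\boldsymbol\theta}_{t_{n+1}\to s_m}(\mathbf x_{t_{n+1}}) \;=\; \mathcal{T}^{\boldsymbol\theta}_{t_n\to s_m}\,\mathcal{T}^{\boldsymbol\phi}_{t_{n+1}\to t_n}(\mathbf x_{t_{n+1}})
\]
for $\mathbb{P}_{t_{n+1}}$-a.e.\ $\mathbf x_{t_{n+1}}$; that is, the two deterministic maps agree $\mathbb{P}_{t_{n+1}}$-almost everywhere.

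Next, two maps that coincide almost everywhere with respect to a measure push that measure forward to the same measure, so
\[
\mathcal{T}^{\boldsymbol\theta}_{t_{n+1}\to s_m}\#\mathbb{P}_{t_{n+1}} \;=\; \mathcal{T}^{\boldsymbol\theta}_{t_n\to s_m}\,\mathcal{T}^{\boldsymbol\phi}_{t_{n+1}\to t_n}\#\mathbb{P}_{t_{n+1}}
\]
as distributions at timestep $s_m$. Finally I apply the forward-noising kernel $\mathcal{T}^{-}_{s_m\to s}$ (for the sampled $s\in[s_m,s_{m+1}]$) to both sides: it is one fixed Markov kernel, and its action on two identical input distributions produces two identical output distributions. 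Therefore
\[
\mathcal{T}^{-}_{s_m\to s}\mathcal{T}^{\boldsymbol\theta}_{t_{n+1}\to s_m}\#\mathbb{P}_{t_{n+1}} \;=\; \mathcal{T}^{-}_{s_m\to s}\mathcal{T}^{\boldsymbol\theta}_{t_n\to s_m}\,\mathcal{T}^{\boldsymbol\phi}_{t_{n+1}\to t_n}\#\mathbb{P}_{t_{n+1}},
\]
and since $D$ is a distribution distance that vanishes precisely when its two arguments coincide, $\mathcal{L}^{adv}_{\text{PCM}}(\boldsymbol\theta,\boldsymbol\theta;\boldsymbol\phi,m)=D(\cdot\|\cdot)=0$, as claimed.

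The only point requiring care — more bookkeeping than genuine obstacle — is the interplay between pointwise map equality and distributional equality: the inner transports $\mathcal{T}^{\boldsymbol\theta},\mathcal{T}^{\boldsymbol\phi}$ are deterministic, so $\mathcal{L}_{\text{PCM}}=0$ literally equates functions, whereas the outer operator $\mathcal{T}^{-}_{s_m\to s}$ is stochastic; the resolution is that it is the \emph{same} kernel on both sides, so it cannot separate two equal distributions. One should also note that the almost-everywhere statement is taken with respect to exactly the measure $\mathbb{P}_{t_{n+1}}$ that appears inside the adversarial loss, which is why no support mismatch occurs here; the pretraining distribution $\mathbb{P}^{\text{pretrain}}_0$ and the flow $\mathcal{T}^{\phi'}$ are introduced only for contrast and play no role in this theorem (they become relevant in Theorem~\ref{th:distribution-2}).
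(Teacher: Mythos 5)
Your proposal is correct and follows essentially the same route as the paper's proof: both arguments pass from the pointwise identity $\boldsymbol f^m_{\boldsymbol\theta}(\mathbf x_{t_{n+1}},t_{n+1})=\boldsymbol f^m_{\boldsymbol\theta}(\hat{\mathbf x}^{\boldsymbol\phi}_{t_n},t_n)$ forced by $\mathcal L_{\text{PCM}}=0$ to the equality of pushforwards $\mathcal T^{\boldsymbol\theta}_{t_{n+1}\to s_m}\#\mathbb P_{t_{n+1}}=\mathcal T^{\boldsymbol\theta}_{t_n\to s_m}\mathcal T^{\boldsymbol\phi}_{t_{n+1}\to t_n}\#\mathbb P_{t_{n+1}}$, and then note that applying the identical forward-noising kernel $\mathcal T^-_{s_m\to s}$ to equal distributions yields equal distributions, so $D$ vanishes. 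The only difference is cosmetic ordering (the paper dispatches the noising kernel first, you last), and your added care about almost-everywhere equality of the maps is a harmless refinement of the same argument.
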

\begin{proof}
Firstly, considering that the forward process $\mathcal T^-_{s\to t}$ is equivalent to scaling the original variables and then performing convolution operations with $\mathcal N (\boldsymbol 0, \sigma_{s\to t}^2\mathbf I)$. Therefore, as long as
\begin{equation}
    D\left(\mathcal T^{\boldsymbol \theta}_{t_{n+1}\to s_{m}}\#\mathbb P_{t_{n+1}}\middle\| \mathcal T^{\boldsymbol \theta}_{t_{n}\to s_{m}} \mathcal T^{\boldsymbol \phi}_{t_{n+1}\to t_{n}}\# \mathbb P_{t_{n+1}} \right) = 0 \, ,
\end{equation}
then we have
\begin{equation}
        D\left(\mathcal T^{-}_{s_m\to s}\mathcal T^{\boldsymbol \theta}_{t_{n+1}\to s_{m}}\#\mathbb P_{t_{n+1}}\middle\|\mathcal T^{-}_{s_m\to s} \mathcal T^{\boldsymbol \theta}_{t_{n}\to s_{m}} \mathcal T^{\boldsymbol \phi}_{t_{n+1}\to t_{n}}\# \mathbb P_{t_{n+1}} \right) = 0 \, .
\end{equation}

From the condition $\mathcal L_\text{PCM}(\boldsymbol{\theta}, \boldsymbol{\theta}; \boldsymbol{\phi}) = 0$, 
for any $\mathbf x_{t_{n}} \in \mathbb P_{t_{n}}$ and $\mathbf x_{t_{n+1}} \in \mathbb P_{t_{n+1}}$ and $t_{n}, t_{n+1} \in [s_m, s_{m+1}]$, we have 
\begin{align}
    \boldsymbol f^m_{\boldsymbol \theta} (\mathbf x_{t_{n+1}, t_{n+1}}) \equiv \boldsymbol f^m_{\boldsymbol \theta} (\hat{\boldsymbol x}_{t_{n}}^{\boldsymbol \phi}, t_n)\, ,
\end{align}
which induces that 
\begin{align}
    \mathcal T^{\boldsymbol \theta}_{t_{n+1}\to s_m} \# \mathbb P_{t_{n+1}} \equiv \mathcal T^{\boldsymbol \theta}_{t_{n} \to s_{m}} \mathcal T^{\boldsymbol{\phi}}_{t_{n+1} \to t_{n}} \# \mathbb P_{t_{n+1}}\, .
\end{align}
Therefore, we show that if $\mathcal L_{PCM}(\boldsymbol{\theta}, \boldsymbol{\theta}; \boldsymbol{\phi}) = 0$, then 
\begin{equation}
\mathcal L_{\text{PCM}}^{adv}(\boldsymbol{\theta}, \boldsymbol{\theta}; \boldsymbol{\phi}) = 0 
\end{equation}
\end{proof}
\begin{theorem}\label{th:distribution-2}
Denote the data distribution applied for consistency distillation phased is $\mathbb P_0$. And considering that the forward conditional probability path is defined by $\alpha_{t} \mathbf x_0 + \sigma_{t}\boldsymbol{\epsilon}$, we further define the intermediate distribution $\mathbb P_{t} (\mathbf x) = (\mathbb P_{0}(\frac{\mathbf x}{\alpha_{t}})\cdot \frac{1}{\alpha_t} )\ast \mathcal N(0, \sigma_{t})$. Similarly, we denote the data distribution applied for pretraining the diffusion model is $\mathbb P^{\text{pretrain}}_0(\mathbf x)$ and the intermediate 
distribution following forward process are $\mathbb P^{\text{pretrain}}_{t}(\mathbf x) =(\mathbb P^{\text{pretrain}}_{t} (\mathbf x) = (\mathbb P^{\text{pretrain}}_{0}(\frac{\mathbf x}{\alpha_{t}})\cdot \frac{1}{\alpha_t} )\ast \mathcal N(0, \sigma_{t}))$. This is reasonable since current large diffusion models are typically trained with much more resources on much larger datasets compared to those of consistency distillation. And, we denote the flow 
$\mathcal T^{\boldsymbol \phi}_{t\to s}$, $\mathcal T^{\boldsymbol \theta}_{t\to s}$, and $\mathcal T^{\phi'}_{t\to s}$ correspond to our consistency model, pre-trained diffusion model, and the PF-ODE of the data distribution used for consistency distillation, respectively. Then, if the $\mathcal L_\text{PCM}=0$, for arbitrary sub-trajectory $[s_m, s_{m+1}]$, we have, 
\begin{align*}
\mathcal L_{\text{PCM}}^{adv}(\boldsymbol \theta, \boldsymbol \theta; \boldsymbol\phi, m) = D\left(\mathcal T^{\boldsymbol \theta}_{t_{n+1}\to s_{m}}\#\mathbb P_{t_{n+1}}\middle\| \mathbb P_{s_{m}} \right) \ge 0.
\end{align*}
\end{theorem}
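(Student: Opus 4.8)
The plan is to first dispose of the literal inequality, which is essentially free, and then explain why the quantity is in fact \emph{strictly} positive under the stated mismatch between $\mathbb P_0$ and $\mathbb P^{\text{pretrain}}_0$ — this is the point the surrounding discussion relies on to argue that a plain GAN loss corrupts consistency distillation. Since $D$ is a distribution distance metric, $D(\cdot\,\|\,\cdot)\ge 0$ holds unconditionally, so the displayed inequality is immediate; the real work is to identify what $\mathcal T^{\boldsymbol\theta}_{t_{n+1}\to s_m}\#\mathbb P_{t_{n+1}}$ becomes when $\mathcal L_{\text{PCM}}=0$ and to show it does not coincide with $\mathbb P_{s_m}$.

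First I would reuse the argument already carried out in the proof of Theorem~\ref{th:cfg}: under a perfect ODE solver and $\mathcal L_{\text{PCM}}(\boldsymbol\theta,\boldsymbol\theta;\boldsymbol\phi)=0$, the recursion for $\boldsymbol e^m_{n}$ collapses, because $\hat{\mathbf x}^{\boldsymbol\phi}_{t_n}=\mathbf x_{t_n}$ exactly forces $\boldsymbol f^m_{\boldsymbol\theta}(\hat{\mathbf x}^{\boldsymbol\phi}_{t_n},t_n)-\boldsymbol f^m_{\boldsymbol\theta}(\mathbf x_{t_n},t_n)=0$; combined with the boundary condition $\boldsymbol e^m_{s_m}=0$ this gives $\boldsymbol f^m_{\boldsymbol\theta}\equiv \boldsymbol f^m(\cdot,\cdot;\boldsymbol\phi)$ on the whole sub-trajectory. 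In flow notation this says $\mathcal T^{\boldsymbol\theta}_{t_{n+1}\to s_m}=\mathcal T^{\boldsymbol\phi}_{t_{n+1}\to s_m}$, the transition following the pre-trained diffusion model's PF-ODE. Substituting into the definition of $\mathcal L^{adv}_{\text{PCM}}$ yields
\[
\mathcal L^{adv}_{\text{PCM}}(\boldsymbol\theta,\boldsymbol\theta;\boldsymbol\phi,m)=D\!\left(\mathcal T^{\boldsymbol\phi}_{t_{n+1}\to s_m}\#\mathbb P_{t_{n+1}}\,\middle\|\,\mathbb P_{s_m}\right),
\]
and the claimed inequality follows from nonnegativity of $D$.

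The substantive content, in contrast to Theorem~\ref{th:distribution}, is that this right-hand side is generically nonzero. Here I would argue that $\mathcal T^{\boldsymbol\phi}$ is the PF-ODE transport generated by the score $\nabla\log\mathbb P^{\text{pretrain}}_t$, so it is the \emph{pretraining} intermediate family that it transports correctly, $\mathcal T^{\boldsymbol\phi}_{t_{n+1}\to s_m}\#\mathbb P^{\text{pretrain}}_{t_{n+1}}=\mathbb P^{\text{pretrain}}_{s_m}$; the flow that would send $\mathbb P_{t_{n+1}}$ to $\mathbb P_{s_m}$ is instead $\mathcal T^{\phi'}$, generated by $\nabla\log\mathbb P_t$. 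Whenever $\mathbb P_0\neq\mathbb P^{\text{pretrain}}_0$ these two score fields differ on a set of positive measure along the sub-trajectory, hence $\mathcal T^{\boldsymbol\phi}_{t_{n+1}\to s_m}\neq\mathcal T^{\phi'}_{t_{n+1}\to s_m}$, and so $\mathcal T^{\boldsymbol\phi}_{t_{n+1}\to s_m}\#\mathbb P_{t_{n+1}}\neq\mathbb P_{s_m}$; since $D$ separates distinct distributions, $\mathcal L^{adv}_{\text{PCM}}>0$ even though $\mathcal L_{\text{PCM}}=0$.

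The step I expect to be the main obstacle is precisely this last implication: ruling out an accidental coincidence $\mathcal T^{\boldsymbol\phi}_{t_{n+1}\to s_m}\#\mathbb P_{t_{n+1}}=\mathbb P_{s_m}$ despite $\mathcal T^{\boldsymbol\phi}\neq\mathcal T^{\phi'}$. To close it rigorously one either invokes a genericity assumption on the pair $(\mathbb P_0,\mathbb P^{\text{pretrain}}_0)$, or uses the fact that the time-reversed forward SDE uniquely recovers the data-side marginal from any intermediate marginal, so that a flow matching $\mathbb P_{t_{n+1}}$ to $\mathbb P_{s_m}$ across every sub-trajectory is forced to equal $\mathcal T^{\phi'}$. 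For this reason I would keep the formal statement at the weak ``$\ge 0$'' — which is only nonnegativity of $D$ — while carrying the ``generically $>0$'' conclusion, the one that actually motivates discarding the plain GAN loss, as the explanatory payload of the proof.
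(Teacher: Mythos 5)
Your proposal is correct and follows essentially the same route as the paper: use $\mathcal L_{\text{PCM}}=0$ to identify $\mathcal T^{\boldsymbol\theta}_{t_{n+1}\to s_m}\#\mathbb P_{t_{n+1}}$ with $\mathcal T^{\boldsymbol\phi}_{t_{n+1}\to s_m}\#\mathbb P_{t_{n+1}}$, rewrite $\mathbb P_{s_m}$ as $\mathcal T^{\phi'}_{t_{n+1}\to s_m}\#\mathbb P_{t_{n+1}}$, and conclude nonnegativity from $D$ while arguing the mismatch $\mathbb P_0\neq\mathbb P^{\text{pretrain}}_0$ makes the distance nonzero. The one obstacle you flag --- ruling out an accidental coincidence of pushforwards in a generic sub-trajectory --- is resolved in the paper not by a genericity assumption but by specializing to $s_m=\epsilon$, $t_{n+1}=T$, where the two pushforwards are exactly $\mathbb P^{\text{pretrain}}_0$ and $\mathbb P_0$, so the distance equals $D\left(\mathbb P_0^{\text{pretrain}}\middle\|\mathbb P_0\right)>0$ for that choice of $m,n$; this suffices because the paper only claims strictness for \emph{some} sub-trajectory, not all.
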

\begin{proof}
From the condition $\mathcal L_\text{PCM}(\boldsymbol{\theta}, \boldsymbol{\theta}; \boldsymbol{\phi}) = 0$, 
for any $\mathbf x_{t_{n}} \in \mathbb P_{t_{n}}$ and $\mathbf x_{t_{n+1}} \in \mathbb P_{t_{n+1}}$ and $t_{n}, t_{n+1} \in [s_m, s_{m+1}]$, we have 
\begin{align}
    \boldsymbol f^m_{\boldsymbol \theta} (\mathbf x_{t_{n+1}, t_{n+1}}) \equiv \boldsymbol f^m_{\boldsymbol \theta} (\hat{\boldsymbol x}_{t_{n}}^{\boldsymbol \phi}, t_n)\, ,
\end{align}
which induces that 
\begin{equation}
    \mathcal T_{t_{n+1}\to s_{m}}^{\boldsymbol{\theta}}\# \mathbb P_{t_{n+1}} \equiv  \mathcal T_{t_{n+1}\to s_{m}}^{\boldsymbol{\phi}}\# \mathbb P_{t_{n+1}} \, .
\end{equation}
Besides, since $\mathcal T^{\phi'}_{t\to s}$ corresponds to the PF-ODE of the data distribution used for consistency distillation, we can rewrite
\begin{equation}
    \mathbb P_{s_{m}} \equiv \mathcal T^{\boldsymbol \phi'}_{t_{n+1}\to s_{m}} \# \mathbb P_{t_{n+1}} \, .
\end{equation}
Therefore, we have 
\begin{equation}
    D\left(\mathcal T^{\boldsymbol \theta}_{t_{n+1}\to s_{m}}\#\mathbb P_{t_{n+1}}\middle\| \mathbb P_{s_{m}} \right) = D\left(\mathcal T^{\boldsymbol \phi}_{t_{n+1}\to s_{m}}\#\mathbb P_{t_{n+1}}\middle\| T^{\boldsymbol \phi'}_{t_{n+1}\to s_{m}} \# \mathbb P_{t_{n+1}} \right) .
\end{equation}
Since we know that $\mathbb P^{pretrain}_{0} \not= \mathbb P_{0}$. Therefore, there exists $m$ and $n$ achieving the strict inequality. Specifically, if we set $s_m=\epsilon$ and $t_{n+1} = T$, we have 
\begin{equation}
    D\left(\mathcal T^{\boldsymbol \phi}_{T\to \epsilon}\#\mathbb P_{T}\middle\| T^{\boldsymbol \phi'}_{T\to \epsilon} \# \mathbb P_{T} \right) =  D\left(\mathbb P_0^{pretrain}\middle\| \mathbb P_0 \right) > 0 \, .
\end{equation}
\end{proof}

\section{\label{sec:discussions}Discussions}

\subsection{Numerical Issue of Parameterization}
Even though our above-discussed parameterization is theoretically sound, it poses a numerical issue when applied to the epsilon-prediction-based models, especially for the one-step generation. To be specific, for the one-step generation, we are required to predict the solution point $\mathbf x_0$ with $\mathbf x_t$ from the self-consistency property of consistency models. With epsilon-prediction models, the formula can represented as the following
\begin{equation}
    \mathbf x_0 = \frac{\mathbf x_t - \sigma_{t}\boldsymbol{\epsilon}_{\boldsymbol \theta}(\mathbf x_t, t)}{\alpha_{t}} \, .
\end{equation}

However, when inference, we should choose $t=T$ since it is the noisy timestep that is closest to the normal distribution. For the DDPM framework, there should be $\sigma_T \approx 1$ and $\alpha_T \approx 0$ to make the noisy timestep $T$ as close to the normal distribution as possible. For instance, the noise schedulers applied by Stable Diffusion v1-5 and Stable Diffusion XL all have the $\alpha_T = 0.068265$. Therefore, we have 
\begin{equation}
    \mathbf x_0 = \frac{\mathbf x_T - \sigma_{T}\boldsymbol{\epsilon}_{\boldsymbol \theta}(\mathbf x_t, t)}{\alpha_{T}} \approx 14.64 (\mathbf x_{T} - \sigma_{T}\boldsymbol{\epsilon}_{\boldsymbol \theta}(\mathbf x_t, t) ) \, .
\end{equation}
This indicates that we will multiply over $10 \times$ to the model prediction. In our experiments, we find the SDXL is more influenced by this issue, tending to produce artifact points or lines. 

Generally speaking, using the x0-prediction and v-prediction can well solve these issues. It is obvious for the x0-prediction. For the v-prediction, we have 
\begin{equation}
    \mathbf x_0 = \alpha_t \mathbf x_{t} - \sigma_{t} \boldsymbol v_{\boldsymbol \theta}(\mathbf x_t, t)\, .
\end{equation}

However, since the diffusion models are trained with the epsilon-prediction, transforming them into the other prediction formats takes additional computation resources and might harm the performance due to the relatively large gaps among different prediction formats. 

We solve this issue through a simple way that we set a clip boundary to the value of $\alpha_t$. Specifically, we apply
\begin{equation}
    \mathbf x_0 = \frac{\mathbf x_t - \sigma_t \boldsymbol{\epsilon}_{\boldsymbol \theta}(\mathbf x_t, t)}{\max \{ \alpha_t, 0.5\}} \, .
\end{equation}

\subsection{Why CTM Needs Target Timestep Embeddings?}
Consistency trajectory model~(CTM), as we discussed, proposes to learn how `move' from arbitrary points on the ODE trajectory to arbitrary other points. Thereby, the model should be aware of `where they are' and `where they target' for precise prediction. 

Basically, they can still follow our parameterization with an additional target timestep embedding. 

\begin{equation}
    \mathbf x_{s} = \frac{\alpha_s}{\alpha_t} \mathbf x_t - \alpha_s \hat{\boldsymbol{\epsilon}}_{\boldsymbol \theta}(\mathbf x_t, t, s) \int_{\lambda_t}^{\lambda_s} e^{-\lambda} \mathrm d \lambda
\end{equation}

In this design, we have 

\begin{equation}
\hat{\boldsymbol{\epsilon}}_{\boldsymbol{\theta}}(\mathbf x_{t}, t, s)=\frac{\int_{\lambda_t}^{\lambda_s}e^{-\lambda} \boldsymbol \epsilon_{\boldsymbol \phi}(\mathbf x_{t_{\lambda}(\lambda)}, t_{\lambda}(\lambda))\mathrm d \lambda}{\int_{\lambda_t}^{\lambda_s} e^{-\lambda} \mathrm d\lambda}  \, .
\end{equation}

Here, we show that dropping the target timestep embedding $s$ is a \textbf{flawed} design. 

Assume we hope to predict the results at timestep $s'$ and timestep $s''$ from timestep $t$ and sample $\mathbf x_{t}$, where $s' < s''$. 

If we hope to learn both $\mathbf x_{s'}$ and $\mathbf x_{s''}$ correctly, without the target timestep embedding, we must have 

\begin{align}
\hat{\boldsymbol{\epsilon}}_{\boldsymbol{\theta}}(\mathbf x_{t}, t)=\frac{\int_{\lambda_t}^{\lambda_{s'}}e^{-\lambda} \boldsymbol \epsilon_{\boldsymbol \phi}(\mathbf x_{t_{\lambda}(\lambda)}, t_{\lambda}(\lambda))\mathrm d \lambda}{\int_{\lambda_t}^{\lambda_{s'}} e^{-\lambda} \mathrm d\lambda}  \\
\hat{\boldsymbol{\epsilon}}_{\boldsymbol{\theta}}(\mathbf x_{t}, t)=\frac{\int_{\lambda_t}^{\lambda_{s''}}e^{-\lambda} \boldsymbol \epsilon_{\boldsymbol \phi}(\mathbf x_{t_{\lambda}(\lambda)}, t_{\lambda}(\lambda))\mathrm d \lambda}{\int_{\lambda_t}^{\lambda_{s''}} e^{-\lambda} \mathrm d\lambda}  \, .
\end{align}
This will lead to the conclusion that all the segments $[t, s]$ on the ODE trajectory have the same weighted epsilon prediction, which violates the truth, ignoring the dynamic variations and dependencies specific to each segments.

\subsection{Contributions}
Here we re-emphasize the key components of PCM and summarize the contributions of our work.

The motivation of our work is to accelerate the sampling of high-resolution text-to-image and text-to-video generation with consistency models training paradigm. Previous work, latent consistency model~(LCM), tried to replicate the power of the consistency model in this challenging setting but did not achieve satisfactory results. We observe and analyze the limitations of LCM from three perspectives and propose PCM, generalizing the design space and tackling all these limitations.

At the heart of PCM is to phase the whole ODE trajectory into multiple phases. Each phase corresponding to a sub-trajectory is treated as an independent consistency model learning objective. We provide a standard definition of PCM and show that the optimal error bounds between the prediction of trained PCM and PF-ODE are upper-bounded by $\mathcal O((\Delta t)^p)$. The phasing technique allows for deterministic multi-step sampling, ensuring consistent generation results under different inference steps. 

Additionally, we provide a deep analysis of the parameterization when converting pre-trained diffusion models into consistency models. From the exact solution format of PF-ODE, we propose a simple yet effective parameterization and show that it has the same formula as first-order ODE solver DDIM. However, we show that the parameterization has a natural difference from the DDIM ODE solver. That is the difference between exact solution learning and score estimation. Besides, we point out that even though the parameterization is theoretically sound, it poses numerical issues when building the one-step generator. We propose a simple yet effective threshold clip strategy for the parameterization.

We introduce an innovative adversarial loss, which greatly boosts the generation quality at a low-step regime. We implement the loss in a GAN style. However, we show that our method has an intrinsic difference from traditional GAN. We show that our introduced adversarial loss will converge to zero when the self-consistency property is achieved. However, the GAN loss will still be non-zero when considering the distribution distance between the pre-trained dataset for diffusion training and the distillation dataset for consistency distillation. We investigate the structures of discriminators and compare the latent-based discriminator and pixel-based discriminator. Our finding is that applying the latent-based visual backbone of pre-trained diffusion U-Net makes the discriminator design simple and can produce better visual results compared to pixel-based discriminators. We also implement a similar discriminator structure for the text-to-video generation with a temporal inflated U-Net.

For the setting of text-to-image generation and text-to-video generation, classifier-free guidance ~(CFG) has become an important technique for achieving better controllability and generation quality. We investigate the influence of CFG on consistency distillation. And, to our knowledge, we, for the first time, point out the relations of consistency model prediction and diffusion model prediction when considering the CFG-augmented ODE solver~(guided distillation). We show that this technique causes the trained consistency models unable to use large CFG values and be less sensitive to the negative prompt.

We achieve state-of-the-art few-step text-to-image generation and text-to-video generation with only 8 A 800 GPUs, indicating the advancements of our method.

\section{Sampling}

\subsection{Introducing Randomness for Sampling}\label{sec:randomness}
For a given initial sample at timestep \( t \) belonging to the sub-trajectory \([s_m, s_{m+1}]\), we can support deterministic sampling according to the definition of \( \boldsymbol{f}^{m, 0} \).

However, previous work~\cite{ctm,xu2023restart} reveals that introducing a certain degree of stochasticity can lead to better generation results. We also observe a similar trade-off phenomenon in our practice.  
Therefore, we reparameterize the $F_{\boldsymbol \theta}(\mathbf x, t, s)$ as 
\begin{align}\label{eq:inference}
\alpha_s (\frac{\mathbf x_{t} - \sigma_t \boldsymbol \epsilon_{\boldsymbol \theta}(\mathbf x_{t}, t)}{\alpha_{t}}) + \sigma_{s}\boldsymbol (\sqrt{r} \boldsymbol \epsilon_{\boldsymbol \theta}(\mathbf x_{t}, t) + \sqrt{(1-r)} \boldsymbol \epsilon ), \quad \boldsymbol \epsilon\sim \mathcal N(\boldsymbol 0, \mathbf I).
\end{align}
By controlling the value of $r \in [0,1]$, we can determine the stochasticity for generation. With $r = 1$, it is pure deterministic sampling. With $r=0$, it degrades to pure stochastic sampling. Note that, introducing the random $\boldsymbol \epsilon$ will make the generation results be away from the target induced by the diffusion models. However, since $\boldsymbol \epsilon_{\boldsymbol \theta}$ and $\epsilon$ all follows the normal distribution,  we have $\sqrt{r} \boldsymbol \epsilon_{\boldsymbol \theta}(\mathbf x_t, t) + \sqrt{(1-r)} \boldsymbol \epsilon \sim \mathcal N(\boldsymbol 0, \mathbf I)$ and thereby the predicted $\mathbf x_{s}$ should still follow the same distribution $\mathbb P_{s}$ approximately. 

\subsection{Improving Diversity for Generation}
Our another observation is the generation diversity with guided distillation is limited compared to original diffusion models. This is due to that the consistency models are distilled with a relatively large CFG value for guided distillation. The large CFG value, though known for enhancing the generation quality and text-image alignment, will degrade the generation diversity. We explore a simple yet effective strategy by adjusting the CFG values. 
$
w' \boldsymbol \epsilon_{\boldsymbol \theta}(\mathbf x, t, \boldsymbol c) + (1-w') \boldsymbol \epsilon_{\boldsymbol \theta}(\mathbf x, t, \varnothing)
$,
 where $w' \in (0.5,1]$. The epsilon prediction is the convex combination of the conditional prediction and the unconditional prediction.

\section{\label{sec:social_impact}Societal Impact}

\subsection{\label{sec:pos_social_impact}Positive Societal Impact}
We firmly believe that our work has a profound positive impact on society. While diffusion techniques excel in producing high-quality images and videos, their iterative inference process incurs significant computational and power costs. Our approach accelerates the inference of general diffusion models by up to $20$ times or more, thus substantially reducing computational and power consumption. Moreover, it fosters enthusiasm among creators engaged in AI-generated content creation and lowers entry barriers. 

\subsection{\label{sec:neg_social_impact}Negative Societal Impact}
Addressing potential negative consequences, given the generative nature of the model, there's a risk of generating false or harmful content.

\subsection{\label{sec:safety_guards}Safeguards}
To mitigate this, we conduct harmful information detection on user-provided text inputs. If harmful prompts are detected, the generation process is halted. Additionally, our method, fortunately, builds upon the open-source Stable Diffusion model, which includes an internal safety checker for detecting harmful content. This feature significantly enhances our ability to prevent the generation of harmful content.

\section{\label{sec:imp_details}Implementation Details}

\subsection{\label{train_details}Training Details}

For the comparison of baselines, the training code for InstaFlow~\cite{instaflow}, SDXL-Turbo~\cite{sdturbo}, SD-Turbo~\cite{sdturbo}, TCD~\cite{tcd}, and SDXL-Lightning~\cite{sdxl-lightning} are not open-sourced yet, so we only compare their open-source weights.

For LCM~\cite{lcm}, CTM~\cite{ctm}, and our method, they are all implemented by us and trained with the same configuration.

Specifically, for multi-step models, we trained LoRA with a rank of 64. For models based on SD v1-5, we used a learning rate of 5e-6, a batch size of 160, and trained for 5k iterations. For models based on SDXL, we used a learning rate of 5e-6, a batch size of 80, and trained for 10k iterations. We did not use EMA for LoRA training.

For single-step models, we followed the approach of SD-Turbo and SDXL-Lightning, training all parameters. For models based on SD v1-5, we used a learning rate of 5e-6, a batch size of 160, and trained for 10k iterations. For models based on SDXL, we used a learning rate of 1e-6, a batch size of 16, and trained for 50k iterations. We used EMA=0.99 for training.

For CTM, we additionally learned a timestep embedding to indicate the target timestep.

For all training settings, we uniformly sample 50 timesteps from the 1000 timesteps of StableDiffusion and  apply DDIM as the ODE solver.  

\subsection{\label{sec:pseudo_train_code}Pseudo Training Code}
\vspace{-10pt}
\begin{algorithm}[H]
    \begin{minipage}{\linewidth}
    \begin{footnotesize}
        \caption{Phased Consistency Distillation with CFG-augmented ODE solver (PCD)}\label{alg:PCD}
        \begin{algorithmic}
            \STATE \textbf{Input:} dataset $\mathcal{D}$, initial model parameter $\vtheta$, learning rate $\eta$, ODE solver $\Psi(\cdot,\cdot,\cdot, \cdot)$, distance metric $d(\cdot,\cdot)$, EMA rate $\mu$, noise schedule $\alpha_t,\sigma_t$, guidance scale $[w_{\text{min}},w_{\text{max}}]$, number of ODE step $k$,  discretized timesteps $t_0 = \epsilon < t_1 < t_2 < \cdots < t_{N} = T $,  edge timesteps $s_0 = t_0 < s_1  < s_2 < \cdots < s_{M} = t_{N} \in \{t_i\}_{i=0}^N$ to split the ODE trajectory into $M$ sub-trajectories. 
            \STATE Training data : $\mathcal{D}_{\mathbf x}=\{(\mathbf x,\vc)\}$
            \STATE $\vtheta^-\leftarrow\vtheta$
            \REPEAT
            \STATE Sample $(\vz,\vc) \sim \mathcal{D}_z$, $n\sim \mathcal{U}[0,N-k]$ and $\omega\sim [\omega_\text{min},\omega_\text{max}]$
            \STATE Sample ${\mathbf x}_{t_{n+k}}\sim \mathcal{N}(\alpha_{t_{n+k}}\vz;\sigma^2_{t_{n+k}}\mathbf{I})$
            \STATE Determine $[s_m, s_{m+1}]$ given $n$
            \STATE \textcolor{blue}{$\begin{aligned}{\mathbf x}^{\boldsymbol \phi}_{t_n}\leftarrow (1+\omega)\Psi({\mathbf x}_{t_{n+k}},t_{n+k},t_n,\vc)-\omega\Psi({\mathbf x}_{t_{n+k}},t_{n+k},t_n,\varnothing)\end{aligned}$}
            \STATE $\tilde{\mathbf x}_{s_m} = \boldsymbol f_{\boldsymbol \theta}^m(\mathbf x_{t_{n+k}}, t_{n+k}, \boldsymbol c)$ and $\hat{\mathbf x}_{s_m} = \boldsymbol f_{\boldsymbol{\theta^{-}}}(\mathbf x_{t_n}^{\boldsymbol \phi}, t_{n}, \boldsymbol c)$
            \STATE Obtain $\tilde{\mathbf x}_{s}$ and $\hat{\mathbf x}_{s}$ through adding noise to  $\tilde{\mathbf x}_{s_m}$ and $\hat{\mathbf x}_{s_m}$
             \STATE $\begin{aligned}\mathcal{L}(\boldsymbol \theta, \boldsymbol \theta^{-}) = d(\tilde{\mathbf x}_{s_m}, \hat{\mathbf x}_{s_m}) + \lambda (\text{ReLU}(1+\tilde{\mathbf x}_{s}) + \text{ReLU}(1-\hat{\mathbf x}_s)) \end{aligned}$
          \STATE $\vtheta\leftarrow\vtheta-\eta\nabla_\vtheta\mathcal{L}(\vtheta,\vtheta^-)$
            \STATE $\vtheta^-\leftarrow \text{stopgrad}(\mu\vtheta^-+(1-\mu)\vtheta)$
            \UNTIL convergence
        \end{algorithmic} 
        \end{footnotesize}
    \end{minipage}
\end{algorithm}
\vspace{-16pt}
\begin{algorithm}[H]
    \begin{minipage}{\linewidth}
    \begin{footnotesize}
        \caption{Phased Consistency Distillation with normal ODE solver (PCD*)}\label{alg:PCD-normal}
        \begin{algorithmic}
            \STATE \textbf{Input:} dataset $\mathcal{D}$, initial model parameter $\vtheta$, learning rate $\eta$, ODE solver $\Psi(\cdot,\cdot,\cdot, \cdot)$, distance metric $d(\cdot,\cdot)$, EMA rate $\mu$, noise schedule $\alpha_t,\sigma_t$, drop ratio $\eta$, number of ODE step $k$,  discretized timesteps $t_0 = \epsilon < t_1 < t_2 < \cdots < t_{N} = T $,  edge timesteps $s_0 = t_0 < s_1  < s_2 < \cdots < s_{M} = t_{N} \in \{t_i\}_{i=0}^N$ to split the ODE trajectory into $M$ sub-trajectories. 
            \STATE Training data : $\mathcal{D}_{\mathbf x}=\{(\mathbf x,\vc)\}$
            \STATE $\vtheta^-\leftarrow\vtheta$
            \REPEAT
            \STATE Sample $(\vz,\vc) \sim \mathcal{D}_z$, $n\sim \mathcal{U}[0,N-k]$
            \STATE Sample ${\mathbf x}_{t_{n+k}}\sim \mathcal{N}(\alpha_{t_{n+k}}\vz;\sigma^2_{t_{n+k}}\mathbf{I})$
            \STATE Determine $[s_m, s_{m+1}]$ given $n$
            \STATE \textcolor{blue}{$r \sim \mathcal U[0,1]$}
            \STATE \textcolor{blue}{\textbf{if} $r < \eta$ \textbf{then}}
            \STATE \textcolor{blue}{$\vc = \varnothing$}
            \STATE \textcolor{blue}{\textbf{end if}}
            \STATE \textcolor{blue}{$\begin{aligned}{\mathbf x}^{\boldsymbol \phi}_{t_n}\leftarrow \Psi({\mathbf x}_{t_{n+k}},t_{n+k},t_n,\vc)\end{aligned}$}
            
            \STATE $\tilde{\mathbf x}_{s_m} = \boldsymbol f_{\boldsymbol \theta}^m(\mathbf x_{t_{n+k}}, t_{n+k}, \boldsymbol c)$ and $\hat{\mathbf x}_{s_m} = \boldsymbol f_{\boldsymbol{\theta^{-}}}(\mathbf x_{t_n}^{\boldsymbol \phi}, t_{n}, \boldsymbol c)$
            \STATE Obtain $\tilde{\mathbf x}_{s}$ and $\hat{\mathbf x}_{s}$ through adding noise to  $\tilde{\mathbf x}_{s_m}$ and $\hat{\mathbf x}_{s_m}$
             \STATE $\begin{aligned}\mathcal{L}(\boldsymbol \theta, \boldsymbol \theta^{-}) = d(\tilde{\mathbf x}_{s_m}, \hat{\mathbf x}_{s_m}) + \lambda (\text{ReLU}(1+\tilde{\mathbf x}_{s}) + \text{ReLU}(1-\hat{\mathbf x}_s)) \end{aligned}$
          \STATE $\vtheta\leftarrow\vtheta-\eta\nabla_\vtheta\mathcal{L}(\vtheta,\vtheta^-)$
            \STATE $\vtheta^-\leftarrow \text{stopgrad}(\mu\vtheta^-+(1-\mu)\vtheta)$
            \UNTIL convergence
        \end{algorithmic} 
        \end{footnotesize}
    \end{minipage}
\end{algorithm}
\vspace{-16pt}

\subsection{\label{sec:dcd_v}Decoupled Consistency 
Distillation For Video Generation}
Our video generation model follows the design space of most current text-to-video generation models, viewing videos as temporal stacks of images and inserting temporal blocks to a pre-trained text-to-image generation U-Net to accommodate 3D features of noisy video inputs.  We term this process temporal inflation.

Video generation models are typically much more resource-consuming than image generation models. Also, the overall caption and visual quality of video datasets are generally inferior to those of image datasets. Therefore, we apply the decoupled consistency learning to ease the training burden of text-to-video generation, which was first proposed by the previous work AnimateLCM. To be specific, we first conduct phased consistency distillation on stable diffusion v1-5 to obtain the one-step text-to-image generation models. Then we apply the temporal inflation to adapt the text-to-image generation model for video generation. Eventually, we conduct phased consistency distillation on the video dataset.  We observe an obvious training speed-up with this decoupled strategy. This allows us to train the state-of-the-art fast video generation models with only 8 A 800 GPUs. 

\subsection{\label{sec:discriminator}Discriminator Design of Image Generation and Video Generation}

\subsubsection{Image Generation}
The overall discriminator design was greatly inspired by previous work StyleGAN-T~\cite{stylegant}, which showed that a pre-trained visual backbone can work as a great discriminator. For training the discriminator, we freeze the original weight of pre-trained visual backbones and insert light-weight Convolution-based discriminator heads for training.

\noindent \textbf {Discriminator backbones.}  We consider two types of visual backbones as the discriminator backbone: pixel-based and latent-based. 

We apply DINO~\cite{dino} as the pixel-based backbone. To be specific, once obtaining the denoised latent code,  we decode it through the VAE decoder to obtain the generated image. Then, we resize the generated image into the resolution that DINO trained with~(e.g., $224 \times 224$) and feed it as the inputs of the DINO backbone. We extract the hidden features of the DINO backbone of different layers and feed them to the corresponding discriminator heads. Since DINO is trained in a self-supervised manner and no texts are used for training. Therefore, it is necessary to incorporate the text embeddings in the discriminator heads for text-to-image generation.  To achieve that, we use the embedding of [CLS] token in the last layer of the pre-trained CLIP text encoder, linearly map it to the same dimension of discriminator output, and conduct affine transformation to the discriminator output. However, the pixel-based backbones have inevitable drawbacks. Firstly, it requires mapping the latent code to a very high-dimensional image through the VAE decoder, which is much more costly compared to traditional diffusion training. Secondly, it can only accept the inputs of clean images, which poses challenges to PCM training, since we require the discrimination of inputs of intermediate noisy inputs. Thirdly, it requires resizing the images into the relatively low resolution it trained with. When applying this backbone design with high-dimensional image generation with Stable-Diffusion XL, we observe unsatisfactory results. 

We apply the U-Net of the pre-trained diffusion model as the latent-based backbone. It has several advantages compared to the pixel-based backbones. Firstly, trained on the latent space, the U-Net possesses rich knowledge about latent space and can directly work as the discriminator at the latent space, avoiding costly decoding processes. Besides, it can accept the inputs of latent code at different timesteps, which aligns well with the design of PCM and can be applied for regularizing the consistency of all the intermediate distributions instead~(i.e., $\mathbb P_{t}(\mathbf x)$) of distribution at the edge points~(i.e., $\mathbb P_{s_m}(\mathbf x)$). Additionally, since the text information is already encoded in the U-Net backbone,  we do not need to incorporate the text information in the discriminator head, which simplifies the discriminator design.  We insert several randomly initialized lightweight simple discriminator heads after each block of the pre-trained U-Net. Each discriminator head consists of two lightweight convolution blocks connected with residuals. Each convolution block is composed of a convolution 2D layer, Group Normalization, and GeLU non-linear activation function. Then we apply a 2D point-wise convolution layer and set the output channel to 1, thus mapping the input latent feature to a 2D scalar value output map with the same spatial size.  

We train our one-step text-to-image model on Stable Diffusion XL using these two choices of discriminator respectively, while keeping the other settings unchanged.  Our experiments reveal that the latent-based discriminator not only reduces the training cost but also provides more visually compelling generation results. 
Additionally, note that the teacher diffusion model applied for phased consistency distillation is actually a pre-trained U-Net. And we freeze the parameters of U-Net for the training discriminator. Therefore, we can apply the same U-Net, which simultaneously works as the teacher diffusion model for numerical ODE solver computation and discriminator visual backbone.

\subsubsection{Video Generation}
For the video discriminator, we mainly follow our design on the image discriminator. We use the same U-Net with temporal inflation as the visual backbone for video latent code as well as the teacher video diffusion model for phased consistency distillation. We still apply the 2D convolution layers as discriminator heads for each frame of hidden features extracted from the temporal inflated U-Net since we do not observe performance gain when using 3D convolutions. Note that the temporal relationships are already processed by the pre-trained temporal inflated U-Net, therefore using simple 2D convolution layers as discriminator heads is enough for supervision of the distribution of video inputs.

\section{More Generation Results.}

\begin{figure}[!h]
    \centering
    \makebox[\textwidth]{\includegraphics[width=1.05\textwidth]{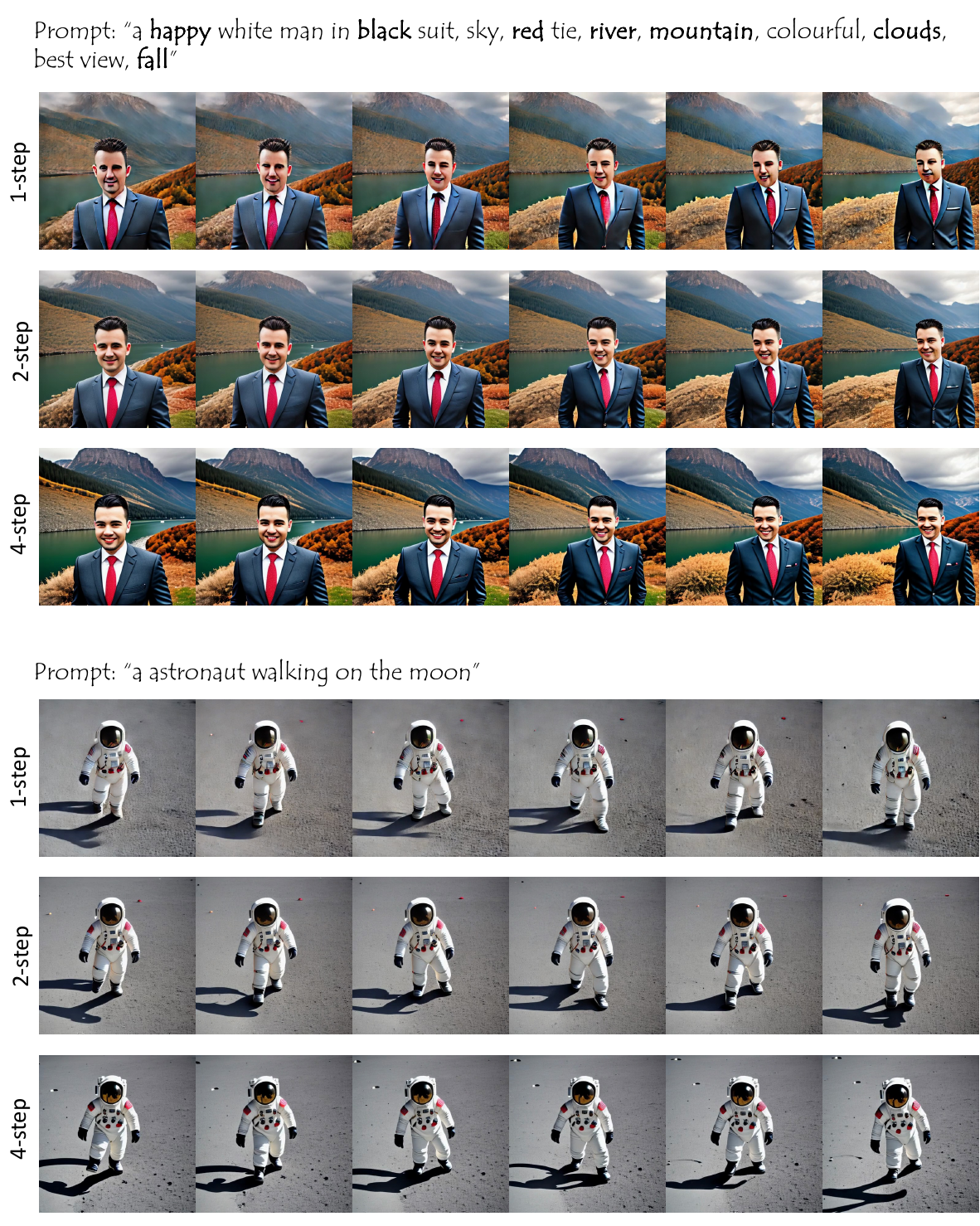}}
    \caption{PCM video generation results with Stable-Diffusion v1-5 under $1\sim 4$ inference steps.}
    \label{fig:teaser-sd15-video-step-compariosn-1}
\end{figure}

\begin{figure}
    \centering
    \makebox[\textwidth]{\includegraphics[width=1.05\textwidth]{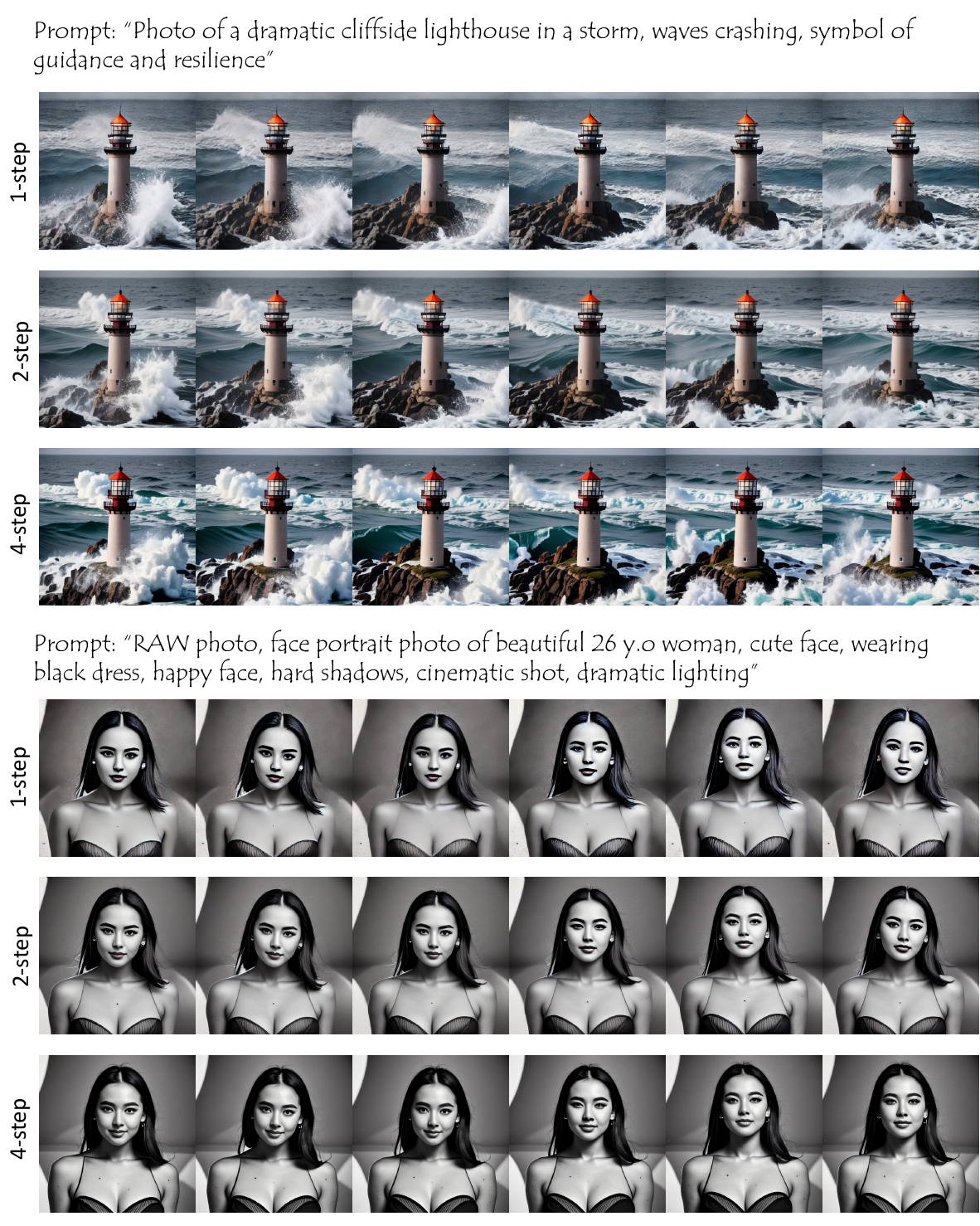}}
    \caption{PCM video generation results with Stable-Diffusion v1-5 under $1\sim 4$ inference steps.}
    \label{fig:teaser-sd15-video-step-compariosn-2}
\end{figure}

\begin{figure}
    \centering
    \makebox[\textwidth]{\includegraphics[width=1.05\textwidth]{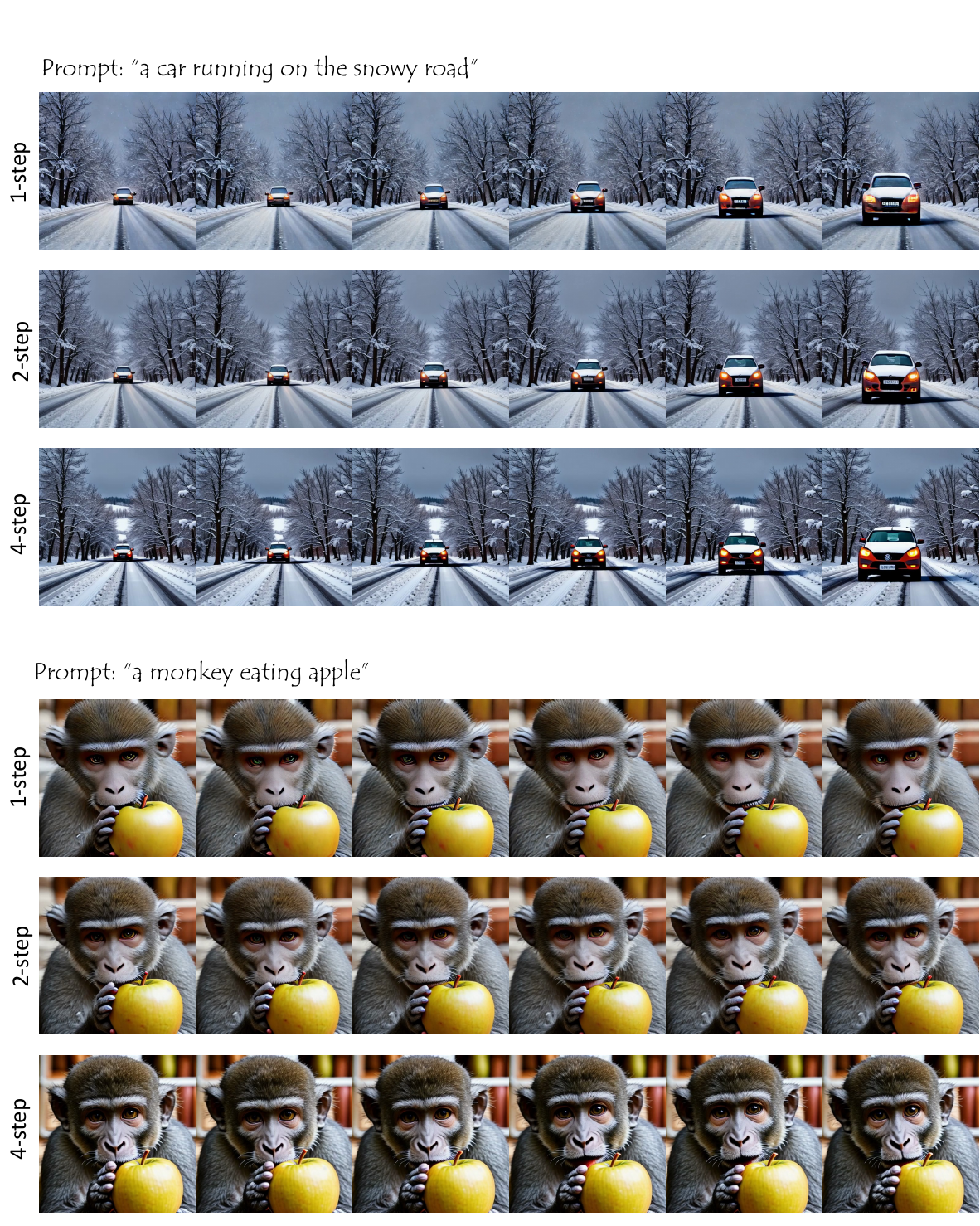}}
    \caption{PCM video generation results with Stable-Diffusion v1-5 under $1\sim 4$ inference steps.}
    \label{fig:teaser-sd15-video-step-compariosn-3}
\end{figure}

\begin{figure}
    \centering
    \makebox[\textwidth]{\includegraphics[width=1.05\textwidth]{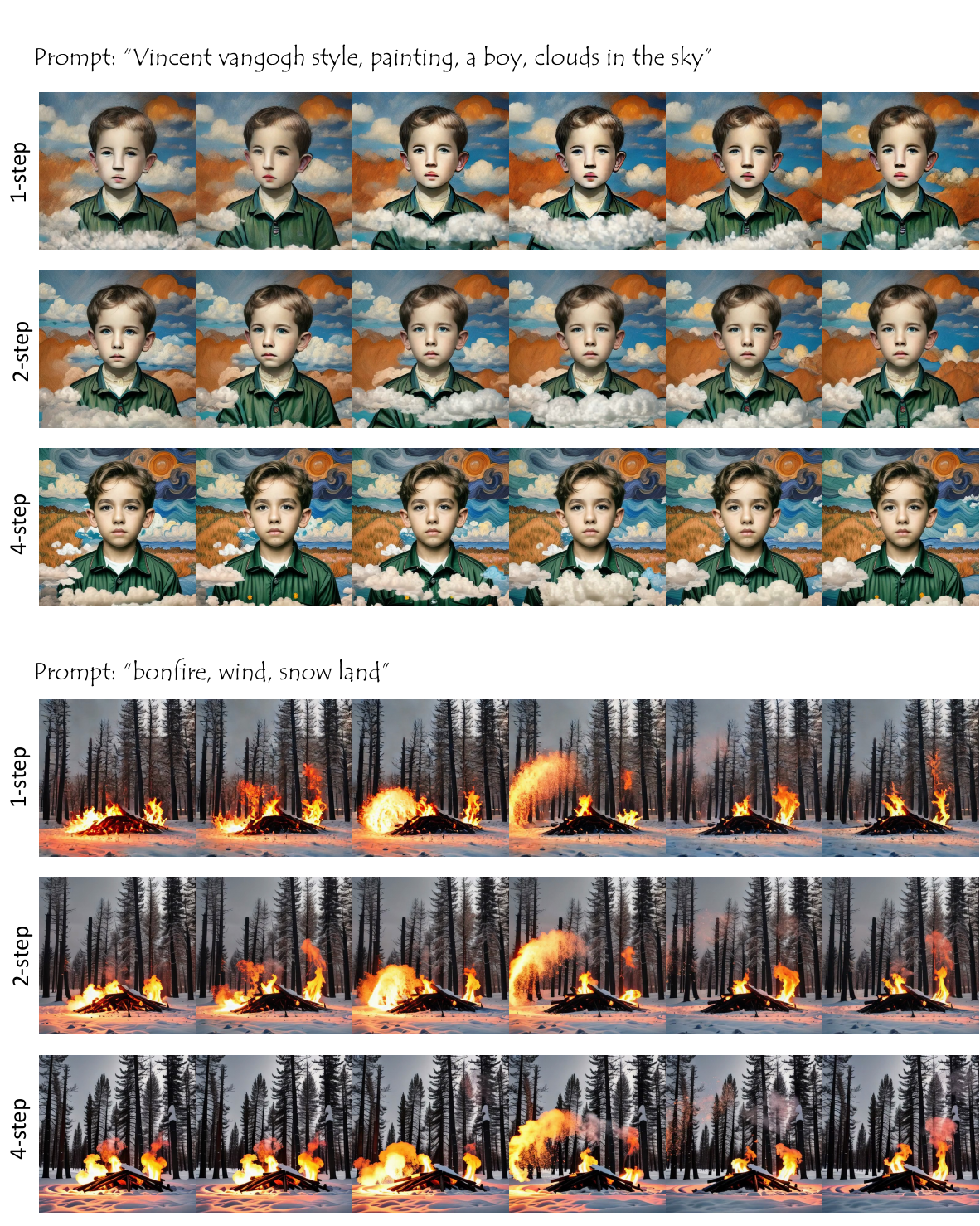}}
    \caption{PCM video generation results with Stable-Diffusion v1-5 under $1\sim 4$ inference steps.}
    \label{fig:teaser-sd15-video-step-compariosn-4}
\end{figure}

\begin{figure}
    \centering
    \makebox[\textwidth]{\includegraphics[width=1.05\textwidth]{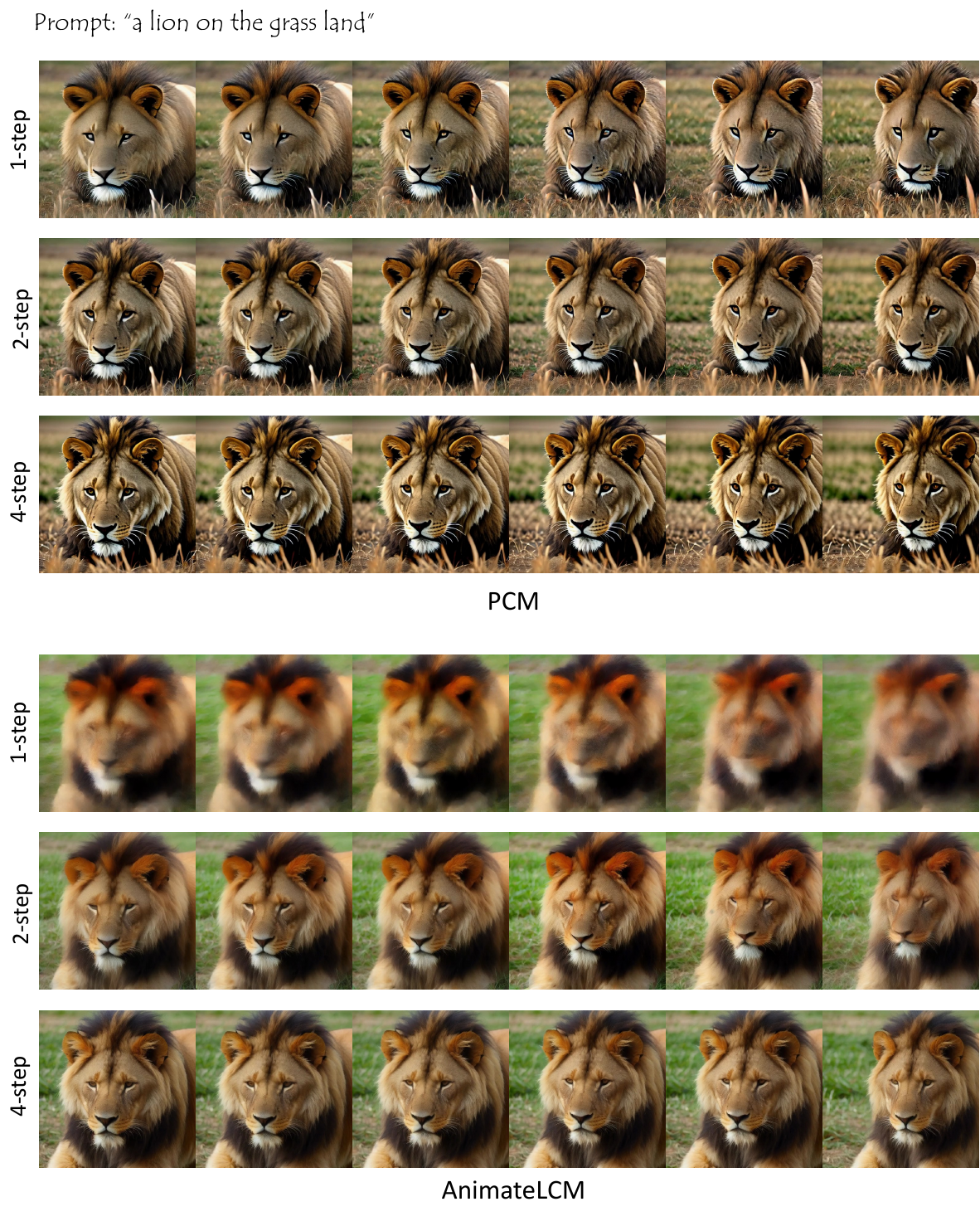}}
    \caption{PCM video generation results comparison with AnimateLCM under $1\sim 4$ inference steps.}
    \label{fig:teaser-sd15-video-comparison-1}
\end{figure}

\begin{figure}
    \centering
    \makebox[\textwidth]{\includegraphics[width=1.05\textwidth]{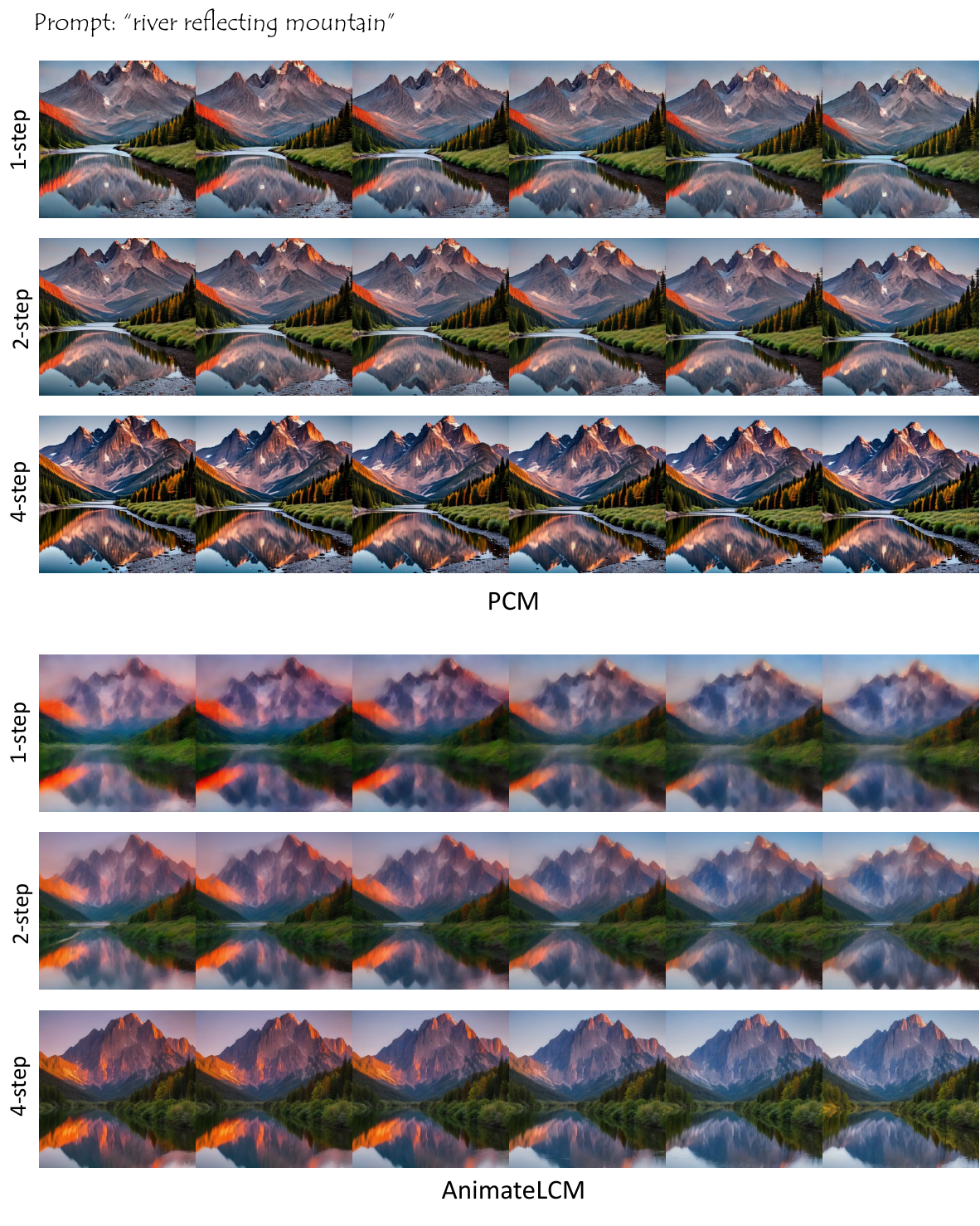}}
    \caption{PCM video generation results comparison with AnimateLCM under $1\sim 4$ inference steps.}
    \label{fig:teaser-sd15-video-comparison-2}
\end{figure}

\begin{figure}
    \centering
    \makebox[\textwidth]{\includegraphics[width=1.05\textwidth]{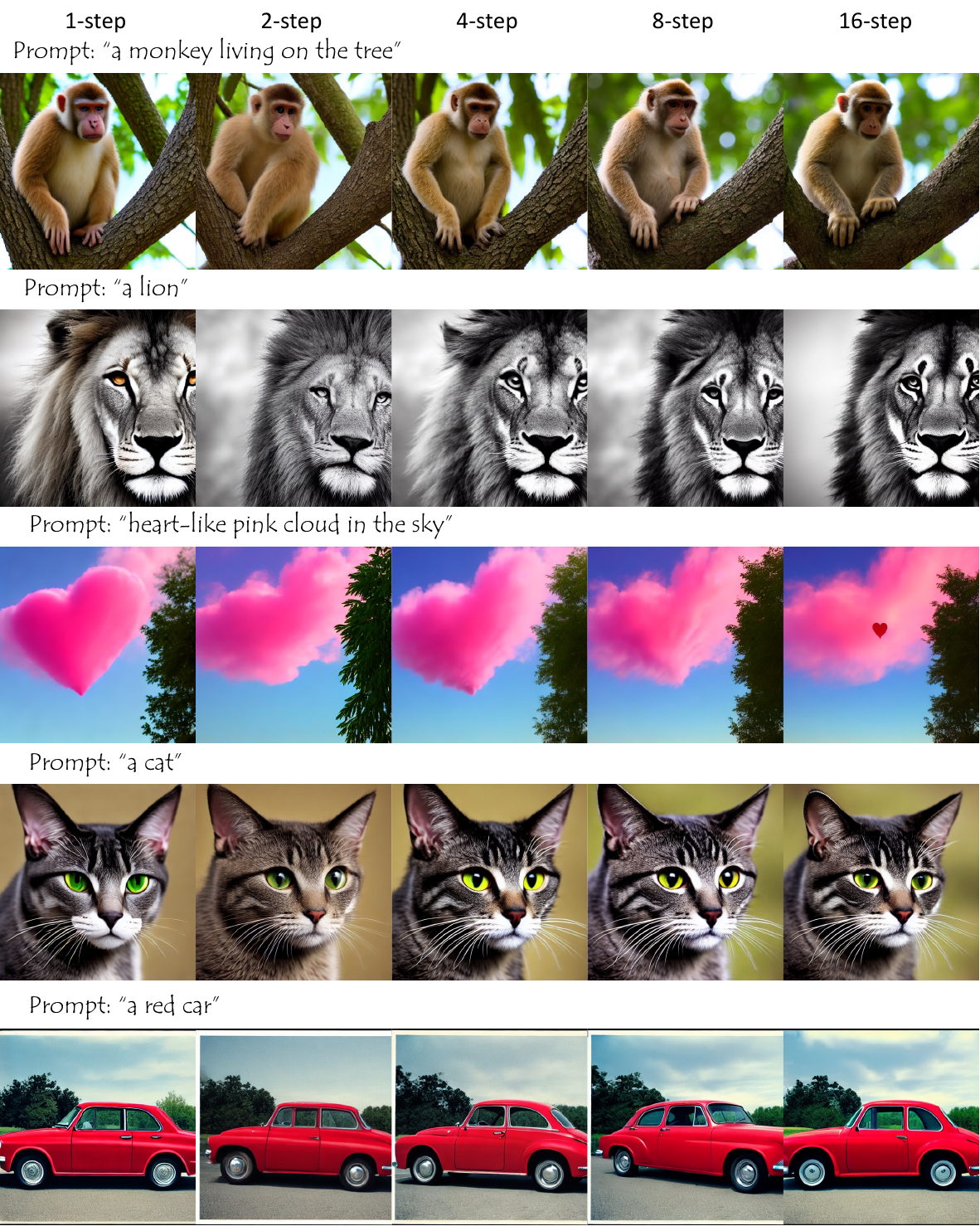}}
    \caption{PCM generation results with Stable-Diffusion v1-5 under different inference steps.}
    \label{fig:teaser-sd15-step-compariosn-1}
\end{figure}

\begin{figure}
    \centering
    \makebox[\textwidth]{\includegraphics[width=1.05\textwidth]{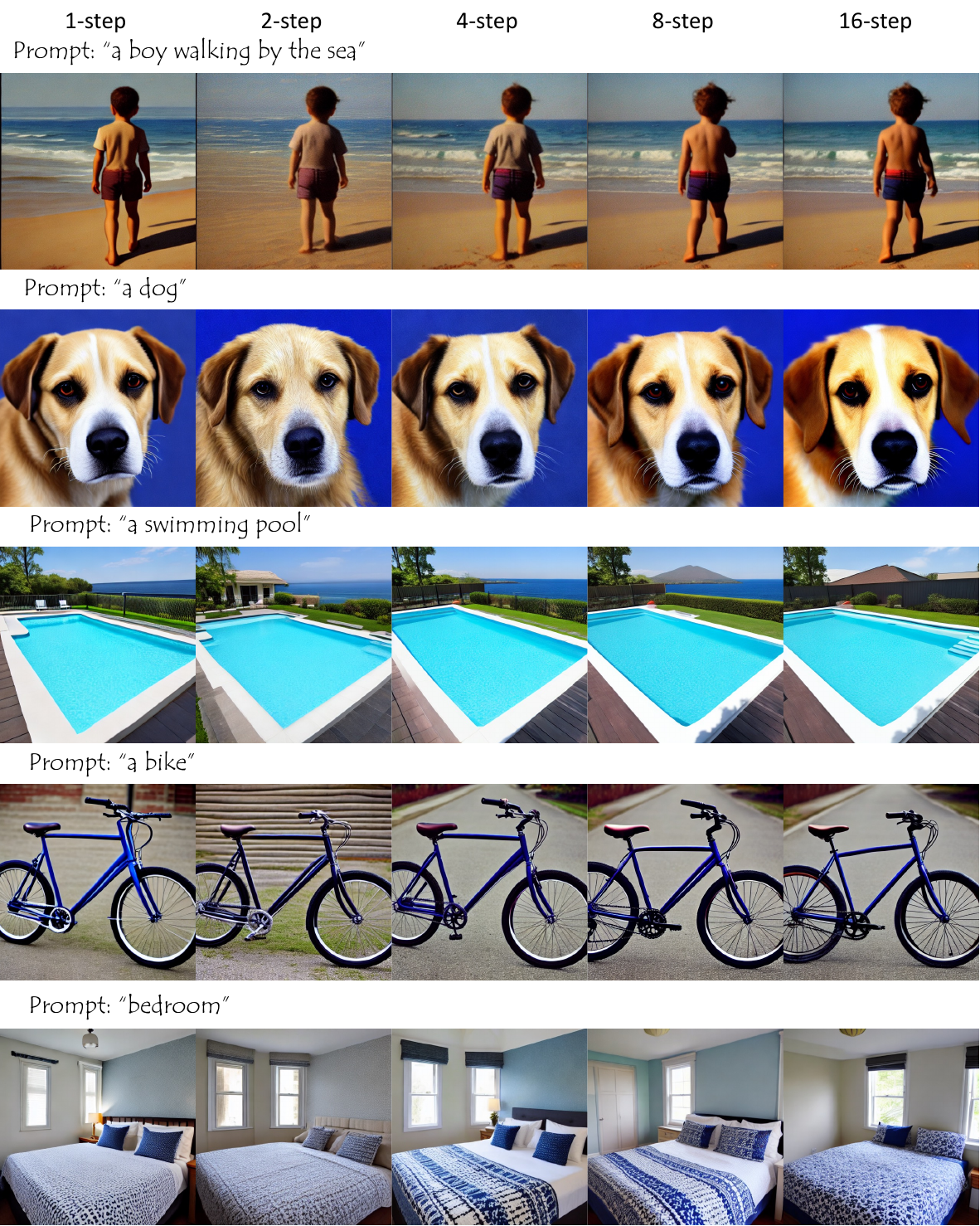}}
    \caption{PCM generation results with Stable-Diffusion v1-5 under different inference steps.}
    \label{fig:teaser-sd15-step-comparison-2}
\end{figure}

\begin{figure}
    \centering
    \makebox[\textwidth]{\includegraphics[width=1.05\textwidth]{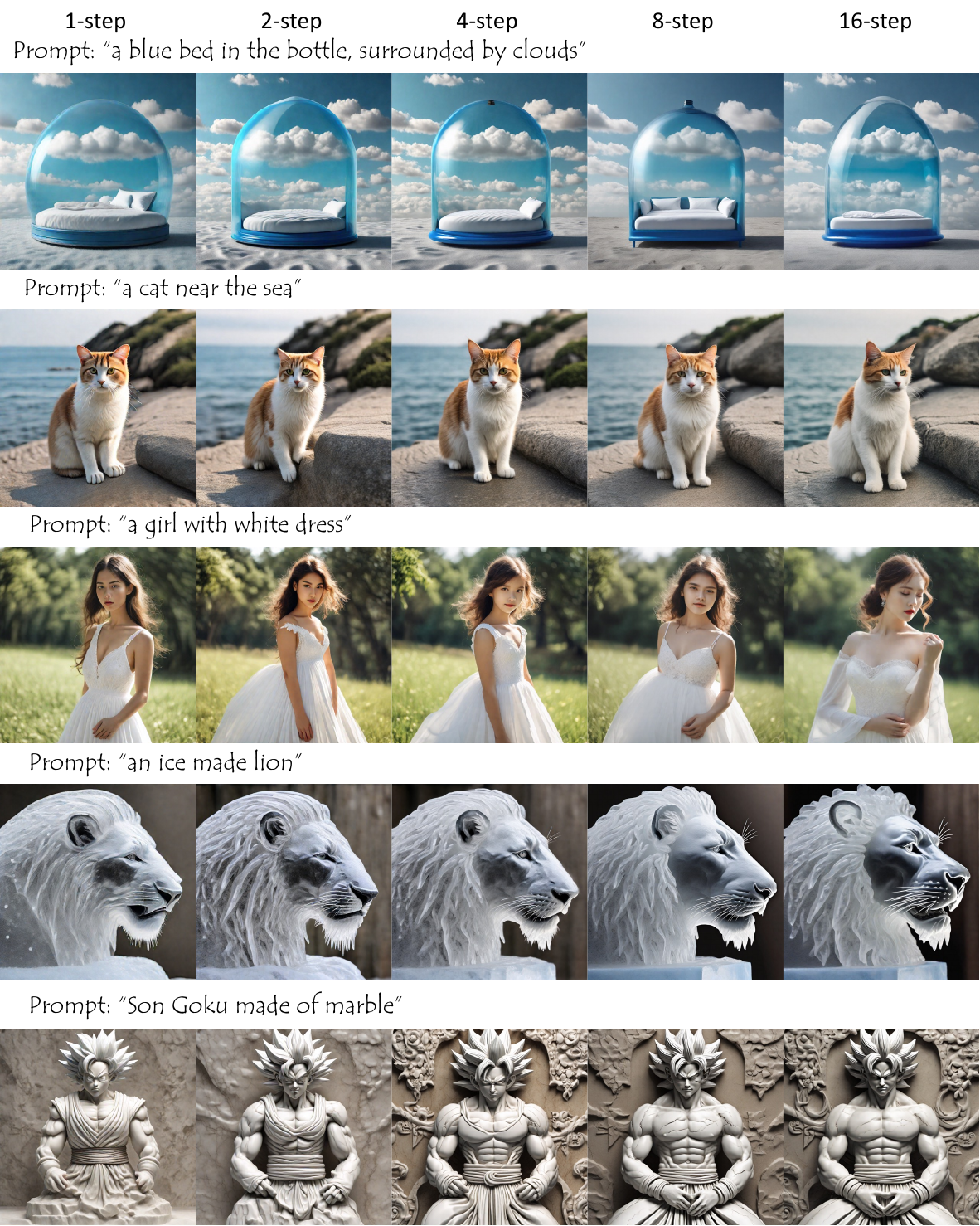}}
    \caption{PCM generation results with Stable-Diffusion XL under different inference steps.}
    \label{fig:teaser-sdxl-step-comparison-1}
\end{figure}

\begin{figure}
    \centering
    \makebox[\textwidth]{\includegraphics[width=1.05\textwidth]{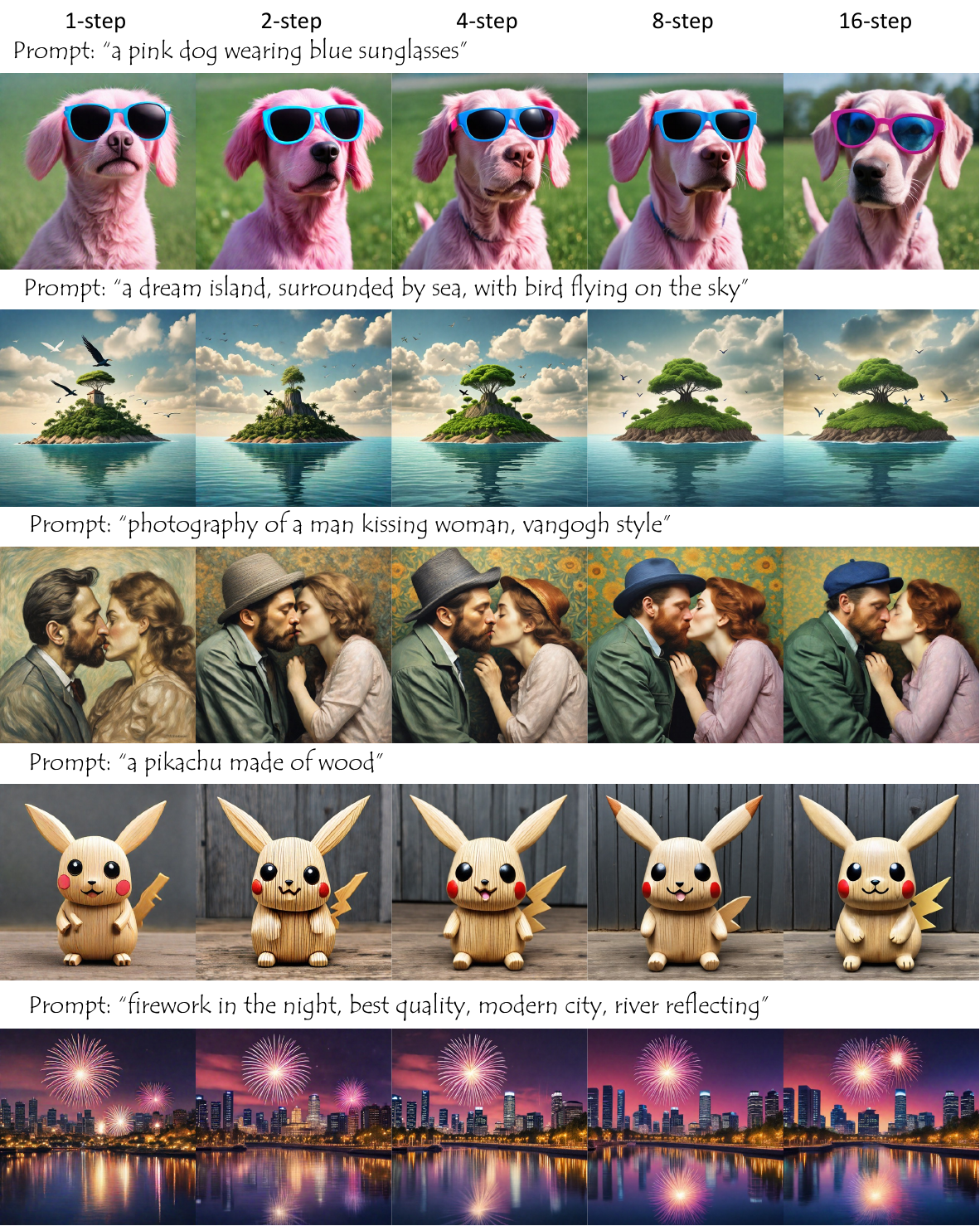}}
    \caption{PCM generation results with Stable-Diffusion XL under different inference steps.}
    \label{fig:teaser-sdxl-step-comparison-2}
\end{figure}

\begin{figure}
    \centering
    \makebox[\textwidth]{\includegraphics[width=1.05\textwidth]{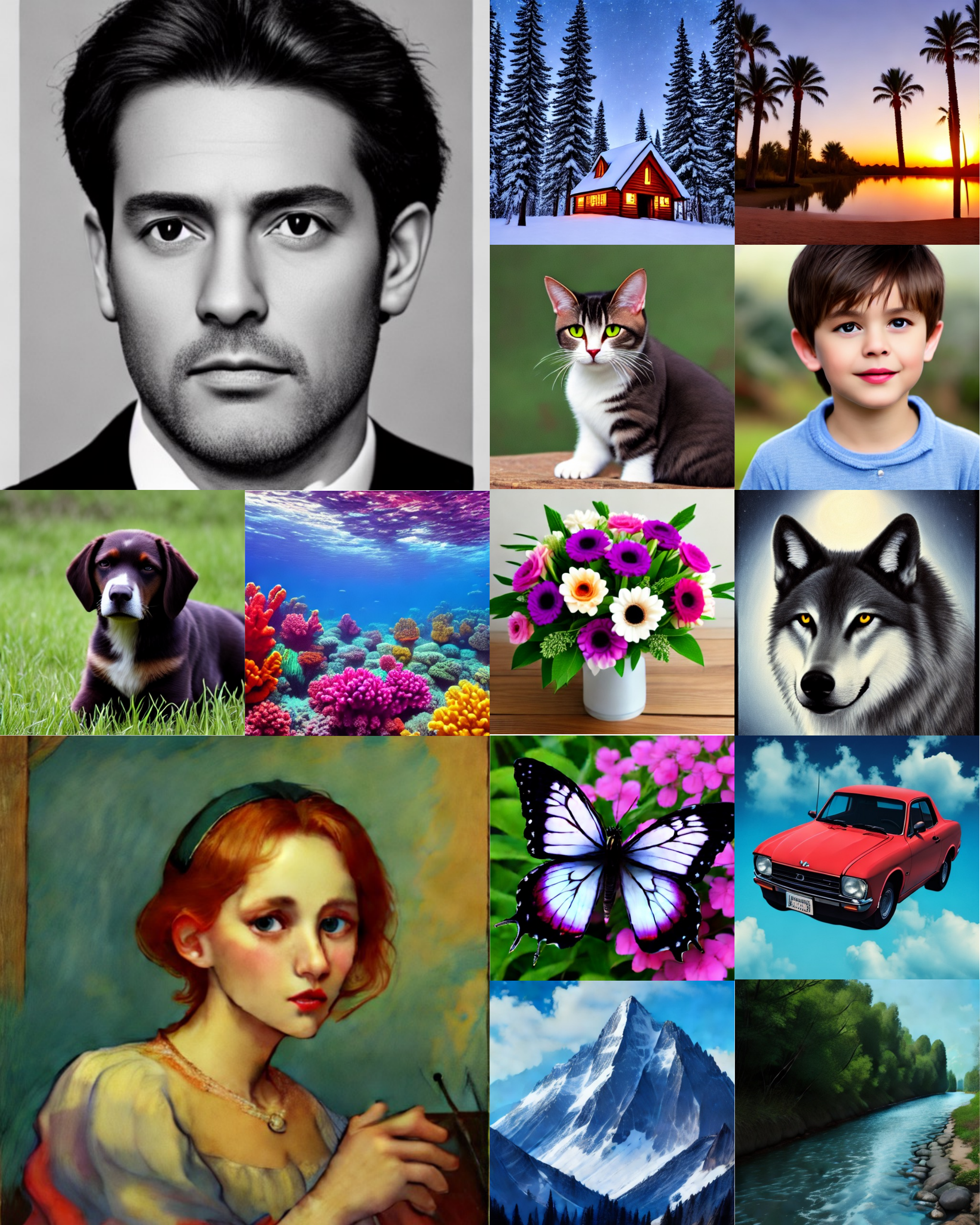}}
    \caption{PCM generation 1-step results with Stable-Diffusion v1-5.}
    \label{fig:teaser-sd15-1step}
\end{figure}

\begin{figure}
    \centering
    \makebox[\textwidth]{\includegraphics[width=1.05\textwidth]{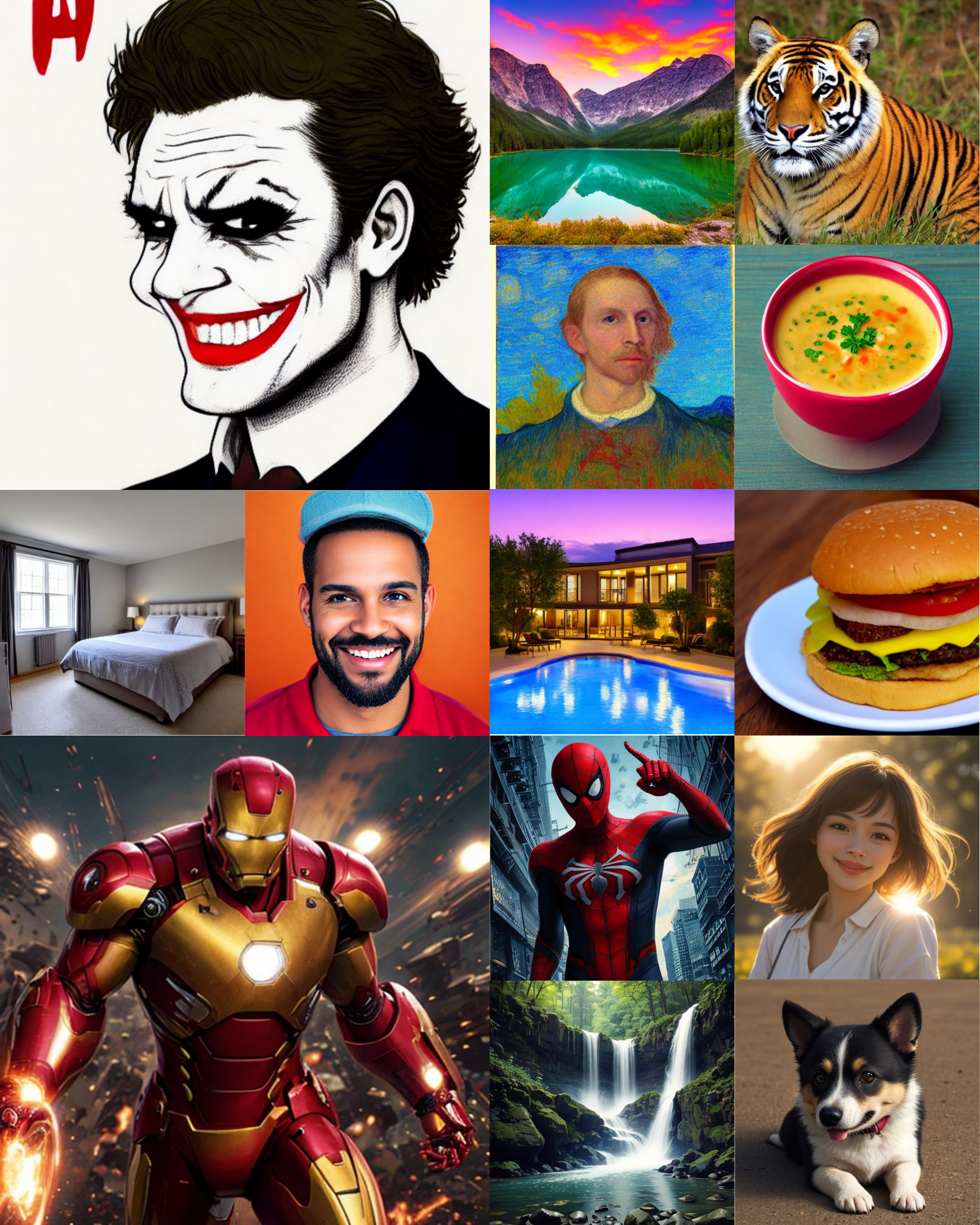}}
    \caption{PCM generation 2-step results with Stable-Diffusion v1-5.}
    \label{fig:teaser-sd15-2step}
\end{figure}

\begin{figure}
    \centering
    \makebox[\textwidth]{\includegraphics[width=1.05\textwidth]{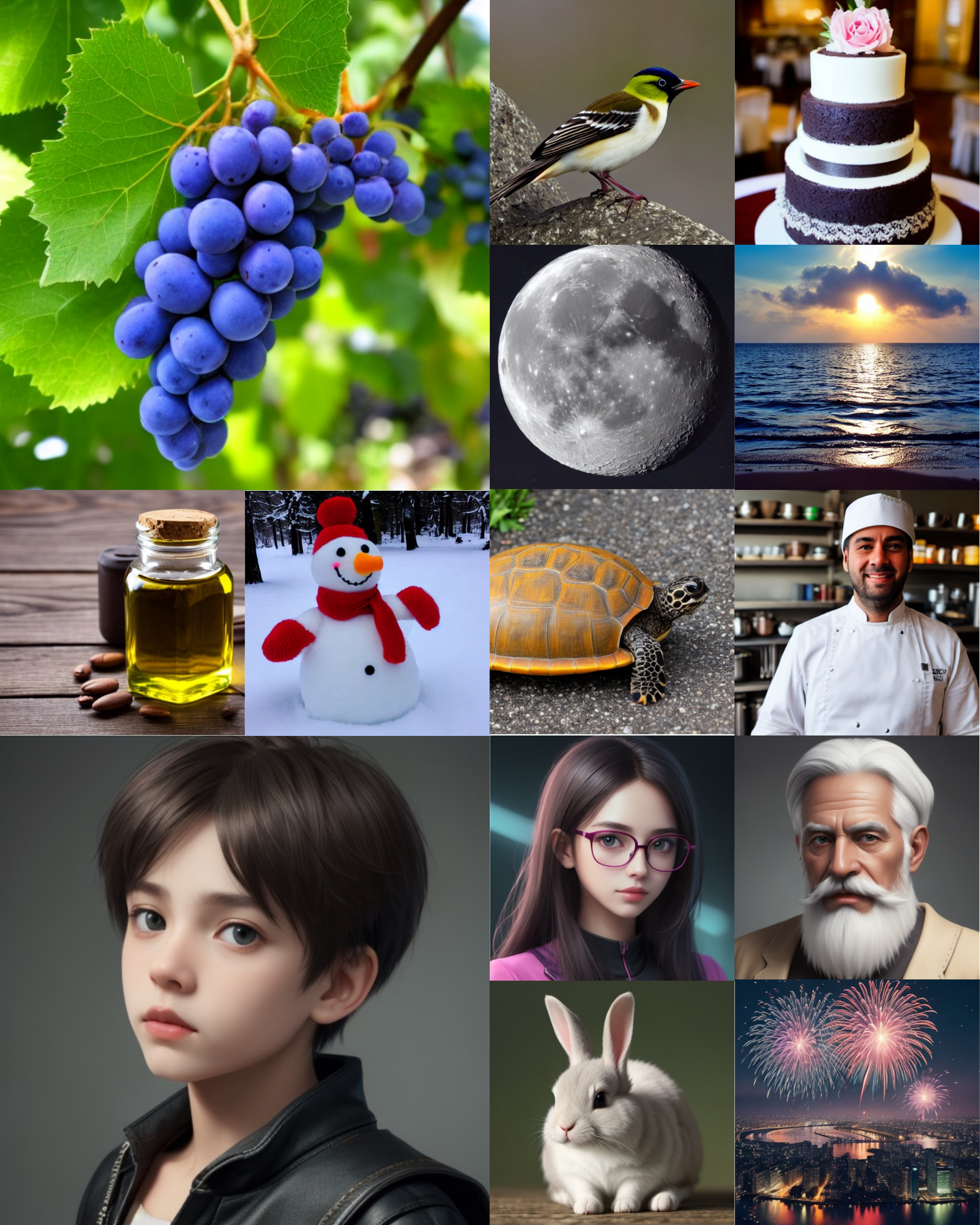}}
    \caption{PCM generation 4-step results with Stable-Diffusion v1-5.}
    \label{fig:teaser-sd15-4step}
\end{figure}

\begin{figure}
    \centering
    \makebox[\textwidth]{\includegraphics[width=1.05\textwidth]{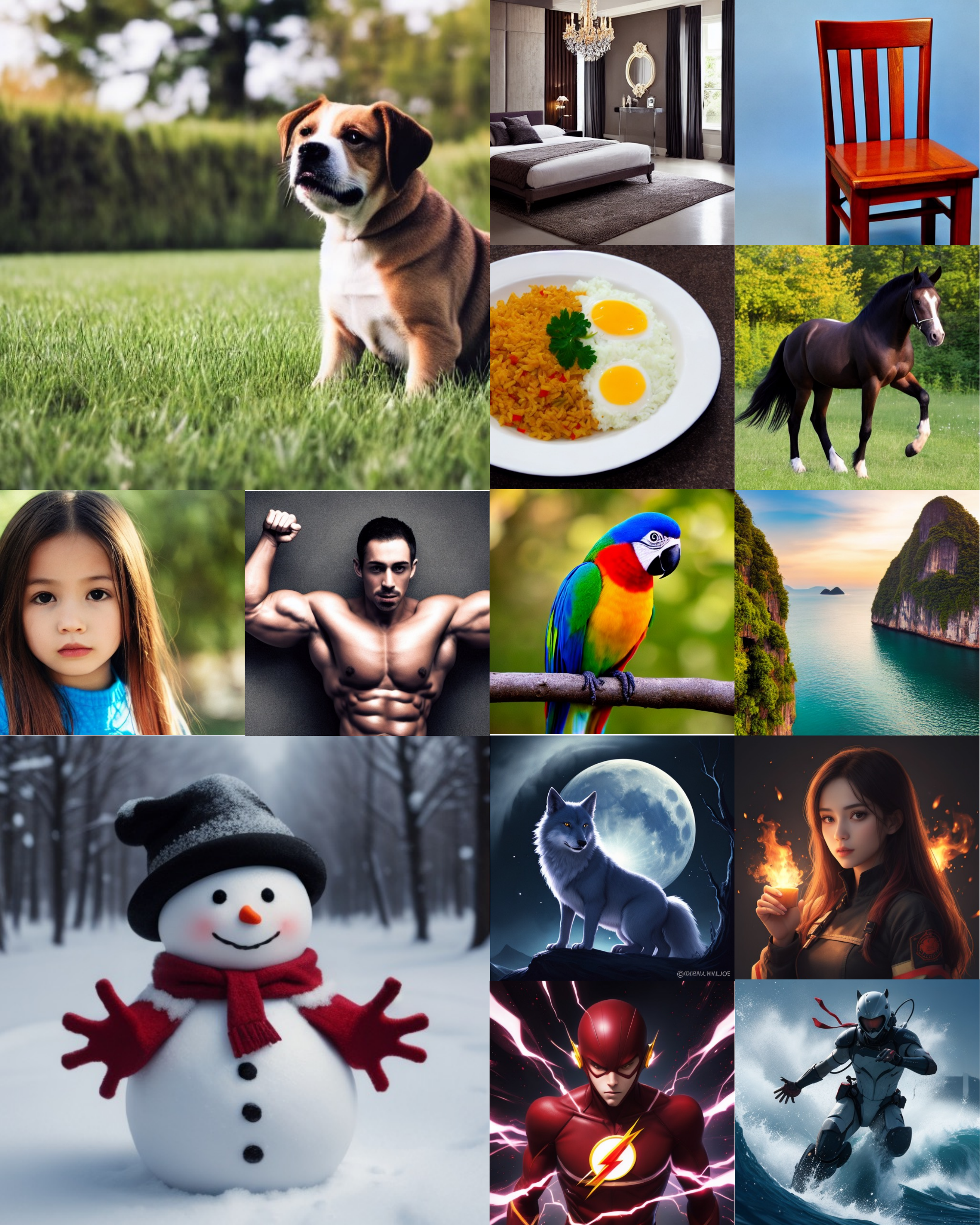}}
    \caption{PCM generation 8-step results with Stable-Diffusion v1-5.}
    \label{fig:teaser-sd15-8step}
\end{figure}

\begin{figure}
    \centering
    \makebox[\textwidth]{\includegraphics[width=1.05\textwidth]{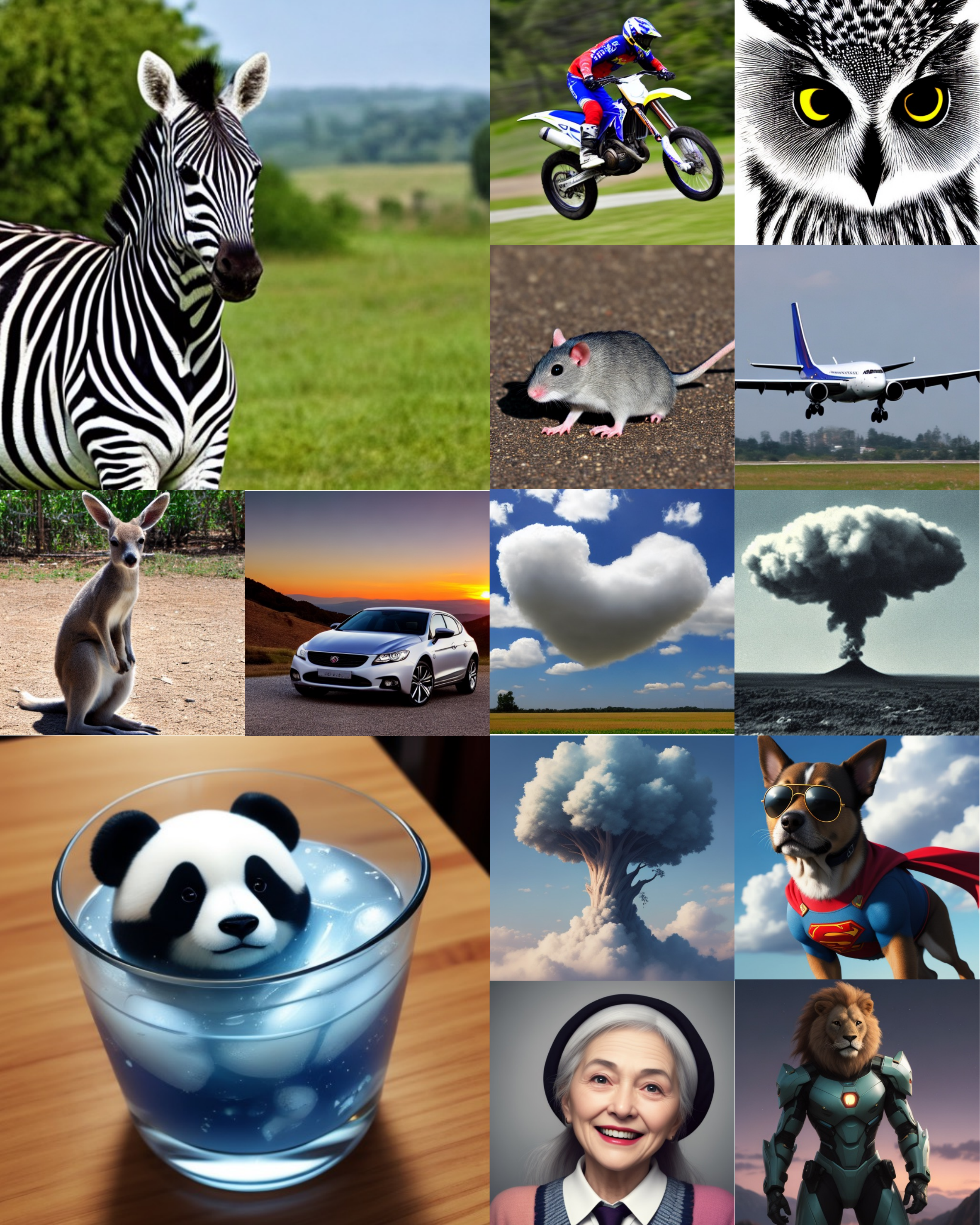}}
    \caption{PCM generation 16-step results with Stable-Diffusion v1-5.}
    \label{fig:teaser-sd15-16step}
\end{figure}

\begin{figure}
    \centering
    \makebox[\textwidth]{\includegraphics[width=1.05\textwidth]{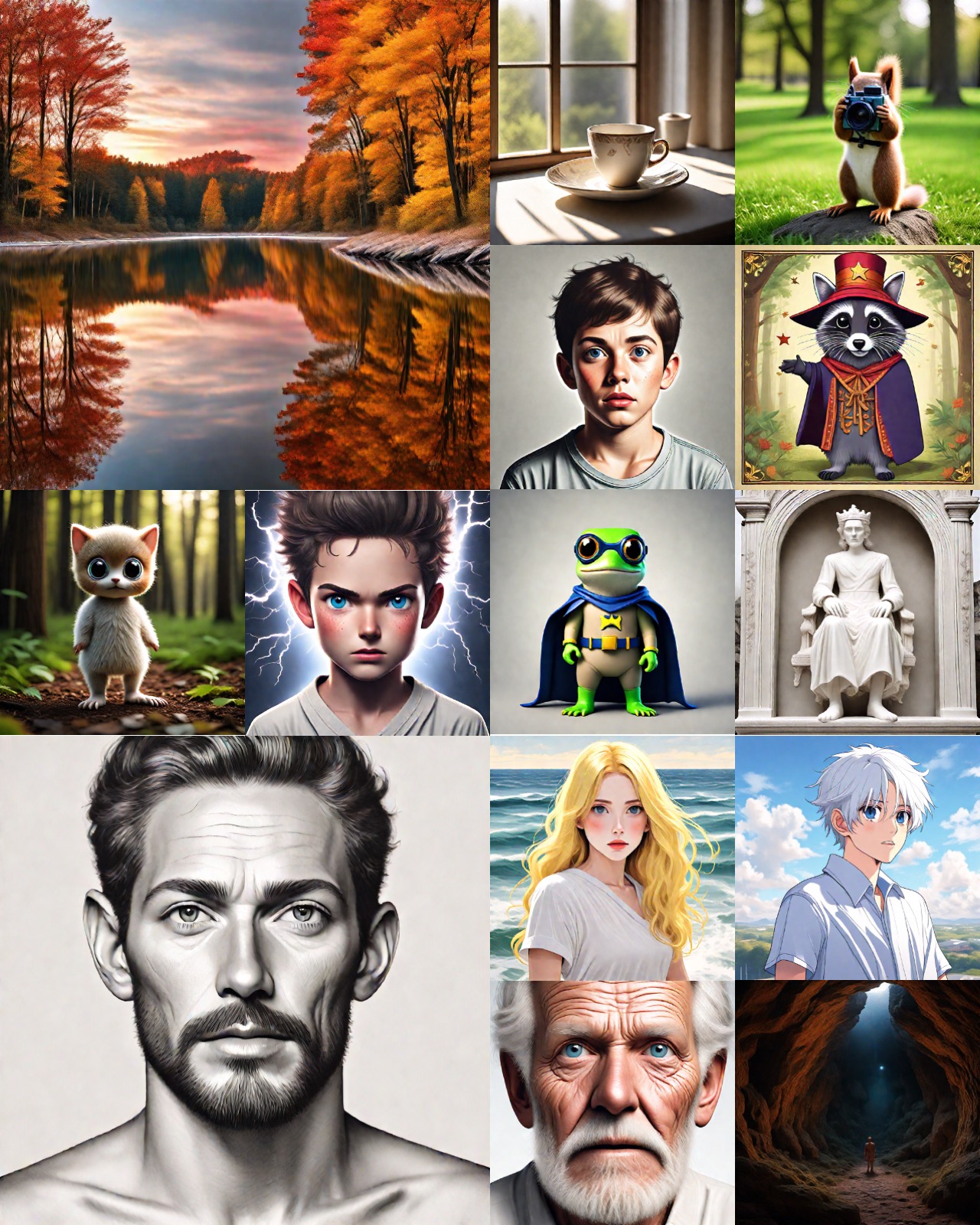}}
    \caption{PCM generation 1-step results with Stable-Diffusion XL.}
    \label{fig:teaser-sdxl-1step}
\end{figure}

\begin{figure}
    \centering
    \makebox[\textwidth]{\includegraphics[width=1.05\textwidth]{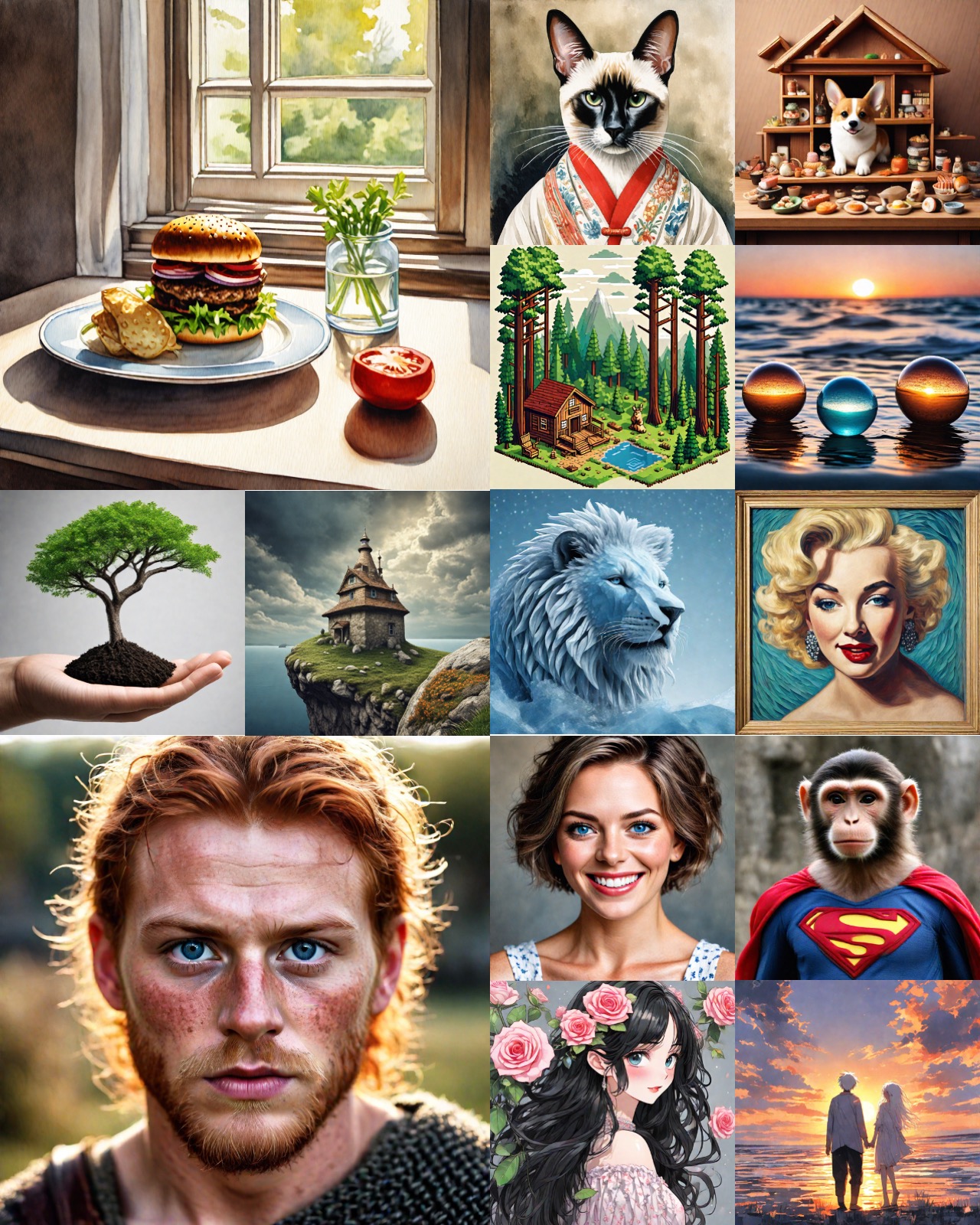}}
    \caption{PCM generation 2-step results with Stable-Diffusion XL.}
    \label{fig:teaser-sdxl-2step}
\end{figure}

\begin{figure}
    \centering
    \makebox[\textwidth]{\includegraphics[width=1.05\textwidth]{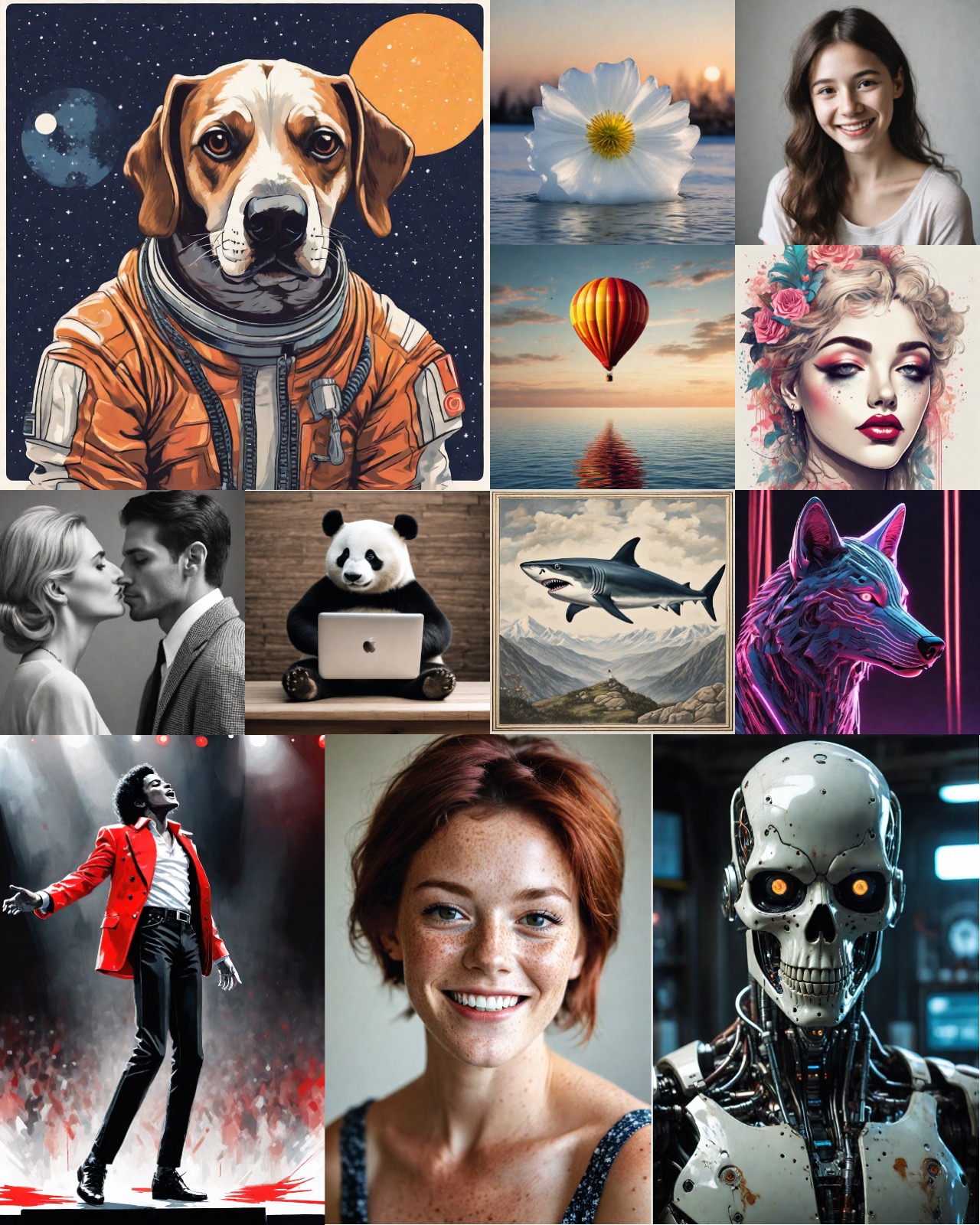}}
    \caption{PCM generation 4-step results with Stable-Diffusion XL.}
    \label{fig:teaser-sdxl-4step}
\end{figure}

\begin{figure}
    \centering
    \makebox[\textwidth]{\includegraphics[width=1.05\textwidth]{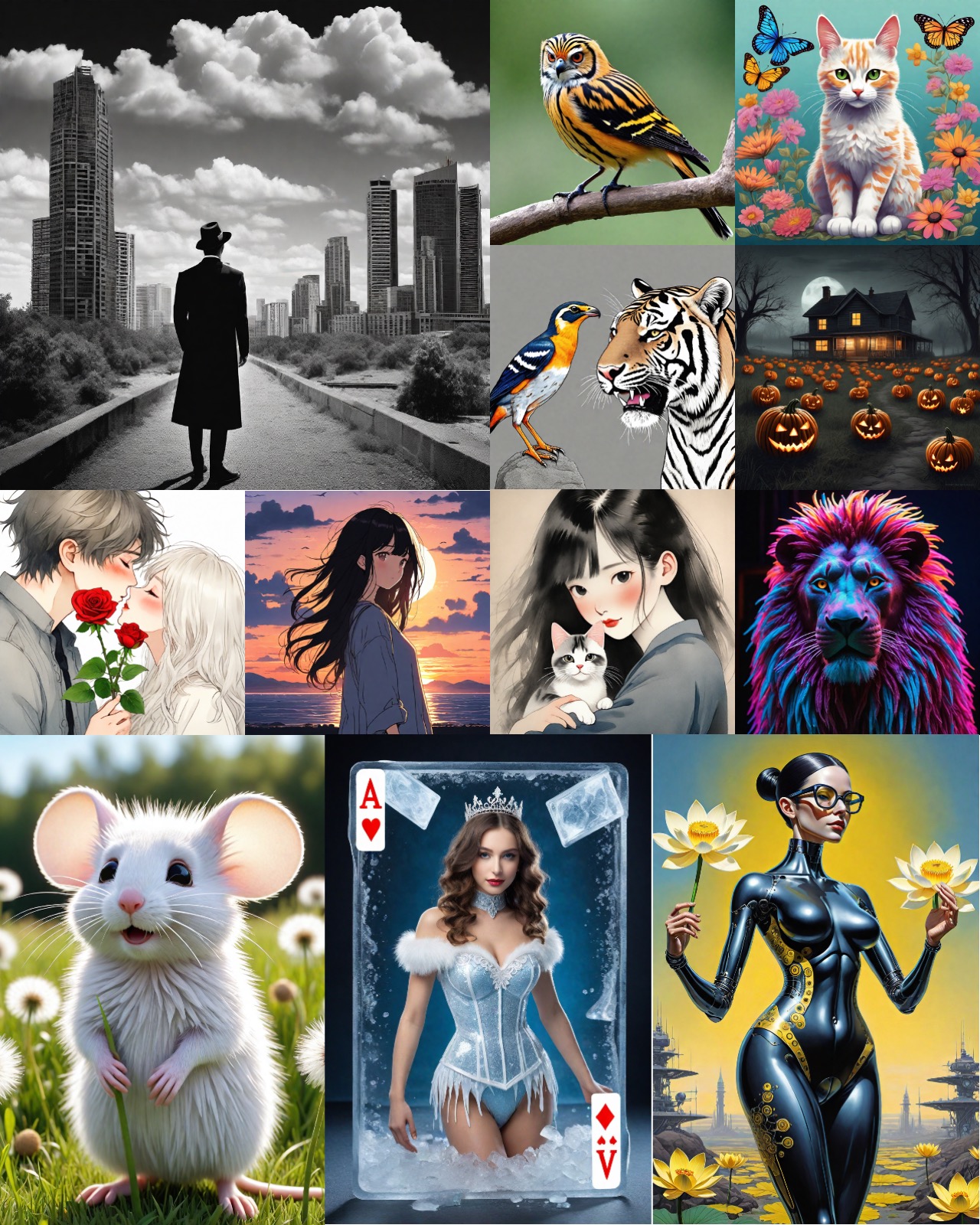}}
    \caption{PCM generation 8-step results with Stable-Diffusion XL.}
    \label{fig:teaser-sdxl-8step}
\end{figure}

\begin{figure}
    \centering
    \makebox[\textwidth]{\includegraphics[width=1.05\textwidth]{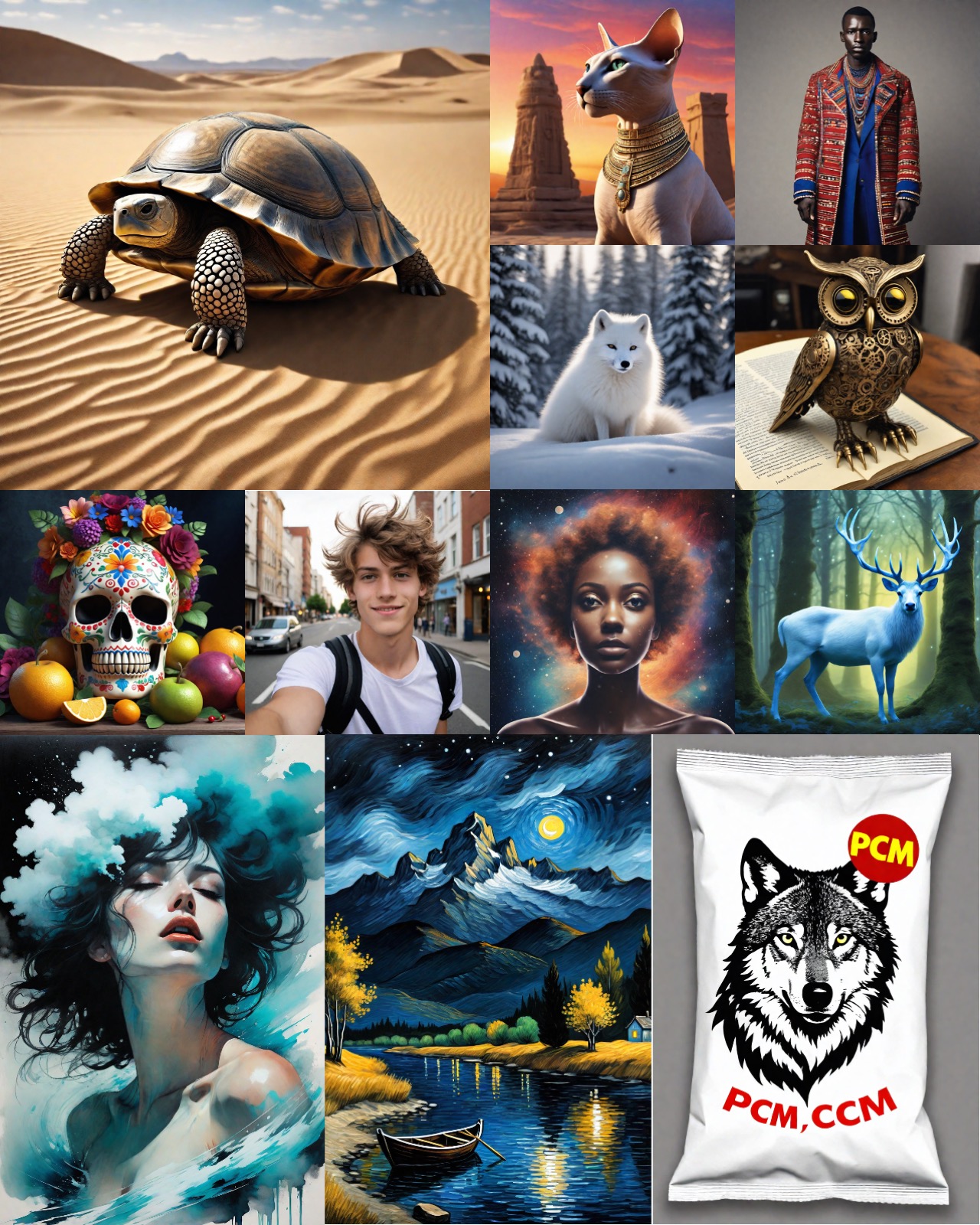}}
    \caption{PCM generation 16-step results with Stable-Diffusion XL.}
    \label{fig:teaser-sdxl-16step}
\end{figure}

\begin{figure}
    \centering
    \makebox[\textwidth]{\includegraphics[width=1.05\textwidth]{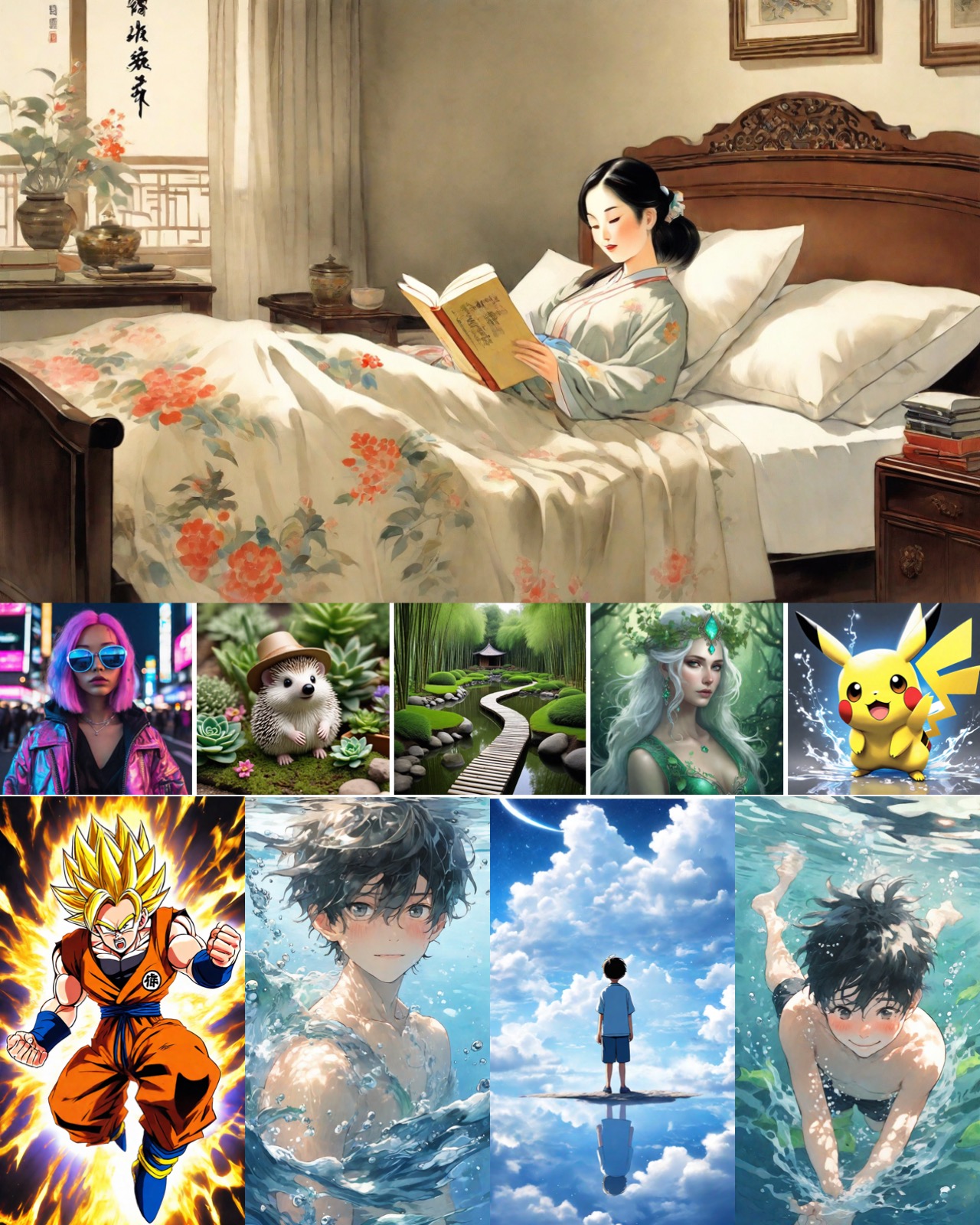}}
    \caption{PCM generation 16-step results with Stable-Diffusion XL.}
    \label{fig:teaser-sdxl-16-2step}
\end{figure}

\begin{figure}
    \centering
    \makebox[\textwidth]{\includegraphics[width=1.05\textwidth]{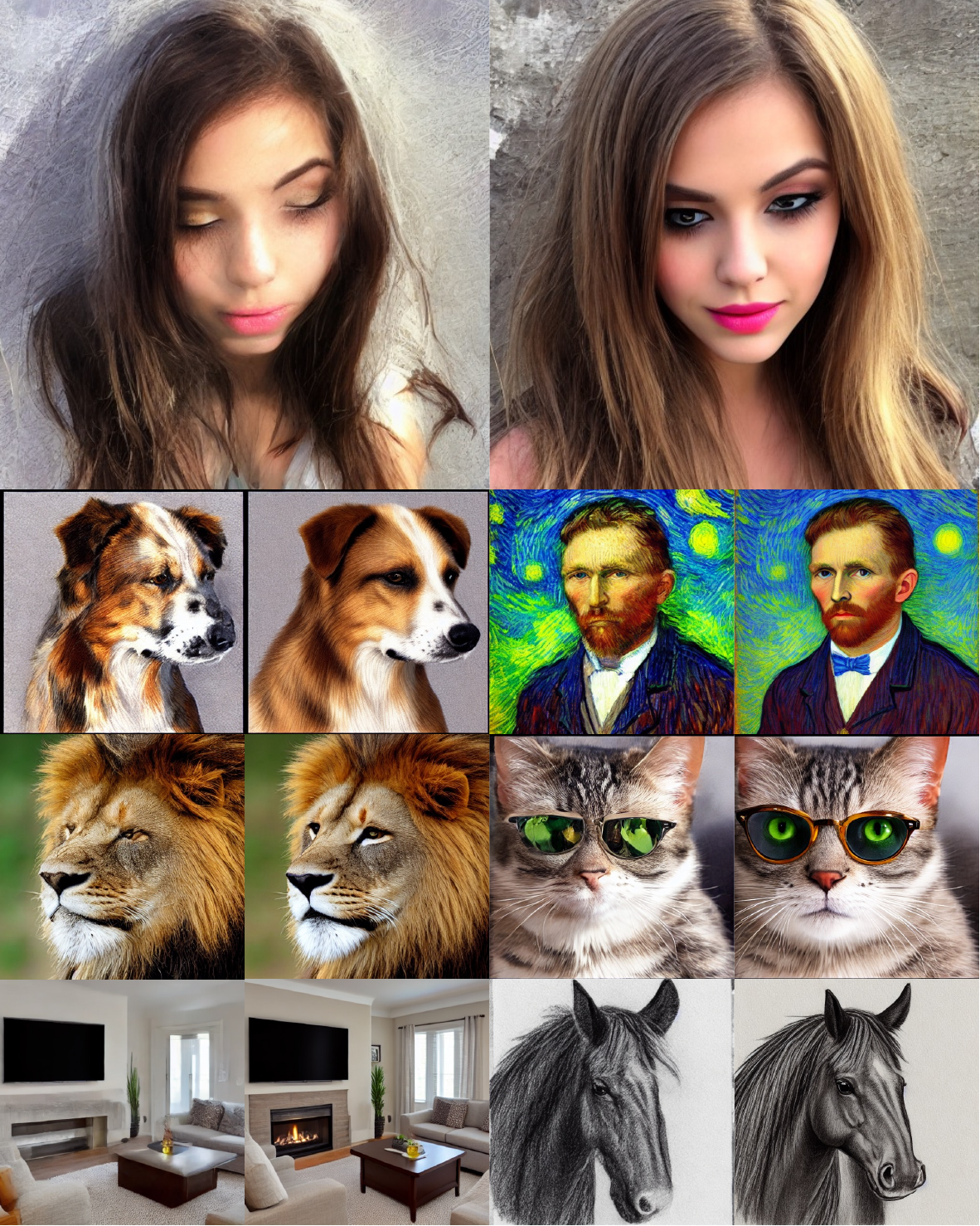}}
    \caption{Visual examples of ablation study on the proposed distribution consistency loss. Left: Results generated without the distribution consistency loss. Right: Results generated with the distribution consistency loss.}
    \label{fig:ablation-distribution-consistency-loss}
\end{figure}

\begin{figure}
    \vspace{-20mm}
    \centering
    \makebox[\textwidth]{\includegraphics[width=0.75\textwidth]{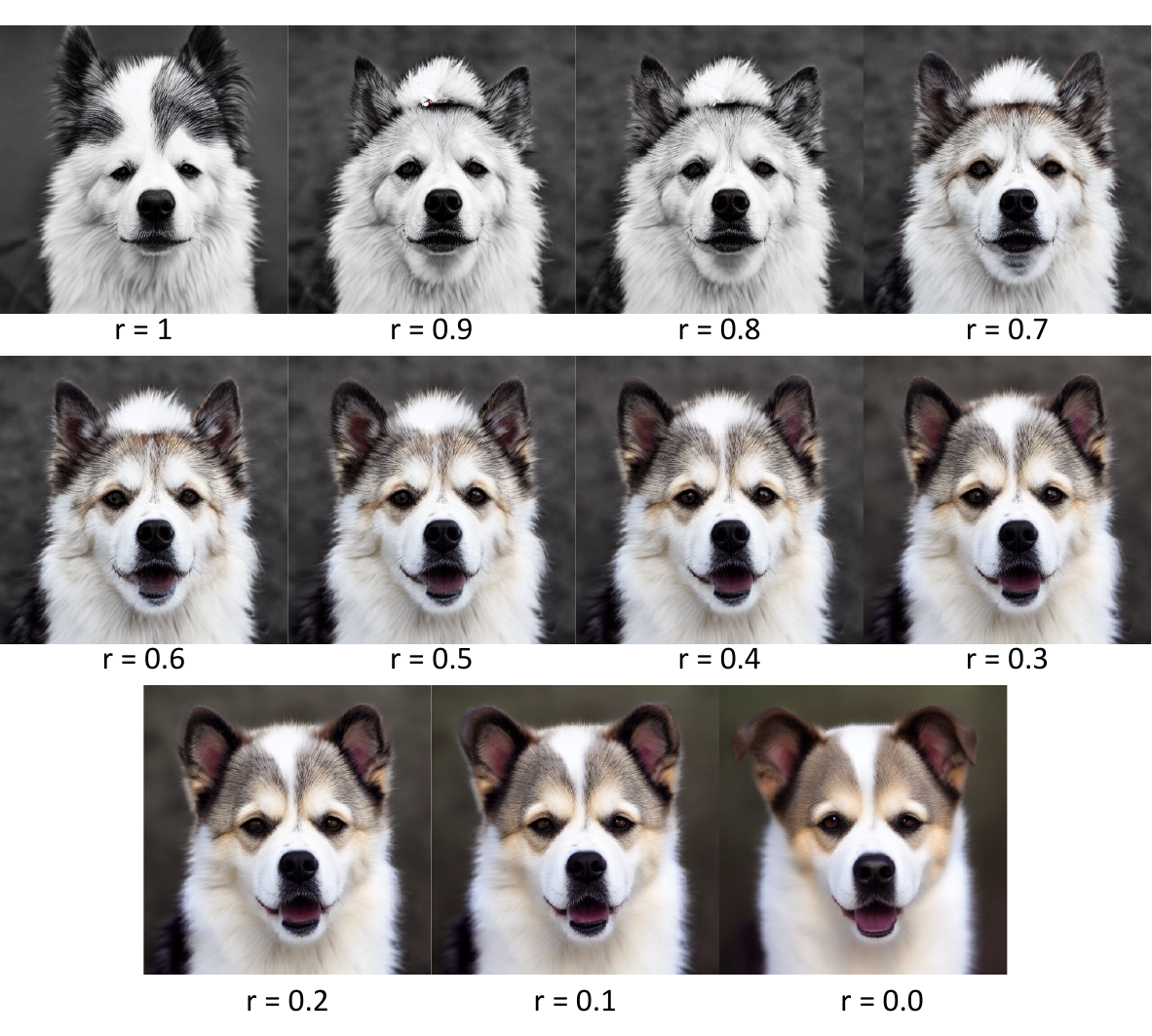}}
    \vspace{-15mm}
\end{figure}

\begin{figure}
    \centering
    \makebox[\textwidth]{\includegraphics[width=0.75\textwidth]{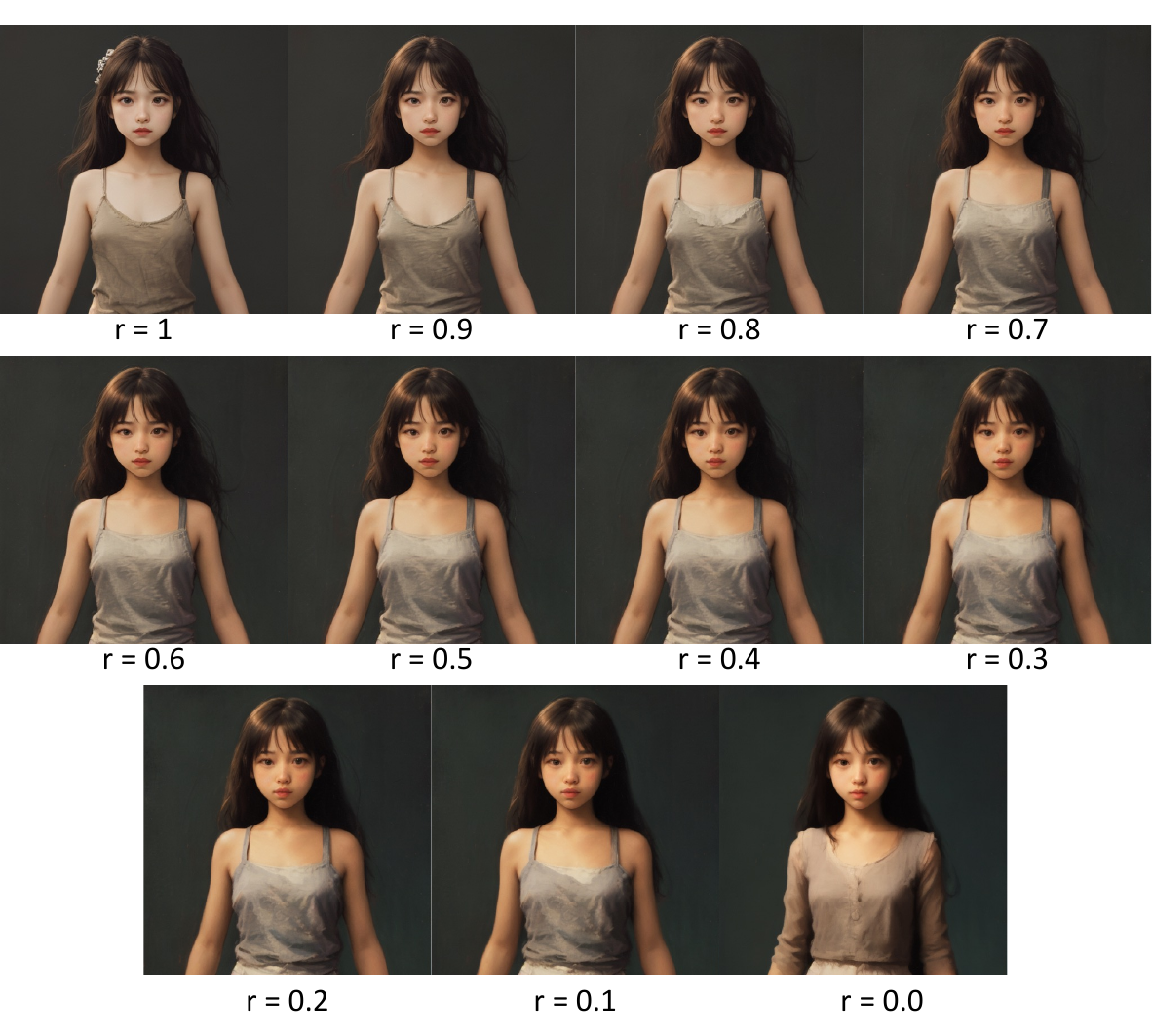}}
    \vspace{-8mm}
    \caption{Visual examples of ablation study on the proposed way for stochastic sampling. Pure deterministic sampling algorithms sometimes bring artifacts in the generated results. Adding stochasticity to a certain degree can alleviate those artifacts.}
    \label{fig:ablation-stochastic}
    \vspace{-4mm}
\end{figure}

\begin{figure}
    \centering
    \makebox[\textwidth]{\includegraphics[width=1.05\textwidth]{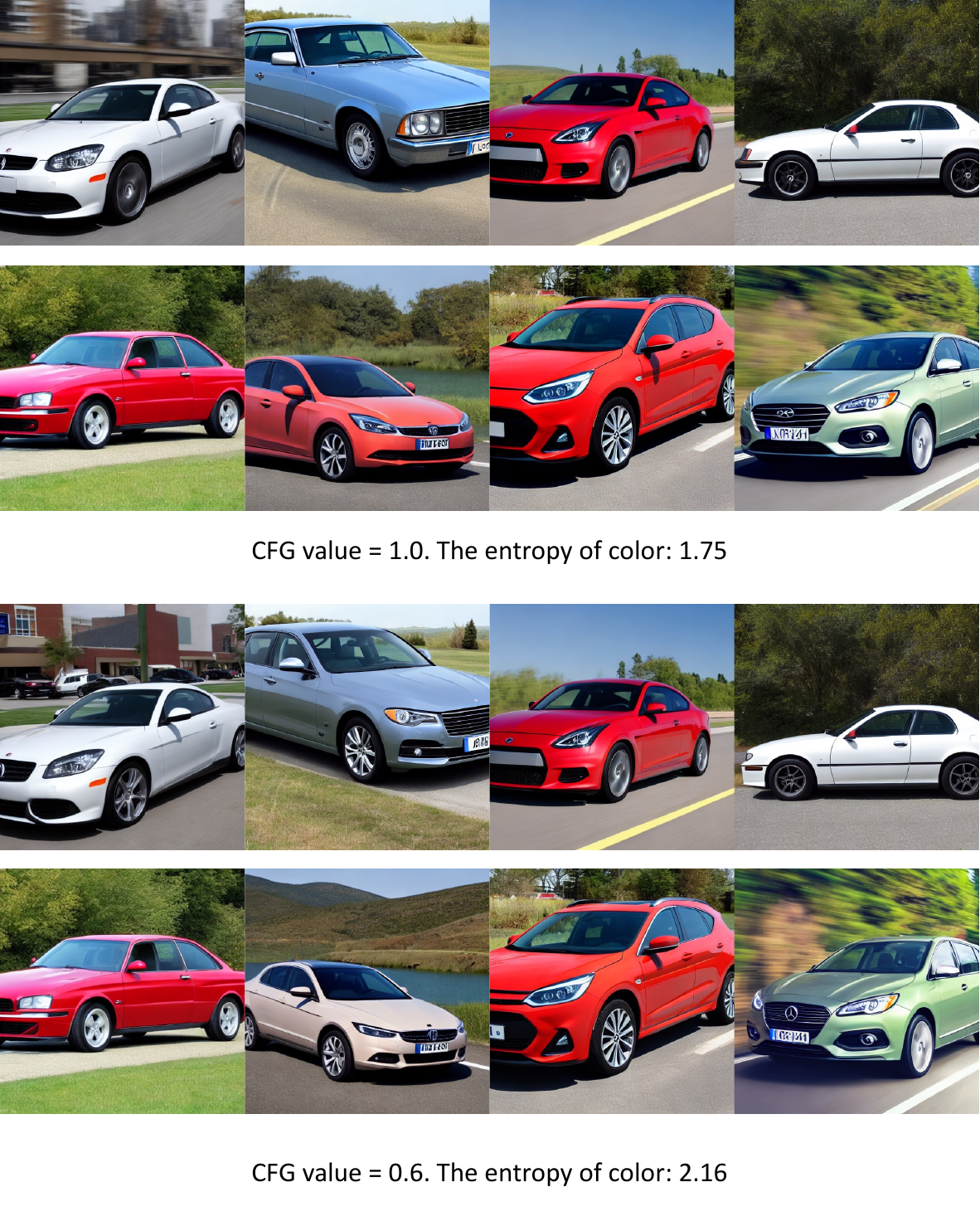}}
    \caption{Visual examples of ablation study on the proposed strategy for promoting diversity. Upper: Batch samples generated with prompt "a car" with normal CFG=1.0 value in 4-step. Lower: Batch sample generated with prompt "a car" with our proposed strategy with CFG=0.6. }
    \label{fig:ablation-cfg-1}
\end{figure}

\begin{figure}
    \centering
    \makebox[\textwidth]{\includegraphics[width=1.05\textwidth]{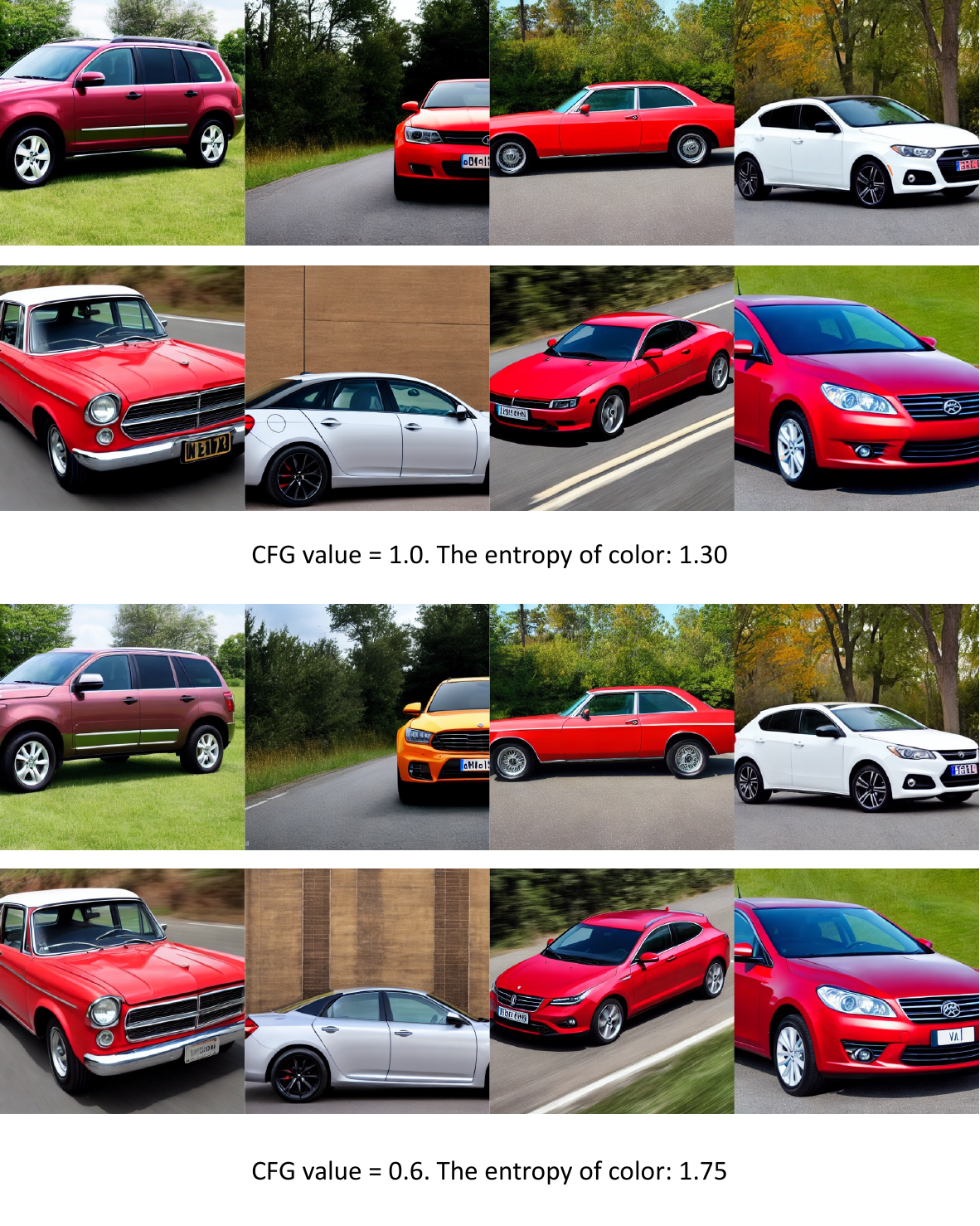}}
    \caption{Visual examples of ablation study on the proposed strategy for promoting diversity. Upper: Batch samples generated with prompt "a car" with normal CFG=1.0 value in 4-step. Lower: Batch sample generated with prompt "a car" with our proposed strategy with CFG=0.6. }
    \label{fig:ablation-cfg-2}
\end{figure}

\clearpage
\section*{NeurIPS Paper Checklist}

\begin{enumerate}

\item {\bf Claims}
    \item[] Question: Do the main claims made in the abstract and introduction accurately reflect the paper's contributions and scope?
    \item[] Answer: \answerYes{} %
    \item[] Justification: 
    We have claimed in the abstract and introduction that, in this paper, we identify three key flaws in the current design of LCM. We investigate the reasons behind these limitations and propose the Phased Consistency Model (PCM), which generalizes the design space and addresses all identified limitations. Accordingly, in Lines 32 to 42 in the introduction of the main paper, we reveal the limitations of LCM, and in Sec. \ref{sec:method} we propose PCM.
    \item[] Guidelines:
    \begin{itemize}
        \item The answer NA means that the abstract and introduction do not include the claims made in the paper.
        \item The abstract and/or introduction should clearly state the claims made, including the contributions made in the paper and important assumptions and limitations. A No or NA answer to this question will not be perceived well by the reviewers. 
        \item The claims made should match theoretical and experimental results, and reflect how much the results can be expected to generalize to other settings. 
        \item It is fine to include aspirational goals as motivation as long as it is clear that these goals are not attained by the paper. 
    \end{itemize}

\item {\bf Limitations}
    \item[] Question: Does the paper discuss the limitations of the work performed by the authors?
    \item[] Answer: \answerYes{}
    \item[] Justification: We discuss the limitations of work in Sec.~\ref{sec:limit}
    \item[] Guidelines:
    \begin{itemize}
        \item The answer NA means that the paper has no limitation while the answer No means that the paper has limitations, but those are not discussed in the paper. 
        \item The authors are encouraged to create a separate "Limitations" section in their paper.
        \item The paper should point out any strong assumptions and how robust the results are to violations of these assumptions (e.g., independence assumptions, noiseless settings, model well-specification, asymptotic approximations only holding locally). The authors should reflect on how these assumptions might be violated in practice and what the implications would be.
        \item The authors should reflect on the scope of the claims made, e.g., if the approach was only tested on a few datasets or with a few runs. In general, empirical results often depend on implicit assumptions, which should be articulated.
        \item The authors should reflect on the factors that influence the performance of the approach. For example, a facial recognition algorithm may perform poorly when image resolution is low or images are taken in low lighting. Or a speech-to-text system might not be used reliably to provide closed captions for online lectures because it fails to handle technical jargon.
        \item The authors should discuss the computational efficiency of the proposed algorithms and how they scale with dataset size.
        \item If applicable, the authors should discuss possible limitations of their approach to address problems of privacy and fairness.
        \item While the authors might fear that complete honesty about limitations might be used by reviewers as grounds for rejection, a worse outcome might be that reviewers discover limitations that aren't acknowledged in the paper. The authors should use their best judgment and recognize that individual actions in favor of transparency play an important role in developing norms that preserve the integrity of the community. Reviewers will be specifically instructed to not penalize honesty concerning limitations.
    \end{itemize}

\item {\bf Theory Assumptions and Proofs}
    \item[] Question: For each theoretical result, does the paper provide the full set of assumptions and a complete (and correct) proof?
    \item[] Answer: \answerYes{}
    \item[] Justification: We present a complete proof in Sec.~\ref{sec:proofs}.
    \item[] Guidelines:
    \begin{itemize}
        \item The answer NA means that the paper does not include theoretical results. 
        \item All the theorems, formulas, and proofs in the paper should be numbered and cross-referenced.
        \item All assumptions should be clearly stated or referenced in the statement of any theorems.
        \item The proofs can either appear in the main paper or the supplemental material, but if they appear in the supplemental material, the authors are encouraged to provide a short proof sketch to provide intuition. 
        \item Inversely, any informal proof provided in the core of the paper should be complemented by formal proofs provided in appendix or supplemental material.
        \item Theorems and Lemmas that the proof relies upon should be properly referenced. 
    \end{itemize}

    \item {\bf Experimental Result Reproducibility}
    \item[] Question: Does the paper fully disclose all the information needed to reproduce the main experimental results of the paper to the extent that it affects the main claims and/or conclusions of the paper (regardless of whether the code and data are provided or not)?
    \item[] Answer: \answerYes{}
    \item[] Justification: We have carefully introduced the framework architecture in Sec.~\ref{sec:framework}, the training strategy in Sec.~\ref{sec:guided_distill} and Sec.~\ref{sec:adv_consistency_loss}. Also, we enclose the pseudo training code, the detailed framework design in the Appendix. Eventually, We will release the code and model of this paper.
    \item[] Guidelines:
    \begin{itemize}
        \item The answer NA means that the paper does not include experiments.
        \item If the paper includes experiments, a No answer to this question will not be perceived well by the reviewers: Making the paper reproducible is important, regardless of whether the code and data are provided or not.
        \item If the contribution is a dataset and/or model, the authors should describe the steps taken to make their results reproducible or verifiable. 
        \item Depending on the contribution, reproducibility can be accomplished in various ways. For example, if the contribution is a novel architecture, describing the architecture fully might suffice, or if the contribution is a specific model and empirical evaluation, it may be necessary to either make it possible for others to replicate the model with the same dataset, or provide access to the model. In general. releasing code and data is often one good way to accomplish this, but reproducibility can also be provided via detailed instructions for how to replicate the results, access to a hosted model (e.g., in the case of a large language model), releasing of a model checkpoint, or other means that are appropriate to the research performed.
        \item While NeurIPS does not require releasing code, the conference does require all submissions to provide some reasonable avenue for reproducibility, which may depend on the nature of the contribution. For example
        \begin{enumerate}
            \item If the contribution is primarily a new algorithm, the paper should make it clear how to reproduce that algorithm.
            \item If the contribution is primarily a new model architecture, the paper should describe the architecture clearly and fully.
            \item If the contribution is a new model (e.g., a large language model), then there should either be a way to access this model for reproducing the results or a way to reproduce the model (e.g., with an open-source dataset or instructions for how to construct the dataset).
            \item We recognize that reproducibility may be tricky in some cases, in which case authors are welcome to describe the particular way they provide for reproducibility. In the case of closed-source models, it may be that access to the model is limited in some way (e.g., to registered users), but it should be possible for other researchers to have some path to reproducing or verifying the results.
        \end{enumerate}
    \end{itemize}

\item {\bf Open access to data and code}
    \item[] Question: Does the paper provide open access to the data and code, with sufficient instructions to faithfully reproduce the main experimental results, as described in supplemental material?
    \item[] Answer: \answerYes{}
    \item[] Justification: 
    Code is available at \url{https://github.com/G-U-N/Phased-Consistency-Model}.
    \item[] Guidelines: 
    \begin{itemize}
        \item The answer NA means that paper does not include experiments requiring code.
        \item Please see the NeurIPS code and data submission guidelines (\url{https://nips.cc/public/guides/CodeSubmissionPolicy}) for more details.
        \item While we encourage the release of code and data, we understand that this might not be possible, so “No” is an acceptable answer. Papers cannot be rejected simply for not including code, unless this is central to the contribution (e.g., for a new open-source benchmark).
        \item The instructions should contain the exact command and environment needed to run to reproduce the results. See the NeurIPS code and data submission guidelines (\url{https://nips.cc/public/guides/CodeSubmissionPolicy}) for more details.
        \item The authors should provide instructions on data access and preparation, including how to access the raw data, preprocessed data, intermediate data, and generated data, etc.
        \item The authors should provide scripts to reproduce all experimental results for the new proposed method and baselines. If only a subset of experiments are reproducible, they should state which ones are omitted from the script and why.
        \item At submission time, to preserve anonymity, the authors should release anonymized versions (if applicable).
        \item Providing as much information as possible in supplemental material (appended to the paper) is recommended, but including URLs to data and code is permitted.
    \end{itemize}

\item {\bf Experimental Setting/Details}
    \item[] Question: Does the paper specify all the training and test details (e.g., data splits, hyperparameters, how they were chosen, type of optimizer, etc.) necessary to understand the results?
    \item[] Answer: \answerYes{}
    \item[] Justification: We specify all the training and test details in Sec.~\ref{sec:imp_details}, including the training and testing details in Sec.~\ref{train_details}, the pseudo training scripts in Sec.~\ref{sec:pseudo_train_code}, and additional details in Sec.~\ref{sec:dcd_v} and Sec.~\ref{sec:discriminator}.
    \item[] Guidelines:
    \begin{itemize}
        \item The answer NA means that the paper does not include experiments.
        \item The experimental setting should be presented in the core of the paper to a level of detail that is necessary to appreciate the results and make sense of them.
        \item The full details can be provided either with the code, in appendix, or as supplemental material.
    \end{itemize}

\item {\bf Experiment Statistical Significance}
    \item[] Question: Does the paper report error bars suitably and correctly defined or other appropriate information about the statistical significance of the experiments?
    \item[] Answer: \answerNo{}
    \item[] Justification: Error bars are not reported, because it would be too computationally expensive. However, we indeed present plenty of qualitative results to demonstrate the statistical significance of the experiments.
    \item[] Guidelines:
    \begin{itemize}
        \item The answer NA means that the paper does not include experiments.
        \item The authors should answer "Yes" if the results are accompanied by error bars, confidence intervals, or statistical significance tests, at least for the experiments that support the main claims of the paper.
        \item The factors of variability that the error bars are capturing should be clearly stated (for example, train/test split, initialization, random drawing of some parameter, or overall run with given experimental conditions).
        \item The method for calculating the error bars should be explained (closed form formula, call to a library function, bootstrap, etc.)
        \item The assumptions made should be given (e.g., Normally distributed errors).
        \item It should be clear whether the error bar is the standard deviation or the standard error of the mean.
        \item It is OK to report 1-sigma error bars, but one should state it. The authors should preferably report a 2-sigma error bar than state that they have a 96\% CI, if the hypothesis of Normality of errors is not verified.
        \item For asymmetric distributions, the authors should be careful not to show in tables or figures symmetric error bars that would yield results that are out of range (e.g. negative error rates).
        \item If error bars are reported in tables or plots, The authors should explain in the text how they were calculated and reference the corresponding figures or tables in the text.
    \end{itemize}

\item {\bf Experiments Compute Resources}
    \item[] Question: For each experiment, does the paper provide sufficient information on the computer resources (type of compute workers, memory, time of execution) needed to reproduce the experiments?
    \item[] Answer: \answerYes{}
    \item[] Justification: We have clarified in the Abstract and Sec.~\ref{sec:discussions} that our results are achieved by only $8$ A800 GPUs.
    \item[] Guidelines:
    \begin{itemize}
        \item The answer NA means that the paper does not include experiments.
        \item The paper should indicate the type of compute workers CPU or GPU, internal cluster, or cloud provider, including relevant memory and storage.
        \item The paper should provide the amount of compute required for each of the individual experimental runs as well as estimate the total compute. 
        \item The paper should disclose whether the full research project required more compute than the experiments reported in the paper (e.g., preliminary or failed experiments that didn't make it into the paper). 
    \end{itemize}
    
\item {\bf Code Of Ethics}
    \item[] Question: Does the research conducted in the paper conform, in every respect, with the NeurIPS Code of Ethics \url{https://neurips.cc/public/EthicsGuidelines}?
    \item[] Answer: \answerYes{}
    \item[] Justification: The research conducted in the paper conforms, in every respect, with the NeurIPS Code of Ethics.
    \item[] Guidelines:
    \begin{itemize}
        \item The answer NA means that the authors have not reviewed the NeurIPS Code of Ethics.
        \item If the authors answer No, they should explain the special circumstances that require a deviation from the Code of Ethics.
        \item The authors should make sure to preserve anonymity (e.g., if there is a special consideration due to laws or regulations in their jurisdiction).
    \end{itemize}

\item {\bf Broader Impacts}
    \item[] Question: Does the paper discuss both potential positive societal impacts and negative societal impacts of the work performed?
    \item[] Answer: \answerYes{}
    \item[] Justification: The positive societal impacts are discussed in Sec.~\ref{sec:pos_social_impact}, and the negative societal impacts are discussed in Sec.~\ref{sec:neg_social_impact}.
    \item[] Guidelines:
    \begin{itemize}
        \item The answer NA means that there is no societal impact of the work performed.
        \item If the authors answer NA or No, they should explain why their work has no societal impact or why the paper does not address societal impact.
        \item Examples of negative societal impacts include potential malicious or unintended uses (e.g., disinformation, generating fake profiles, surveillance), fairness considerations (e.g., deployment of technologies that could make decisions that unfairly impact specific groups), privacy considerations, and security considerations.
        \item The conference expects that many papers will be foundational research and not tied to particular applications, let alone deployments. However, if there is a direct path to any negative applications, the authors should point it out. For example, it is legitimate to point out that an improvement in the quality of generative models could be used to generate deepfakes for disinformation. On the other hand, it is not needed to point out that a generic algorithm for optimizing neural networks could enable people to train models that generate Deepfakes faster.
        \item The authors should consider possible harms that could arise when the technology is being used as intended and functioning correctly, harms that could arise when the technology is being used as intended but gives incorrect results, and harms following from (intentional or unintentional) misuse of the technology.
        \item If there are negative societal impacts, the authors could also discuss possible mitigation strategies (e.g., gated release of models, providing defenses in addition to attacks, mechanisms for monitoring misuse, mechanisms to monitor how a system learns from feedback over time, improving the efficiency and accessibility of ML).
    \end{itemize}
    
\item {\bf Safeguards}
    \item[] Question: Does the paper describe safeguards that have been put in place for responsible release of data or models that have a high risk for misuse (e.g., pretrained language models, image generators, or scraped datasets)?
    \item[] Answer: \answerYes{}
    \item[] Justification: Yes, we are the image generators, which naturally have a risk of generating harmful content. Therefore, in Sec.~\ref{sec:safety_guards}, we introduce our solution to avoid harmful content. We will include the safeguards in our pipeline when releasing the code and model.
    \item[] Guidelines:
    \begin{itemize}
        \item The answer NA means that the paper poses no such risks.
        \item Released models that have a high risk for misuse or dual-use should be released with necessary safeguards to allow for controlled use of the model, for example by requiring that users adhere to usage guidelines or restrictions to access the model or implementing safety filters. 
        \item Datasets that have been scraped from the Internet could pose safety risks. The authors should describe how they avoided releasing unsafe images.
        \item We recognize that providing effective safeguards is challenging, and many papers do not require this, but we encourage authors to take this into account and make a best faith effort.
    \end{itemize}

\item {\bf Licenses for existing assets}
    \item[] Question: Are the creators or original owners of assets (e.g., code, data, models), used in the paper, properly credited and are the license and terms of use explicitly mentioned and properly respected?
    \item[] Answer: \answerYes{}
    \item[] Justification:
    All the assets used in this paper are public datasets, allowing for research use. Specifically, the used datasets include:
    \begin{itemize}
        \item CC3M -- custom terms reading: The dataset may be freely used for any purpose.
        \item WebVid-2M -- custom terms reading: Non-Commercial research for 12 months,
        \item COCO-2014 -- Creative Commons Attribution 4.0 License.
    \end{itemize}
    
    \item[] Guidelines:
    \begin{itemize}
        \item The answer NA means that the paper does not use existing assets.
        \item The authors should cite the original paper that produced the code package or dataset.
        \item The authors should state which version of the asset is used and, if possible, include a URL.
        \item The name of the license (e.g., CC-BY 4.0) should be included for each asset.
        \item For scraped data from a particular source (e.g., website), the copyright and terms of service of that source should be provided.
        \item If assets are released, the license, copyright information, and terms of use in the package should be provided. For popular datasets, \url{paperswithcode.com/datasets} has curated licenses for some datasets. Their licensing guide can help determine the license of a dataset.
        \item For existing datasets that are re-packaged, both the original license and the license of the derived asset (if it has changed) should be provided.
        \item If this information is not available online, the authors are encouraged to reach out to the asset's creators.
    \end{itemize}

\item {\bf New Assets}
    \item[] Question: Are new assets introduced in the paper well documented and is the documentation provided alongside the assets?
    \item[] Answer: \answerNA{}
    \item[] Justification: We do not release new asserts in the submission. But we will release the code and model shortly. We will prepare documentation alongside the assets.
    \item[] Guidelines:
    \begin{itemize}
        \item The answer NA means that the paper does not release new assets.
        \item Researchers should communicate the details of the dataset/code/model as part of their submissions via structured templates. This includes details about training, license, limitations, etc. 
        \item The paper should discuss whether and how consent was obtained from people whose asset is used.
        \item At submission time, remember to anonymize your assets (if applicable). You can either create an anonymized URL or include an anonymized zip file.
    \end{itemize}

\item {\bf Crowdsourcing and Research with Human Subjects}
    \item[] Question: For crowdsourcing experiments and research with human subjects, does the paper include the full text of instructions given to participants and screenshots, if applicable, as well as details about compensation (if any)? 
    \item[] Answer: \answerNA{}
    \item[] Justification: This paper does not involve crowdsourcing nor research with human subjects.
    \item[] Guidelines:
    \begin{itemize}
        \item The answer NA means that the paper does not involve crowdsourcing nor research with human subjects.
        \item Including this information in the supplemental material is fine, but if the main contribution of the paper involves human subjects, then as much detail as possible should be included in the main paper. 
        \item According to the NeurIPS Code of Ethics, workers involved in data collection, curation, or other labor should be paid at least the minimum wage in the country of the data collector. 
    \end{itemize}

\item {\bf Institutional Review Board (IRB) Approvals or Equivalent for Research with Human Subjects}
    \item[] Question: Does the paper describe potential risks incurred by study participants, whether such risks were disclosed to the subjects, and whether Institutional Review Board (IRB) approvals (or an equivalent approval/review based on the requirements of your country or institution) were obtained?
    \item[] Answer: \answerNA{}
    \item[] Justification: We do not involve crowdsourcing, nor research with human subjects.
    \item[] Guidelines:
    \begin{itemize}
        \item The answer NA means that the paper does not involve crowdsourcing nor research with human subjects.
        \item Depending on the country in which research is conducted, IRB approval (or equivalent) may be required for any human subjects research. If you obtained IRB approval, you should clearly state this in the paper. 
        \item We recognize that the procedures for this may vary significantly between institutions and locations, and we expect authors to adhere to the NeurIPS Code of Ethics and the guidelines for their institution. 
        \item For initial submissions, do not include any information that would break anonymity (if applicable), such as the institution conducting the review.
    \end{itemize}

\end{enumerate}

\end{document}